\newtheorem{theorem}{Theorem}
\newtheorem{definition}[theorem]{Definition}
\newtheorem{remark}[theorem]{Remark}
\newtheorem{lemma}[theorem]{Lemma}
\newtheorem{corollary}[theorem]{Corollary}
\newtheorem{assumption}[theorem]{Assumption}
\numberwithin{theorem}{section}
\title{Concentration Inequalities for Stochastic Optimization of Unbounded Objective Functions with Application to Denoising Score Matching}
\author{Jeremiah Birrell\\
  Department of Mathematics\\
  Texas State University\\
  San Marcos, TX,  USA \\
  \texttt{jbirrell@txstate.edu} 
}
\begin{document}

\maketitle

% REQUIRED
\begin{abstract}
 We derive novel concentration inequalities that bound the  statistical error  for a large class of stochastic optimization problems, focusing on the case of unbounded objective functions. Our derivations utilize the following key tools: 1)  A new  form of McDiarmid's inequality  that is based on sample-dependent  one-component mean-difference bounds and which leads to a novel uniform law of large numbers result for unbounded functions. 2) A new Rademacher complexity bound for families of functions that satisfy an appropriate sample-dependent Lipschitz property, which allows for application to a large class of distributions with unbounded support. As an application of these results, we derive statistical error bounds for denoising score matching (DSM), an application that inherently requires one to consider unbounded objective functions and distributions with unbounded support, even in cases where the data distribution has bounded support.  { In addition, our results quantify the benefit of sample-reuse in algorithms that employ easily-sampled auxiliary random variables in addition to the training data, e.g., as in DSM, which uses auxiliary Gaussian random variables. }
\end{abstract}

% REQUIRED
\keywords{stochastic optimization \and uniform law of large numbers \and concentration inequalities \and statistical consistency \and  denoising score matching}

\section{Introduction}
In this work we derive a novel generalization of McDiarmid’s inequality along with a corresponding uniform law of large numbers (ULLN) result for unbounded functions and use these tools to obtain statistical error bounds  for stochastic optimization of unbounded objectives.   This work builds on the original bounded difference inequality  \cite{mcdiarmid1989method} as well the extensions in  \cite{kontorovich2014concentration,maurer2021concentration}, which apply to unbounded functions of a more  restrictive type than considered here.  More specifically, we obtain  concentration inequalities  for
\begin{align}\label{eq:ULLN_intro}
\sup_{\theta\in\Theta}\left\{\pm\left(\frac{1}{nm}\sum_{i=1}^n\sum_{j=1}^mg_\theta(x_i,y_{i,j})-E_{P_X\times P_Y}[g_\theta]\right)\right\}\,,
\end{align}
 as well as for the optimization error
\begin{align}\label{eq:stoc_opt_intro}
E_{P_X\times P_Y}[g_{\theta^*_{n,m}}]-\inf_{\theta\in\Theta} E_{P_X\times P_Y}\left[g_\theta\right]\,,
\end{align}
where $g_\theta:\mathcal{X}\times \mathcal{Y}\to\mathbb{R}$ is an appropriate  (unbounded) objective function, depending on parameters  $\theta\in\Theta$,  that satisfies certain sample-dependent (locally) Lipschitz conditions and $\theta^*_{n,m}$ is (approximately) a solution to the empirical optimization problem
\begin{align}\label{eq:empirical_opt_intro}
\inf_{\theta\in\Theta}\frac{1}{nm}\sum_{i=1}^n \sum_{j=1}^m g_{\theta}(X_i,Y_{i,j})\,,
\end{align}
with $X_i$ and $Y_{i,j}$ being independent samples from $P_X$ and $P_Y$ respectively.  We specifically focus on problems where there are naturally two types of samples, $X_i$ and $Y_{i,j}$,  with $m$-times more $Y$ samples than $X$ samples.  Such unbalanced sample numbers naturally occur in applications such as denoising score matching (DSM) and { generative adversarial networks (GANs)} which utilize easily sampled auxiliary random variables ($Y_{i,j}$) alongside the training data ($X_i$). We emphasize that our results are still meaningful and novel in the balanced ($m=1$) case and when there is only one type of sample due to their ability to handle a larger class of unbounded objectives and tail behaviors.

\subsection{Contributions}
The key contributions of the present work are as follows:
\begin{itemize}
\item  In Section \ref{sec:McDiarmid} we  derive  a   generalization of McDiarmid's inequality that can be applied to  families of unbounded functions and to  distributions with unbounded support. Our result utilizes non-constant bounds on one-component mean-differences  which, in particular, covers  cases of locally Lipschitz functions. In addition, we allow for a wide range of tail behaviors, beyond the sub-Gaussian and sub-exponential cases.
\item In Theorem \ref{thm:Rademacher_bound_unbounded_support_main} we  derive a  distribution-dependent Rademacher complexity bound which can be applied to   families of  unbounded functions that satisfy an appropriate sample-dependent Lipschitz condition.
\item We use the above two tools to derive uniform law of large numbers concentration inequalities in  Theorem \ref{thm:ULLN_var_reuse_mean_bound}  as well as  concentration inequalities for stochastic optimization problems in Theorem \ref{thm:conc_ineq_stoch_opt}, both of which allow for unbounded (objective) functions that satisfy appropriate sample-dependent (locally) Lipschitz conditions.  
\item { In Theorem \ref{thm:opt_L1_bound} we obtain an $L^1$ error bound on the stochastic optimization error    under even weaker assumptions; in particular, it applies to appropriate heavy-tailed $P_X$ and $P_Y$.} 
\item  We show that our results quantitatively capture the benefits of sample-reuse (see Eq.~\ref{eq:empirical_opt_intro}) in algorithms that  pair training data with  easily-sampled auxiliary random variables, e.g., Gaussian random variables in DSM. { As   applications of these tools, we study DSM    in Section \ref{sec:DSM} and  GANs in Section \ref{sec:GANs}}.
\end{itemize}

\subsection{Related Works}
Our generalization  of McDiarmid's inequality from the original bounded difference case in \cite{mcdiarmid1989method} to a much larger class of bounding functions can be viewed as an extension  of the result in \cite{kontorovich2014concentration}, which studied Lipschitz functions on a space with finite sub-Gaussian-diameter.  More recently,  \cite{maurer2021concentration}  took a different approach and extended  McDiarmid's inequality to sub-Gaussian and sub-exponential cases by using  entropy methods.  Our results, which apply to a wider range of tail behaviors, are also distinguished by the minimal assumptions they make regarding the one-parameter-difference bounding functions.  In particular, our framework applies to various forms of locally-Lipschitz behavior that naturally arise in a number of applications and which are not covered by   the methods of \cite{kontorovich2014concentration} or \cite{maurer2021concentration}.   This is especially important in our study of sample reuse, where we employ bounding functions on $X$-differences that are not uniform in the $Y$'s.  Our work shares some similarities with  the error analysis of  GANs \cite{liang2021well}, including the works  \cite{huang2022error,biau2021some} which consider distributions with unbounded support. However, our methods, { which we do apply to GANs in Section \ref{sec:GANs}},  allow for one to work directly with   unbounded functions, rather than using truncation arguments as in \cite{huang2022error},  or being restricted to the  sub-Gaussian  case as in \cite{biau2021some}.  We also note that the effect of sample reuse in  GANs, i.e., unbalanced number of samples from the data source and generator, is a simple special case of the  general  form of sample reuse considered here. Thus the GAN application only partially utilizes the novel features of our results.

An important  application that  uses essentially all of the novel features of our theorems is denoising score matching, which we study  in Section \ref{sec:DSM}. Since their introduction in \cite{songscore}, there has been great interest in the analysis of score-based generative models that use stochastic differential equation (SDE) (de)noising dynamics, and in  DSM in particular.   The works 
\cite{chen2023improved,pedrotti2023improved,bentonnearly,li2024towards,li2024sharp,potaptchik2024linear,chen2024equivariant,mimikos2024score} all derive convergence results or error bounds for score-based generative models under various assumptions on the score-matching error.  Closest to our work on DSM is \cite{oko2023diffusion}, which  obtained $L^1$ error bounds as well as bounds on the estimation error in the form of expected total variation and $1$-Wasserstein distance, in contrast to the  concentration inequalities we obtain    below.  The key  inputs to the   bounds in  \cite{oko2023diffusion}, such as the covering number bound in their Lemma 4.2, assume a uniformly bounded score-network hypothesis class; see their definition of $\mathcal{S}$ above their Lemma 4.2.  While the authors  prove the  necessary approximation results  to justify and account for the errors made in the restriction to score  models, a benefit of our framework is that one can work directly with  unbounded objective functions, including an unbounded score model; the goal of our study of DSM in  Section \ref{sec:DSM} is to demonstrate this strength via derivations of $L^1$ error bounds and a novel concentration inequality.  These strengths carry over to other stochastic optimization problems that are naturally posed on an unbounded domain and with an unbounded objective function.

\section{A Generalization of McDiarmid's Inequality for Unbounded Functions}\label{sec:McDiarmid}

We begin our analysis by deriving a new form of McDiarmid’s inequality that utilizes   sample-dependent  bounds on one-component mean-differences.  Our result   builds on the earlier generalization from \cite{kontorovich2014concentration} which applied to unbounded Lipschitz functions on spaces of finite sub-Gaussian diameter.  Here we make several innovations: 1) We drop the Lipschitz requirement, allowing for more general bounding functions   on  one-component  differences.  2)  Our method allows bounding functions  to have unbounded dependence on all later variables in the ordering;  this is necessary in order to handle the  locally-Lipschitz behavior of  many natural objective functions.  3) We consider   substantially more general tail-behaviors, generalizing the sub-Gaussian case studied in \cite{kontorovich2014concentration}. All three of these generalizations  will be important for our application to DSM in Section \ref{sec:DSM}.   As previously noted, \cite{maurer2021concentration} took a different approach to generalizing McDiarmid's inequality and studied both the sub-Gaussian and sub-exponential cases; however, their results do not allow for the kind of sample-dependent Lipschitz constants that are crucial for our subsequent applications.   We   emphasize that, while the proof strategy we employ below follows the pattern pioneered by \cite{kontorovich2014concentration}, the assumptions made here are substantially weaker, thus allowing for  a much wider range of applications than previously possible. Therefore we expect the following result, both the moment generating function (MGF) bound and concentration inequality, to be of independent interest, beyond our uses of it in the following sections.  

\begin{theorem}\label{thm:McDiarmid_general_mean_bound_method}
Let $(\mathcal{X}_i,P_i)$, $i=1,...,n$ be probability spaces, define $\mathcal{X}^n\coloneqq\prod_{i=1}^n \mathcal{X}_i$ with the product $\sigma$-algebra, and let $P^n\coloneqq\prod_{i=1}^n P_i$.  Suppose   $\phi:\mathcal{X}^n\to\mathbb{R}$ satisfies the following:
\begin{enumerate}
\item Integrability: $\phi\in L^1(P^n)$ and  $\phi(x_1,...,x_i,\cdot)\in L^1(\prod_{j>i} P_j)$ for all $i\in\{1,...,n-1\}$ and all $(x_1,...,x_i)\in\prod_{j=1}^i \mathcal{X}_j$.
\item One-component mean-difference bounds: For all  $i\in\{1,...,n\}$ we have measurable $h_i:\mathcal{X}_i\times\mathcal{X}_i\to[0,\infty]$ such that  
\begin{align}\label{eq:mean_diff_bound_assump}
\left|E_{\prod_{j>i} P_j}[\phi(x_1,...,x_{i-1},x_i,\cdot)]-E_{\prod_{j>i} P_j}[\phi(x_1,...,x_{i-1},\tilde{x}_i,\cdot)]\right|\leq h_i(x_i,\tilde x_i)
\end{align}
for all   $(x_1,...,x_i)\in\prod_{j=1}^i \mathcal{X}_j$, $\tilde{x}_i\in \mathcal{X}_i$. Note that for $i=n$ this assumption should be interpreted as requiring
\begin{align}\label{eq:mean_diff_bound_assump_n}
|\phi(x_1,...,x_{n-1},x_n)-\phi(x_1,...,x_{n-1},\tilde{x}_n)|\leq h_n(x_n,\tilde x_n)\,.
\end{align}
\item MGF bounds: We have    $\xi_i:[0,\infty)\to [0,\infty]$, $i=1,...,n$, such that 
\begin{align}\label{eq:cosh_MGF_bound}
E_{P_i\times P_i}[\cosh(\lambda h_i)]\leq e^{\xi_i(|\lambda|)} \,\,\,\text{ for all $\lambda\in \mathbb{R}$\,.}
\end{align}
\end{enumerate}
  Then  we have the MGF bound
\begin{align}\label{eq:new_McDiarmid_MGF}
E_{P^n}\!\left[e^{\lambda (\phi-E_{P^n}[\phi])}\right]\leq \exp\left(\sum_{i=1}^n \xi_i(|\lambda|)\right) \,\,\, \text{  for all $\lambda\in \mathbb{R}$}
\end{align}
and for all $t\geq 0$ we have the concentration inequality
\begin{align}\label{eq:new_McDiarmid_conc}
P^n(\pm(\phi-E_{P^n}[\phi])\geq t)\leq \exp\left(-\sup_{\lambda\in[0,\infty)}\left\{\lambda t-\sum_{i=1}^n \xi_i(\lambda)\right\}\right)\,.
\end{align}

\end{theorem}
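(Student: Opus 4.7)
The plan is to use the classical Doob martingale decomposition combined with a symmetrization trick that produces the $\cosh$ factor appearing in assumption (3). Define the conditional means $f_i(x_1,\dots,x_i) \coloneqq E_{\prod_{j>i}P_j}[\phi(x_1,\dots,x_i,\cdot)]$ for $i<n$ and $f_n \coloneqq \phi$, which are well-defined by the integrability hypothesis. Setting $M_i \coloneqq f_i(X_1,\dots,X_i)$ with $M_0 \coloneqq E_{P^n}[\phi]$, we get the martingale telescoping $\phi - E_{P^n}[\phi] = \sum_{i=1}^n D_i$ with $D_i = M_i - M_{i-1} = f_i(X_1,\dots,X_i) - E_{P_i}[f_i(X_1,\dots,X_{i-1},\cdot)]$. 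This is the same backbone used in \cite{kontorovich2014concentration}; the novelty will be in how we bound the conditional MGF of $D_i$ using the sample-dependent function $h_i$.

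Next, I would bound $E[e^{\lambda D_i}\mid X_1,\dots,X_{i-1}]$ by a symmetrization argument. Introduce $\tilde{X}_i$ an independent copy of $X_i$ and write $D_i = E_{\tilde{X}_i}[f_i(X_1,\dots,X_i) - f_i(X_1,\dots,X_{i-1},\tilde{X}_i)\mid X_1,\dots,X_i]$. By Jensen's inequality applied to the convex function $e^{\lambda\cdot}$ and Fubini,
\begin{align*}
E[e^{\lambda D_i}\mid X_1,\dots,X_{i-1}] \leq E_{P_i\times P_i}\!\left[e^{\lambda(f_i(X_1,\dots,X_i)-f_i(X_1,\dots,X_{i-1},\tilde{X}_i))}\mid X_1,\dots,X_{i-1}\right].
\end{align*}
Since the integrand is antisymmetric in $(X_i,\tilde{X}_i)$, averaging the expression with its swap turns the exponential into $\cosh\!\left(\lambda(f_i(\dots,X_i)-f_i(\dots,\tilde{X}_i))\right)$. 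The assumption (\ref{eq:mean_diff_bound_assump}) together with the fact that $\cosh$ is even and increasing on $[0,\infty)$ yields the pointwise bound by $\cosh(\lambda h_i(X_i,\tilde{X}_i))$, and then assumption (\ref{eq:cosh_MGF_bound}) gives
\begin{align*}
E[e^{\lambda D_i}\mid X_1,\dots,X_{i-1}] \leq e^{\xi_i(|\lambda|)}.
\end{align*}
Notice this bound is deterministic (does not depend on $X_1,\dots,X_{i-1}$), which is what makes the iteration clean even though $h_i$ is permitted to depend arbitrarily on the later coordinates through $\phi$.

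From here the MGF bound (\ref{eq:new_McDiarmid_MGF}) follows by repeatedly conditioning from $i=n$ down to $i=1$ via the tower property: $E[e^{\lambda\sum_i D_i}] = E[e^{\lambda\sum_{i<n}D_i}\,E[e^{\lambda D_n}\mid X_1,\dots,X_{n-1}]] \leq e^{\xi_n(|\lambda|)} E[e^{\lambda\sum_{i<n}D_i}]$, and so on. The concentration inequality (\ref{eq:new_McDiarmid_conc}) then comes from the standard Chernoff argument: for $\lambda \geq 0$, $P^n(\phi-E_{P^n}[\phi]\geq t)\leq e^{-\lambda t}E_{P^n}[e^{\lambda(\phi-E_{P^n}[\phi])}]\leq \exp(-\lambda t + \sum_i \xi_i(\lambda))$, and optimizing over $\lambda$ yields the stated bound. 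The lower-tail case is identical, since the MGF bound is symmetric in $\lambda \mapsto -\lambda$.

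The main technical obstacle is justifying the symmetrization/Fubini swap rigorously when the $\xi_i$ and $h_i$ may be infinite for some arguments. If $\xi_i(|\lambda|)=\infty$ for some $i$, the MGF bound is trivial and nothing needs to be proven for that $\lambda$, so we may restrict attention to $\lambda$ for which all $\xi_i(|\lambda|)$ are finite, in which case $\cosh(\lambda h_i)\in L^1(P_i\times P_i)$ and Fubini applies. A small additional care point is that the Doob martingale construction requires $f_i$ to be measurable and integrable in all of its arguments; this is exactly what the integrability hypothesis (1) supplies. Once these bookkeeping issues are dispatched, the proof is essentially the sequence of steps outlined above.
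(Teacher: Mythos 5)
Your proposal is correct and follows essentially the same route as the paper: the telescoping Doob decomposition (your $D_i$ are the paper's $V_i$), Jensen's inequality to pull the $P_i(d\tilde{x}_i)$-average out of the exponential, symmetrization to produce the $\cosh$, the monotonicity/evenness of $\cosh$ combined with \eqref{eq:mean_diff_bound_assump} and \eqref{eq:cosh_MGF_bound}, iteration of the deterministic conditional bound, and a standard Chernoff argument. The only cosmetic difference is that you explicitly dispatch the case $\xi_i(|\lambda|)=\infty$, which the paper relegates to a remark.
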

\begin{remark}\label{remark:cosh_dzeta}
In  equation \eqref{eq:cosh_MGF_bound} and below one should interpret $0\cdot \infty\coloneqq0$, $\cosh(\pm\infty)\coloneqq\infty$, and $\exp(\infty)\coloneqq \infty$.   We allow the $\xi_i$'s to take the value infinity for simplicity of the statement of the theorem, though of course this result is only meaningful when the $\xi_i$'s are all finite on some neighborhood of $0$; if they are all finite on $[0,K)$ for some $K>0$ then the supremum on the right-hand side of \eqref{eq:new_McDiarmid_conc} can be restricted to $\lambda\in [0,K)$.  Note that equation \eqref{eq:cosh_MGF_bound} is equivalent to having a MGF bound for $\zeta h_i(x_i,\tilde x_i)$ under the distribution $d\zeta\times P_i(dx_i)\times P_i(d\tilde{x}_i)$, where $d\zeta$ is the uniform distribution on $\{-1,1\}$. See, e.g., Chapter 2 in \cite{vershynin2018high} or in \cite{wainwright2019high} for techniques that can be used to derive such MGF bounds.  Finally,  we note that the bounding functions $h_i$ in \eqref{eq:mean_diff_bound_assump} can be allowed to depend on the previous variables $x_1,...,x_{i-1}$ as long as   \eqref{eq:cosh_MGF_bound} then holds uniformly   in those previous variables.   We do not have a need for that more general version   in this paper so we present it in Appendix \ref{app:McDiarmid_general_mean_bound_method2}.
\end{remark}
\begin{proof}
To obtain the claimed MGF bound \eqref{eq:new_McDiarmid_MGF} we  follow the strategy of  \cite{kontorovich2014concentration}, while noting that the assumptions can be substantially weakened to  cover the much more general  one-component mean-difference  property \eqref{eq:mean_diff_bound_assump}. Start by defining $\phi_n\coloneqq \phi$,
\begin{align}
\phi_i(x_1,...,x_i)\coloneqq E_{\prod_{j>i}P_j}[\phi(x_1,...,x_i,\cdot) ]
\end{align}
for $i=1,...,n-1$, $\phi_0\coloneqq E_{P^n}[\phi]$, and $V_i\coloneqq\phi_i-\phi_{i-1}$, $i=1,...,n$, so that we have the telescoping sum
\begin{align}
\sum_{i=1}^n V_i=\phi-E_{P^n}[\phi]\,.  
\end{align}

For  $\lambda\in \mathbb{R}$ and $i\in\{1,...,n\}$ we can bound the MGF of  $V_i$ in its last variable as follows. 
\begin{align}\label{eq:gen_McDiarmid_derivation}
&\int e^{\lambda V_i(x_1,...,x_i)} P_i(dx_i)\\
=&\int e^{\lambda   \int \left(E_{\prod_{j>i}P_j}[\phi(x_1,...,x_{i-1},x,\cdot) ]- E_{\prod_{j>i}P_j}[\phi(x_1,...,x_{i-1},y,\cdot) ]\right)P_i(dy)} P_i(dx)\notag\\
\leq&\int  \int e^{\lambda  (E_{\prod_{j>i}P_j}[\phi(x_1,...,x_{i-1},x,\cdot) ]- E_{\prod_{j>i}P_j}[\phi(x_1,...,x_{i-1},y,\cdot) ])}P_i(dy) P_i(dx)\notag\\
=&\int \cosh\left(\lambda  \left(E_{\prod_{j>i}P_j}[\phi(x_1,...,x_{i-1},x,\cdot) ]- E_{\prod_{j>i}P_j}[\phi(x_1,...,x_{i-1},y,\cdot) ]\right)\right)(P_i\times P_i)(dxdy) \notag\\
\leq&  \int \cosh( \lambda h_i(x,y)) (P_i\times P_i)(dxdy)\leq  e^{\xi_i(|\lambda|)}\,.\notag
\end{align}
To obtain the third line we used Jensen's inequality. The fourth line follows from symmetrization.  The first inequality on the fifth line follows from the symmetry and monotonicity properties of $\cosh(t)$ together with the one-component mean-difference bound \eqref{eq:mean_diff_bound_assump} while the last inequality follows from the assumption \eqref{eq:cosh_MGF_bound}.

Therefore, for  $\lambda\in  \mathbb{R}$ one obtains the following centered MGF bound for $\phi$ by induction:
\begin{align}
\int e^{\lambda(\phi-E_{P^n}[\phi])} dP^n=\int\prod_{i=1}^n  e^{\lambda V_i} dP^n\leq \int \prod_{i=1}^k e^{\lambda V_i}dP^k\prod_{i=k+1}^n e^{\xi_i(|\lambda|)}\leq \exp\left(\sum_{i=1}^n \xi_i(|\lambda|)\right)\,.
\end{align}
From this, \eqref{eq:new_McDiarmid_conc} then follows via a standard Chernoff bound, see, e.g., Section 2.1 in \cite{boucheron2013concentration}.
\end{proof}

If one has one-component difference bounds
\begin{align}\label{eq:one_comp_diff_tildeh_i}
|\phi(x_1,...,x_n)-\phi(x_1,...,x_{i-1},\tilde{x}_i,x_{i+1},...x_n)|\leq \tilde{h}_i(x_i,\tilde{x}_i,x_{i+1},...,x_n)
\end{align}
for some  measurable $\tilde{h}_i$'s (which are allowed to depend on all later variables)  then the following points  can be useful when checking the various conditions required by Theorem \ref{thm:McDiarmid_general_mean_bound_method}.
\begin{enumerate}
\item  In cases where \eqref{eq:one_comp_diff_tildeh_i} holds, the required integrability conditions on $\phi$ follow if the $\tilde{h}_i$ are real-valued and satisfy $\tilde{h}_i\in L^1(P_i\times P_i\times\prod_{j>i}P_j)$ for $i\in\{1,..,n\}$. This can be verified as follows.
Fubini's theorem applied to $\tilde{h}_i$ implies $D_i\coloneqq\{({y}_i,y_{i+1},...,y_n): \int \tilde{h}_i(\tilde{x}_i,{y}_i,y_{i+1},...,y_n)P_i(d\tilde{x}_i) <\infty\}$ satisfies $(\prod_{j\geq i}P_j)(D_i)=1$. Therefore $\widetilde{D}_i\coloneqq\left(\prod_{\ell=1}^{i-1} \mathcal{X}_\ell\right)\times D_i$ satisfies $P^n(\widetilde{D}_i)=1$ for all $i$, and hence  there exists $(y_1,...,y_n)\in \cap_{i=1}^n \widetilde{D}_i$.  Therefore $({y}_i,y_{i+1},...,y_n)\in D_i$ for all $i$, i.e.,
\begin{align}
 \int \tilde{h}_i(\tilde{x}_i,{y}_i,y_{i+1},...,y_n) P_i(d\tilde{x}_i) <\infty
\end{align}
for all $i$.
For all $x\in\mathcal{X}^n$ we then have
\begin{align}
|\phi(x)|\leq&|\phi(y)|+ \sum_{j=1}^n|\phi(x_1,...,x_j,y_{j+1},...,y_n)-\phi(x_1,...,x_{j-1},y_{j},...,y_n)|\\
\leq&|\phi(y)|+ \sum_{j=1}^n \tilde{h}_j(x_j,y_j,y_{j+1},...,y_n)\,,\notag
\end{align}
and hence for $i=0,...,n-1$ we can compute
\begin{align}
&\int |\phi(x_1,...,x_n)|\prod_{\ell=i+1}^n P_\ell(dx_\ell)\leq  |\phi(y)|+ \sum_{j=1}^n\int \tilde{h}_j(x_j,y_j,y_{j+1},...,y_n)\prod_{\ell=i+1}^n P_\ell(dx_\ell)\\
=& |\phi(y)|+ \sum_{j=1}^i \tilde{h}_j(x_j,y_j,y_{j+1},...,y_n)+ \sum_{j=i+1}^n\int \tilde{h}_j(x_j,y_j,y_{j+1},...,y_n) P_j(dx_j)<\infty\,.\notag
\end{align}
Thus, for  $i=1,...,n-1$ we conclude that $\phi(x_1,...,x_i,\cdot)\in L^1(\prod_{\ell>i} P_\ell)$ for all $x_1,...,x_i$  and  also that $\phi\in L^1(P^n)$.
\item If  \eqref{eq:one_comp_diff_tildeh_i} holds then \eqref{eq:mean_diff_bound_assump}-\eqref{eq:mean_diff_bound_assump_n} hold if one takes $h_i(x_i,\tilde{x}_i)=E_{\prod_{j>i} P_j}\!\left[\tilde{h}_i(x_i,\tilde{x}_i,\cdot)\right]$ for $i<n$ and $h_n=\tilde{h}_n$.   
\item  In cases where \eqref{eq:one_comp_diff_tildeh_i} holds, to obtain the MGF bound \eqref{eq:cosh_MGF_bound} one possible strategy is to first employ Jensen's inequality to  bound
\begin{align}
\cosh\left(\lambda E_{\prod_{j>i} P_j}[\tilde{h}_i(x_i,\tilde{x}_i,\cdot)]\right)\leq& E_{\prod_{j>i} P_j}[\cosh(\lambda\tilde{h}_i(x_i,\tilde{x}_i,\cdot))]\,.
\end{align}
Hence \eqref{eq:cosh_MGF_bound} is implied by a bound of the form
\begin{align}\label{eq:h_i_MFG_bound}
\int \cosh(\lambda \tilde{h}_i(x_i,\tilde{x}_i,x_{i+1},...,x_n))P_i(dx_i) P_i(d\tilde{x}_i)\prod_{j>i} P_j(dx_j)\leq e^{\xi_i(|\lambda|)}\,.
\end{align}
Similarly to Remark \ref{remark:cosh_dzeta}, the left-hand side of \eqref{eq:h_i_MFG_bound} can also be viewed as a MGF and hence can be bounded using a number of well-known techniques. 

\begin{comment}
lambda=0: trivial

lambda>0: if \tilde_h_i \not in L^1:  lhs=\infty

rhs>=E[tilde{h}_i^2]=\infty
done

if \tilde{h}_i\in L^1:  Jensen
\end{comment}
\end{enumerate}

\subsection{Use of Orlicz Norms}\label{sec:Orlicz}
{
To apply Theorem \ref{thm:McDiarmid_general_mean_bound_method} one requires bounds on the expectation of $\cosh(\lambda X)$ for various random variables, $X$.  In the application to DSM in Section \ref{sec:DSM} we do this by using  properties of sub-exponential and sub-Gaussian random variables.  More generally, given a  (non-decreasing) Young function $\Psi$, if $X$ has finite Orlicz (Luxemburg) norm, $\|X\|_\Psi$ (if we want to emphasize the choice of probability measure $\mathbb{P}$ we use the notation $\|\cdot\|_{\Psi,\mathbb{P}}$),  one can  obtain   bounds   in terms of $\|X\|_{\Psi}$ as follows  (see, e.g., Chapter 2.2 in \cite{van2013weak} for background on Orlicz norms). For proofs of the results in this subsection, see Appendix \ref{app:additional_proofs}.
\begin{lemma}\label{lemma:cosh_bound_Orliz}
Let $\Psi$ be a non-decreasing Young function  and   $X$ be a real-valued random variable with $\|X\|_\Psi<\infty$. Then for all $\lambda\in\mathbb{R}$ we have
\begin{align}
\mathbb{E}[\cosh(\lambda X)]\leq&1+\sum_{k=1}^\infty  \frac{\lambda^{2k} }{(2k)!}  \int_0^\infty  \min\left\{\frac{1}{\Psi(t^{1/(2k)}/\|X\|_\Psi)},1\right\}dt\,.
\end{align}
\end{lemma}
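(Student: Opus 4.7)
The plan is to expand $\cosh(\lambda X)$ as a Taylor series in the even powers of $X$, interchange the expectation with the sum using monotone convergence, and then bound each even moment $\mathbb{E}[X^{2k}]$ via a tail-bound that follows from Markov's inequality applied to $\Psi$. The only inputs really needed are (i) the defining property of the Luxemburg norm, namely $\mathbb{E}[\Psi(|X|/\|X\|_\Psi)]\leq 1$, and (ii) the layer-cake identity $\mathbb{E}[Y]=\int_0^\infty P(Y>t)\,dt$ for nonnegative $Y$.

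First I would write $\cosh(\lambda X)=\sum_{k=0}^\infty \lambda^{2k}X^{2k}/(2k)!$ (valid pointwise for all $\lambda\in\mathbb{R}$ since $\cosh$ is entire and even). Because each summand is nonnegative, Tonelli/monotone convergence gives
\begin{align*}
\mathbb{E}[\cosh(\lambda X)]=1+\sum_{k=1}^\infty \frac{\lambda^{2k}}{(2k)!}\mathbb{E}[X^{2k}]\,,
\end{align*}
with both sides allowed to equal $+\infty$.

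Next I would bound $\mathbb{E}[X^{2k}]$. By the layer-cake formula applied to the nonnegative random variable $|X|^{2k}$, together with the substitution that takes $|X|^{2k}>t$ to $|X|>t^{1/(2k)}$, one obtains $\mathbb{E}[X^{2k}]=\int_0^\infty P(|X|>t^{1/(2k)})\,dt$. For the integrand I would use two bounds: the trivial one $P(|X|>s)\leq 1$, and the Orlicz tail bound obtained from Markov's inequality applied to the nondecreasing function $\Psi$,
\begin{align*}
P(|X|>s)=P\!\left(\Psi\!\left(|X|/\|X\|_\Psi\right)>\Psi\!\left(s/\|X\|_\Psi\right)\right)\leq \frac{\mathbb{E}[\Psi(|X|/\|X\|_\Psi)]}{\Psi(s/\|X\|_\Psi)}\leq \frac{1}{\Psi(s/\|X\|_\Psi)}\,,
\end{align*}
where the last step uses the Luxemburg-norm inequality $\mathbb{E}[\Psi(|X|/\|X\|_\Psi)]\leq 1$. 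Taking the pointwise minimum of these two bounds, integrating in $t$, and plugging back into the Taylor expansion yields exactly the stated inequality.

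The main technical caveat, and the step worth dwelling on, is the use of the Luxemburg-norm inequality at the infimum $c=\|X\|_\Psi$: for a lower semi-continuous Young function this is immediate by monotone convergence, but in general one may need to apply the bound at $c=\|X\|_\Psi+\varepsilon$ (so that $\mathbb{E}[\Psi(|X|/c)]\leq 1$ holds by definition), derive the bound on $\mathbb{E}[\cosh(\lambda X)]$ with $\|X\|_\Psi$ replaced by $c$, and then send $\varepsilon\to 0$, using monotone convergence in $t$ and in $k$ (both integrands are nondecreasing as $c\downarrow\|X\|_\Psi$ because $\Psi$ is nondecreasing). Apart from this, everything reduces to the layer-cake identity and Markov, so the argument should fit in a few lines.
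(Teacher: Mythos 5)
Your proposal is correct and follows essentially the same route as the paper's proof: Taylor expansion of $\cosh$, Tonelli, the layer-cake identity for $\mathbb{E}[|X|^{2k}]$, and the Orlicz tail bound $\mathbb{P}(|X|>s)\leq \min\{1/\Psi(s/\|X\|_\Psi),1\}$ (which the paper simply cites from \cite{van2013weak} rather than rederiving via Markov). The only point the paper handles that you omit is the degenerate case $\|X\|_\Psi=0$ (where $X=0$ a.s.\ and the claim is trivial, but the expression $t^{1/(2k)}/\|X\|_\Psi$ is undefined); your $\varepsilon$-approximation remark otherwise covers the infimum subtlety adequately.
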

Of particular interest are the Young functions $\Psi_q(t)\coloneqq \exp(t^q)-1$ for $q\in[1,\infty)$ (recall that $\|X\|_{\Psi_1}<\infty$ iff $X$ is sub-exponential and $\|X\|_{\Psi_2}<\infty$ iff $X$ is sub-Gaussian).  In this case, using the bound $\min\{a^{-1},1\}\leq 2/(a+1)$ for all $a>0$, one can derive the following   corollary.
\begin{corollary}\label{cor:cosh_bound_Orlicz_Psi_q}
Let $q\in[1,\infty)$ and    $X$ be a real-valued random variable with $\|X\|_{\Psi_q}<\infty$. Then for  all $\lambda\in\mathbb{R}$ we have
\begin{align}\label{eq:cosh_bound_Psi_q}
\mathbb{E}[\cosh(\lambda X)]\leq&  1+2\sum_{k=1}^\infty (\|X\|_{\Psi_q}\lambda)^{2k}  \frac{\Gamma(2k/q+1) }{(2k)!}    \,.
\end{align}
\end{corollary}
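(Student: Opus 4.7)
The plan is to deduce the corollary directly from Lemma \ref{lemma:cosh_bound_Orliz} by explicitly evaluating the integral appearing there with $\Psi=\Psi_q$, using the stated elementary bound $\min\{a^{-1},1\}\leq 2/(a+1)$ to convert the $\min$ into something integrable in closed form.

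First I would apply Lemma \ref{lemma:cosh_bound_Orliz} with $\Psi=\Psi_q$ and then, for each $k\geq 1$, bound the integrand by setting $a=\Psi_q(t^{1/(2k)}/\|X\|_{\Psi_q})$. Since $\Psi_q(s)+1=\exp(s^q)$, the hint yields
\begin{align}
\min\!\left\{\frac{1}{\Psi_q(t^{1/(2k)}/\|X\|_{\Psi_q})},1\right\}\leq \frac{2}{\Psi_q(t^{1/(2k)}/\|X\|_{\Psi_q})+1}=2\exp\!\left(-\frac{t^{q/(2k)}}{\|X\|_{\Psi_q}^q}\right).
\end{align}

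Next I would carry out the change of variables $u = t^{q/(2k)}/\|X\|_{\Psi_q}^q$, so that $t=\|X\|_{\Psi_q}^{2k}u^{2k/q}$ and $dt=(2k/q)\|X\|_{\Psi_q}^{2k}u^{2k/q-1}du$. The integral reduces to
\begin{align}
\int_0^\infty 2\exp\!\left(-\frac{t^{q/(2k)}}{\|X\|_{\Psi_q}^q}\right)dt= 2\|X\|_{\Psi_q}^{2k}\frac{2k}{q}\int_0^\infty u^{2k/q-1}e^{-u}\,du=2\|X\|_{\Psi_q}^{2k}\,\Gamma\!\left(\tfrac{2k}{q}+1\right),
\end{align}
where in the last step I used $(2k/q)\Gamma(2k/q)=\Gamma(2k/q+1)$.

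Plugging this back into the series bound of Lemma \ref{lemma:cosh_bound_Orliz} and pulling out the factor $\|X\|_{\Psi_q}^{2k}$ to combine with $\lambda^{2k}$ gives exactly \eqref{eq:cosh_bound_Psi_q}. There is no real obstacle here; the only point that merits a brief word is that the exchange of sum and integral in going from Lemma \ref{lemma:cosh_bound_Orliz} to the final series is valid term-by-term since all integrands are nonnegative, and the resulting series in \eqref{eq:cosh_bound_Psi_q} is the announced bound whether or not it converges (it converges for $|\lambda|$ in an appropriate neighborhood of $0$, with radius depending on $q$ and $\|X\|_{\Psi_q}$).
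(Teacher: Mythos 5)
Your proof is correct and follows essentially the same route as the paper's: apply Lemma \ref{lemma:cosh_bound_Orliz} with $\Psi=\Psi_q$, use $\min\{a^{-1},1\}\leq 2/(a+1)$ together with $\Psi_q(s)+1=e^{s^q}$, and evaluate the resulting integral as $\|X\|_{\Psi_q}^{2k}\Gamma(2k/q+1)$ by the substitution you describe. The only cosmetic omission is the trivial case $\|X\|_{\Psi_q}=0$ (needed so your change of variables does not divide by zero), which the paper dispatches in one line since then $X=0$ a.s.
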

\begin{comment}
\begin{proof}
Again, the claim is trivial if $\|X\|_{\Psi_q}=0$.  Assuming that $\|X\|_{\Psi_q}>0$, starting from the conclusion of Lemma \ref{lemma:cosh_bound_Orliz} and using the bound $\min\{a^{-1},1\}\leq 2/(a+1)$ for all $a>0$ we can compute
\begin{align}
\mathbb{E}[\cosh(\lambda X)]\leq&1+\sum_{k=1}^\infty  \frac{\lambda^{2k} }{(2k)!}  \int_0^\infty   \frac{2}{\Psi_q(t^{1/(2k)}/\|X\|_{\Psi_q})+1}dt\\
=&1+2\sum_{k=1}^\infty  \frac{\lambda^{2k} }{(2k)!}  \int_0^\infty    \exp\left(-t^{q/(2k)}/\|X\|_{\Psi_q}^q\right)dt\notag\\
=&1+2\sum_{k=1}^\infty (\|X\|_{\Psi_q}\lambda)^{2k}  \frac{\Gamma(2k/q+1)}{(2k)!}    \,.\notag
\end{align}
\end{proof}
\end{comment}
Note that the right-hand side of \eqref{eq:cosh_bound_Psi_q} can be written as $\exp(\xi(|\lambda|))$ for $\xi$ satisfying $\xi(|\lambda|)=O(\lambda^2)$ as $\lambda\to 0$.  When $q=1$ the series can be evaluated  to give 
\begin{align}\label{eq:mgf_bound_subexp}
\mathbb{E}[\cosh(\lambda X)]\leq&  1+\frac{2 (\|X\|_{\Psi_1}\lambda)^{2}}{1- (\|X\|_{\Psi_1}\lambda)^{2}}\leq\exp\left(\frac{2 (\|X\|_{\Psi_1}\lambda)^{2}}{1- (\|X\|_{\Psi_1}\lambda)^{2}}\right)\,,\,\,\,\,\,|\lambda|<1/ \|X\|_{\Psi_1}    \,.
\end{align}
In cases where $q>1$, \eqref{eq:cosh_bound_Psi_q} can be further bounded, e.g., using Stirling's formula (with error bound) for the gamma function. The result \eqref{eq:mgf_bound_subexp} is similar to the MGF bound implied by the Bernstein condition, see Eq.~(2.15)-(2.16)  in \cite{wainwright2019high}, though the denominator differs  due to the fact that only the even moments of $X$ contribute here. Combining  \eqref{eq:cosh_bound_Psi_q}-\eqref{eq:mgf_bound_subexp} with   Theorem \ref{thm:McDiarmid_general_mean_bound_method}, we obtain the following sub-Gaussian and  Bernstein-type concentration inequalities.
\begin{corollary}\label{cor:subexp_subGauss_result}
Let $(\mathcal{X}_i,P_i)$, $i=1,...,n$ be probability spaces and $\phi:\mathcal{X}^n\to\mathbb{R}$ be measurable. Suppose $\phi$ satisfies $\eqref{eq:one_comp_diff_tildeh_i}$ for real-valued $\tilde{h}_i\in L^1(P_i\times P_i\times\prod_{j>i}P_j)$  and  define $h_i(x_i,\tilde{x}_i)\coloneqq E_{\prod_{j>i} P_j}[\tilde{h}_i(x_i,\tilde{x}_i,\cdot)]$.
\begin{enumerate}
\item If the $h_i$ are sub-Gaussian under $P_i\times P_i$ for all $i$ then for all $t> 0$ we have
\begin{align}\label{eq:new_McDiarmid_conc_sub-Gauss}
P^n(\pm(\phi-E_{P^n}[\phi])\geq t)\leq& \exp\left(-\frac{t^2}{4\sum_{i=1}^n\|h_i\|_{\Psi_2,P_i^2}^{2}}\right)\,.
\end{align}
\item If the $h_i$ are   sub-exponential under $P_i\times P_i$ for all $i$ then for all $t> 0$ we have
\begin{align}\label{eq:new_McDiarmid_conc_sub-exp}
P^n(\pm(\phi-E_{P^n}[\phi])\geq t)\leq&  \exp\left(-\frac{t^2}{8 \sum_{i=1}^n\|h_i\|_{\Psi_1,P_i^2}^2 +2t \max_i\{\|h_i\|_{\Psi_1,P_i^2}\}}\right) \,.
\end{align}
\end{enumerate}
\end{corollary}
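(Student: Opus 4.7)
The strategy is to set up the hypotheses of Theorem \ref{thm:McDiarmid_general_mean_bound_method} and then specialize the functions $\xi_i$ to the sub-Gaussian and sub-exponential regimes. Since each $\tilde h_i$ lies in $L^1(P_i\times P_i\times\prod_{j>i}P_j)$, point (1) in the discussion after the theorem yields the required integrability of $\phi$. With $h_i(x_i,\tilde x_i)\coloneqq E_{\prod_{j>i}P_j}[\tilde h_i(x_i,\tilde x_i,\cdot)]$ as in point (2), the one-component mean-difference bound \eqref{eq:mean_diff_bound_assump} is immediate. Everything then reduces to extracting sharp $\xi_i$'s from the assumed Orlicz bounds and evaluating the Chernoff optimization on the right-hand side of \eqref{eq:new_McDiarmid_conc}.

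For part (1), I would expand $E_{P_i\times P_i}[\exp(h_i^2/\|h_i\|_{\Psi_2,P_i^2}^2)]\leq 2$ term-by-term to obtain the even-moment bound $E[h_i^{2k}]\leq 2\,k!\,\|h_i\|_{\Psi_2,P_i^2}^{2k}$. Plugging this into the Taylor series of $\cosh$ and using the combinatorial inequality $(2k)!\geq 2^k(k!)^2$ yields
\begin{align*}
E_{P_i\times P_i}[\cosh(\lambda h_i)]\leq 1+2\sum_{k\geq 1}\frac{(\|h_i\|_{\Psi_2,P_i^2}\lambda)^{2k}}{k!\,2^k}=2e^{(\|h_i\|_{\Psi_2,P_i^2}\lambda)^2/2}-1\leq\exp\bigl((\|h_i\|_{\Psi_2,P_i^2}\lambda)^2\bigr),
\end{align*}
the last step via the scalar inequality $2e^u-1\leq e^{2u}$ for $u\geq 0$ (equivalent to $(e^u-1)^2\geq 0$). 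Thus $\xi_i(\lambda)=\|h_i\|_{\Psi_2,P_i^2}^2\lambda^2$ is admissible in \eqref{eq:cosh_MGF_bound}, and the Chernoff supremum in \eqref{eq:new_McDiarmid_conc} becomes $\sup_{\lambda\geq 0}\{\lambda t-\Sigma^2\lambda^2\}=t^2/(4\Sigma^2)$ with $\Sigma^2=\sum_i\|h_i\|_{\Psi_2,P_i^2}^2$, which is \eqref{eq:new_McDiarmid_conc_sub-Gauss}.

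For part (2), I would feed the already-proved sub-exponential MGF bound \eqref{eq:mgf_bound_subexp} directly into Theorem \ref{thm:McDiarmid_general_mean_bound_method}, taking $\xi_i(\lambda)=2(\|h_i\|_{\Psi_1,P_i^2}\lambda)^2/\bigl(1-(\|h_i\|_{\Psi_1,P_i^2}\lambda)^2\bigr)$ on $|\lambda|<1/\|h_i\|_{\Psi_1,P_i^2}$. Writing $\sigma^2\coloneqq\sum_i\|h_i\|_{\Psi_1,P_i^2}^2$ and $b\coloneqq\max_i\|h_i\|_{\Psi_1,P_i^2}$, each denominator is at least $1-b\lambda$, so $\sum_i\xi_i(\lambda)\leq 2\sigma^2\lambda^2/(1-b\lambda)$ on $\lambda\in[0,1/b)$. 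Evaluating the Chernoff functional at the clean (near-optimal) candidate $\lambda=t/(4\sigma^2+bt)$, which lies in $[0,1/b)$, a short algebraic check gives a lower bound of exactly $t^2/(8\sigma^2+2bt)$, matching \eqref{eq:new_McDiarmid_conc_sub-exp}; the degenerate case $\sigma=0$ forces $h_i=0$ a.s.\ so $\phi$ is constant and the bound is trivial. The only delicate point is pinning down the sub-Gaussian constant so that the factor $4$ in \eqref{eq:new_McDiarmid_conc_sub-Gauss} is sharp, which rests on the clean moment bound $E[h_i^{2k}]\leq 2k!\|h_i\|_{\Psi_2,P_i^2}^{2k}$ together with the inequality $2e^u-1\leq e^{2u}$; the rest is bookkeeping.
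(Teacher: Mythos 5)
Your proposal is correct and follows essentially the same route as the paper: both parts reduce to Theorem \ref{thm:McDiarmid_general_mean_bound_method} with $\xi_i(\lambda)=\|h_i\|_{\Psi_2,P_i^2}^2\lambda^2$ in the sub-Gaussian case and $\xi_i(\lambda)=2(\|h_i\|_{\Psi_1,P_i^2}\lambda)^2/(1-(\|h_i\|_{\Psi_1,P_i^2}\lambda)^2)$ in the sub-exponential case, followed by the same Chernoff optimization (the paper bounds the sup directly where you exhibit the near-optimal $\lambda=t/(4\sigma^2+bt)$, which gives the identical value). The only cosmetic difference is that in part (1) you derive the $\cosh$ bound by expanding $E[\exp(h_i^2/\|h_i\|_{\Psi_2}^2)]\leq 2$ term by term, whereas the paper invokes its Corollary \ref{cor:cosh_bound_Orlicz_Psi_q} with $q=2$; both yield the same constant.
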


}

\subsection{Comparison of Theorem \ref{thm:McDiarmid_general_mean_bound_method} with McDiarmid's Inequality and Related Methods}
The generalization of McDiarmid's inequality in  \cite{kontorovich2014concentration}, which to the best of the author's knowledge is the closest extant result to Theorem \ref{thm:McDiarmid_general_mean_bound_method}, assumes that $\phi$ is Lipschitz in each component, i.e., that $\tilde{h}_i(x_i,\tilde{x}_i,x_{i+1},...,x_n)=d_i(x_i,\tilde{x}_i)$ in \eqref{eq:one_comp_diff_tildeh_i}, where $d_i$ is a metric on $\mathcal{X}_i$. In this case, our result reduces to theirs by taking $\xi_i(\lambda)=\Delta_{\text{SG}}(\mathcal{X}_i)^2\lambda^2/2$, where $\Delta_{\text{SG}}(\mathcal{X}_i)$ denotes the sub-Gaussian diameter of $(\mathcal{X}_i,P_i)$, as defined in   \cite{kontorovich2014concentration}.  However, this Lipschitz case does not cover the type of locally Lipschitz behavior that is inherent in applications such as DSM.

Regarding the more recent entropy-based approach in \cite{maurer2021concentration}, our Theorem \ref{thm:McDiarmid_general_mean_bound_method} produces similar results      in the case \eqref{eq:one_comp_diff_tildeh_i}, when $\tilde{h}_i$  is either sub-Gaussian or sub-exponential in $(x_i,\tilde{x}_i)$, uniformly in the remaining variables. Specifically, compare part 1 of Corollary \ref{cor:subexp_subGauss_result} above with their Theorem 3 and compare part 2  with their  Theorem 4 and especially their Theorem 11, but note the definitions of  $\|\cdot\|_{\psi_\alpha}$ in  \cite{maurer2021concentration}  are related to, but not  the same as, our Orlicz norms.  Despite the similarities in these cases, the results in \cite{maurer2021concentration} are distinct from ours in general.  The most significant advantage of  our approach is that Theorem \ref{thm:McDiarmid_general_mean_bound_method} allows for non-uniform behavior in the subsequent variables. This  is crucial in applications involving locally-Lipschitz functions/losses, such as DSM; in such cases the terms $\|f_k(X)\|_{\psi_\alpha}$ from  \cite{maurer2021concentration} are unbounded and hence    the upper bounds in  Theorems 3-5   of \cite{maurer2021concentration} become uninformative.

The classical McDiarmid inequality, i.e., bounded differences inequality, from \cite{mcdiarmid1989method}  corresponds to the $h_i$ in \eqref{eq:mean_diff_bound_assump} being constants (more specifically, to the $\tilde{h}_i$ in \eqref{eq:one_comp_diff_tildeh_i} being constants).  As was previously observed in \cite{kontorovich2014concentration}, the tools we have used in the proof of Theorem \ref{thm:McDiarmid_general_mean_bound_method}   lead to a worse constant in the exponent than is obtained in the classical bounded differences case when both methods are applicable. { To see this, note that when \eqref{eq:one_comp_diff_tildeh_i} holds for constant $\tilde{h}_i=c_i\in [0,\infty)$ then McDiarmid's inequality corresponds to  \eqref{eq:new_McDiarmid_MGF}    with $\xi_i(\lambda)=\lambda^2c_i^2/8$.  However, in this case the  tightest possible bound of the form \eqref{eq:cosh_MGF_bound} with exponent quadratic in $\lambda$  is $E_{P_i\times P_i}[\cosh(\lambda h_i)]=\cosh(\lambda c_i)\leq e^{\lambda^2c_i^2/2}$. Thus the constant in the exponent is worse that McDiarmid's inequality by a factor of $4$. The worse constant stems from the use of Jensen's inequality and symmetrization in the string of computations \eqref{eq:gen_McDiarmid_derivation}, as opposed to the use of Hoeffding's lemma when deriving McDiarmid's inequality.  Here, the aforementioned techniques were crucial for dealing with the possibly unbounded  one-component differences, which cannot be handled via Hoeffding's lemma.} Thus, while McDiarmid's inequality is preferable when utilizing uniform bounds on one-component differences, Theorem \ref{thm:McDiarmid_general_mean_bound_method} significantly extends the class of $\phi$'s and probability distributions for which one can obtain concentration inequalities of the form \eqref{eq:new_McDiarmid_conc} and so allows for novel applications, such as those considered in Section \ref{sec:applications} below.  A summary of the above comparisons can be found in Table \ref{table:comparison}.

\begin{table}[h!]
\centering\small
\begin{tabular}{||l| c c c c||} 
 \hline
 & Theorem \ref{thm:McDiarmid_general_mean_bound_method} &  Entropy   & Sub-Gaussian  & McDiarmid's  \\
 & & method  \cite{maurer2021concentration}& diameter \cite{kontorovich2014concentration}   &  inequality  \cite{mcdiarmid1989method}\\
 [0.5ex] 
 \hline\hline
Tail behaviors & General MGF or & sub-Gaussian or & sub-Gaussian&compact \\
covered & Orlicz-norm bound &sub-exponential  &  & support\\
\hline
Allows non-uniform &  &  &  &\\
dependence on   & yes & no & no &no\\
later variables  &  &  &  &\\
 \hline
Tightest constants   & no & no & no &yes\\
 \hline
\end{tabular}
\vspace{0.5mm}
\caption{Comparison of our Theorem \ref{thm:McDiarmid_general_mean_bound_method} with  the entropy-based method of \cite{maurer2021concentration}, the sub-Gaussian diameter approach of \cite{kontorovich2014concentration}, and McDiarmid's inequality  \cite{mcdiarmid1989method}. In cases where multiple methods apply, the sub-Gaussian diameter approach \cite{kontorovich2014concentration} is a sub-case of our Theorem \ref{thm:McDiarmid_general_mean_bound_method} but neither  Theorem \ref{thm:McDiarmid_general_mean_bound_method} nor the entropy method from \cite{maurer2021concentration} subsume one another. Of these methods,  McDiarmid's inequality produces the tightest constants when utilizing uniform bounds on one-component differences.}\label{table:comparison}
\end{table}

\section{A Uniform Law of Large Numbers Concentration Inequality for Unbounded Functions with Sample Reuse}\label{sec:ULLN}
In this section we use Theorem \ref{thm:McDiarmid_general_mean_bound_method} to obtain a novel  ULLN concentration inequality  for families of unbounded functions and which allows for sample reuse. More specifically, we assume one has two types of samples, $X$ with values in $\mathcal{X}$ and $Y$ with values in $\mathcal{Y}$, and that one has $m$-times more $Y$ samples than $X$ samples.  This situation often occurs in applications, with the $X$'s being the training data and the $Y$'s being samples from some auxiliary  distribution that are used in the training algorithm and which can easily be generated at will; in such cases, $m$ can  often be thought of as the number of passes through the data. Our result shows that taking the reuse multiple $m\to \infty$ results in  bounds where the  $Y$-dependence is  integrated out, corresponding to the intuition that one can replace the average over $Y$'s with an expectation.  This property allows us to obtain a ULLN result that is uniform over all $P_X$'s that are supported in a given compact set,  even when the (known) distribution of the $Y$'s has unbounded support; such capability is useful for analyzing methods such as DSM { and GANs} that incorporate such unbounded $Y$'s.   To derive the general result, we work under the following assumptions. Later we will show how they can  be checked  in  applications.
\begin{assumption}\label{assump:ULLN}
Let $(\mathcal{X},P_X)$, $(\mathcal{Y},P_Y)$ be probability spaces and suppose the following:  
\begin{enumerate}
\item We have  a nonempty  collection $\mathcal{G}$ of measurable functions $g:\mathcal{X}\times\mathcal{Y}\to\mathbb{R}$.
\item We have  $h_{\mathcal{X}}\in L^1(P_X\times P_X\times P_Y)$ such that $|g(x,y)-g(\tilde{x},y)|\leq h_{\mathcal{X}}(x,\tilde{x},y)<\infty$ for all $x,\tilde x\in \mathcal{X}$, $y\in \mathcal{Y}$, $g\in\mathcal{G}$.
\item We have $h_{\mathcal{Y}}\in L^1(P_Y\times P_Y)$ such that $|g(x,y)-g(x,\tilde{y})|\leq h_{\mathcal{Y}}(y,\tilde{y})<\infty$ for all $x\in \mathcal{X}$, $y,\tilde{y}\in \mathcal{Y}$, $g\in\mathcal{G}$.
\item There exists $K_X\in(0,\infty]$, $\xi_X:[0,K_X)\to[0,\infty)$ such that
\begin{align}\label{eq:xi_X_def}
\int  \cosh\left(\lambda  E_{P_Y}[h_{\mathcal{X}}(x,\tilde{x},\cdot)]\right)  (P_X\times P_X)(dxd\tilde{x}) \leq e^{\xi_X(|\lambda|)}\,\,\,\,\text{ for all $\lambda\in(-K_X,K_X)$.}
\end{align}
\item There exists $K_Y\in(0,\infty]$, $\xi_Y:[0,K_Y)\to[0,\infty)$ such that
\begin{align}\label{eq:xi_Y_def}
\int \cosh\left(\lambda h_{\mathcal{Y}}(y,\tilde{y})\right)  (P_Y\times P_Y)(dyd\tilde{y})\leq e^{\xi_Y(|\lambda|)}\,\,\,\,\text{ for all $\lambda\in(-K_Y,K_Y)$.}
\end{align}
 \item  We have  $m\in\mathbb{Z}^+$ such that $K_X/m\leq K_{Y}$ .\label{point:sample_ratio}
\end{enumerate}
\begin{remark}
Equations \eqref{eq:xi_X_def} and  \eqref{eq:xi_Y_def} are equivalent to centered MGF bounds; see also Remark \ref{remark:cosh_dzeta}. { Consistent with our previous notation, the parameter $m$ in point \ref{point:sample_ratio} above refers to the sample reuse ratio, i.e., the number of $Y$-samples per $X$-sample.}
\end{remark}
\end{assumption}
Under the above assumptions, we can use Theorem \ref{thm:McDiarmid_general_mean_bound_method} to obtain the following. 
\begin{lemma}\label{lemma:ULLN_var_reuse}
Let $\mathcal{G}$ be a  countable collection of measurable functions $g:\mathcal{X}\times\mathcal{Y}\to\mathbb{R}$ for which
 Assumption \ref{assump:ULLN} holds. For $n\in\mathbb{Z}^+$ define $P^{n,m}\coloneqq \prod_{i=1}^n \left(P_X\times \prod_{j=1}^m P_Y\right)$ and  define $\phi: \prod_{i=1}^n\left(\mathcal{X}\times \mathcal{Y}^m\right)\to  \mathbb{R}$ by
\begin{align}\label{eq:ULLN_phi_def}
&\phi(x_1,y_{1,1},...,y_{1,m},....,x_n,y_{n,1},...,y_{n,m})\coloneqq\sup_{g\in\mathcal{G}}\left\{\frac{1}{nm}\sum_{i=1}^n\sum_{j=1}^mg(x_i,y_{i,j})-E_{P_X\times P_Y}[g]\right\}\,.
\end{align}
Then $\phi \in L^1({P^{n,m}})$ and  for all $t\geq 0$ we have the concentration inequality
\begin{align}\label{eq:new_ULLN_conc}
P^{n,m}(\pm(\phi-E_{P^{n,m}}[\phi])\geq t)
\leq& \exp\left(-n\sup_{\lambda\in[0,K_X)}\left\{\lambda t-\left( \xi_X(\lambda)+m \xi_Y(\lambda/m)\right)\right\}\right)\,.
\end{align}
\end{lemma}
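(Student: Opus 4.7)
My plan is to apply Theorem \ref{thm:McDiarmid_general_mean_bound_method} directly to $\phi$ viewed as a function of the $n(1+m)$ variables $(x_1, y_{1,1}, \ldots, y_{1,m}, x_2, \ldots, x_n, y_{n,1}, \ldots, y_{n,m})$, taken in that order with respect to the product measure $P^{n,m}$. The countability of $\mathcal{G}$ makes $\phi$ measurable, so the main preparatory work is (i) establishing one-component difference bounds in the form of \eqref{eq:mean_diff_bound_assump} and (ii) matching them up with the MGF hypothesis \eqref{eq:cosh_MGF_bound}.

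For (i), I use the standard inequality $|\sup_g a_g - \sup_g b_g|\le \sup_g |a_g - b_g|$ together with Assumptions \ref{assump:ULLN}.2--3. Changing the $x_i$-slot only affects the $i$-th block in \eqref{eq:ULLN_phi_def}, which yields $\tilde h_i^X(x_i,\tilde x_i, y_{i,1},\ldots,y_{i,m}) = \tfrac{1}{nm}\sum_{j=1}^m h_{\mathcal{X}}(x_i,\tilde x_i, y_{i,j})$. Changing a $y_{i,j}$-slot gives $\tilde h_{i,j}^Y(y_{i,j},\tilde y_{i,j}) = \tfrac{1}{nm} h_{\mathcal{Y}}(y_{i,j},\tilde y_{i,j})$, which is already independent of all later variables. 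Applying the recipe from point 2 following Theorem \ref{thm:McDiarmid_general_mean_bound_method} — averaging $\tilde h$ over every later coordinate in the chosen ordering — collapses $\tilde h_i^X$ into the clean form $h_i^X(x_i,\tilde x_i) = \tfrac{1}{n} E_{P_Y}[h_{\mathcal{X}}(x_i,\tilde x_i,\cdot)]$, while the $y$-slot bound is unchanged.

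For (ii), the rescaled version of \eqref{eq:xi_X_def} with $\lambda \mapsto \lambda/n$ produces $E_{P_X\times P_X}[\cosh(\lambda h_i^X)]\le e^{\xi_X(|\lambda|/n)}$ valid for $|\lambda|<nK_X$, while \eqref{eq:xi_Y_def} with $\lambda\mapsto \lambda/(nm)$ produces $e^{\xi_Y(|\lambda|/(nm))}$ valid for $|\lambda|<nmK_Y$. Point \ref{point:sample_ratio} of Assumption \ref{assump:ULLN} guarantees the $X$-constraint $|\lambda|<nK_X$ is the binding one. Summing $n$ $x$-slot terms and $nm$ $y$-slot terms in the Chernoff bound \eqref{eq:new_McDiarmid_conc} yields exponent
\[
\sup_{\lambda\in[0,nK_X)} \Bigl\{\lambda t - n\,\xi_X(\lambda/n) - nm\,\xi_Y\bigl(\lambda/(nm)\bigr)\Bigr\},
\]
and the substitution $\mu = \lambda/n$ then factors out a clean $n$ and recovers exactly \eqref{eq:new_ULLN_conc}.

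The remaining ingredient is $\phi\in L^1(P^{n,m})$. I would obtain this by noting that for every $g\in\mathcal{G}$, a triangle inequality against an independent copy $(X,Y)\sim P_X\times P_Y$ gives $|g(x,y) - E_{P_X\times P_Y}[g]| \le E_{(X,Y)}[h_{\mathcal{X}}(x,X,y) + h_{\mathcal{Y}}(y,Y)]$; crucially this upper bound is independent of $g$ and hence controls $\sup_g$ inside \eqref{eq:ULLN_phi_def}, after which averaging over the samples gives a finite $L^1$ bound via $h_{\mathcal{X}}\in L^1$ and $h_{\mathcal{Y}}\in L^1$. I expect this uniform-in-$g$ integrability step to be the main conceptual obstacle — the rest is careful bookkeeping with the normalizing factors $1/n$ and $1/(nm)$ in \eqref{eq:ULLN_phi_def} and then a direct invocation of Theorem \ref{thm:McDiarmid_general_mean_bound_method}.
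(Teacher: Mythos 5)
Your proposal follows essentially the same route as the paper: the same ordering of the $n(1+m)$ coordinates, the same one-component difference bounds $\tfrac{1}{nm}\sum_j h_{\mathcal X}(x_i,\tilde x_i,y_{i,j})$ and $\tfrac{1}{nm}h_{\mathcal Y}$, the same averaging over later coordinates to get $\tfrac1n E_{P_Y}[h_{\mathcal X}(x_i,\tilde x_i,\cdot)]$, the same use of $K_X/m\le K_Y$ to make the $X$-constraint binding, and the same change of variables $\lambda\mapsto n\lambda$ at the end; all of that is correct.

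The one place you are slightly less careful than the paper is the integrability step. Your bound $|g(x,y)-E[g]|\le E_X[h_{\mathcal X}(x,X,y)]+E_Y[h_{\mathcal Y}(y,Y)]$ is only guaranteed finite for $P_X\times P_Y$-a.e.\ $(x,y)$ (Fubini gives no more), which suffices for $\phi\in L^1(P^{n,m})$ but not, as written, for the second half of condition~1 of Theorem~\ref{thm:McDiarmid_general_mean_bound_method}, namely $\phi(x_1,\dots,x_i,\cdot)\in L^1\bigl(\prod_{j>i}P_j\bigr)$ at \emph{every} prefix point: at an exceptional $(x_i,y_{i,j})$ your upper bound is $+\infty$ and controls nothing. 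The paper avoids this by fixing a single good reference point $(x',y')$ with $\int h_{\mathcal X}(x,x',y')\,P_X(dx)<\infty$ and $\int h_{\mathcal Y}(y,y')\,P_Y(dy)<\infty$ and routing all triangle inequalities through it, so every term in the resulting bound is finite pointwise; alternatively you could simply invoke point~1 of the discussion following Theorem~\ref{thm:McDiarmid_general_mean_bound_method}, since your $\tilde h$'s are real-valued and integrable. This is a patchable technicality rather than a flaw in the strategy.
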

\begin{remark}
Note that when  $\xi_{Y}(\lambda)$ is $o(\lambda)$ as $\lambda\to 0$, e.g., in the   sub-Gaussian and sub-exponential cases, then for large  $m$  the supremum on the right-hand side of \eqref{eq:new_ULLN_conc} is approximately  equal to the Legendre transform of $\xi_X$. Therefore the contribution of the $Y$ samples  is approximated by simply integrating out the  $y$-dependence in the function bounding the $x$-differences, $h_{\mathcal{X}}$; see \eqref{eq:xi_X_def}.    We also note that our  concentration inequality provides new insight even in the case where there is no sample reuse (e.g., where $h_{\mathcal{X}}$ and the $g$'s do not depend on $\mathcal{Y}$ and $h_\mathcal{Y}\coloneqq 0$, $\xi_Y\coloneqq 0$) for two reasons: 1) It is able to utilize general MGF bounds.  2) The minimal assumptions made on the form of ${h}_\mathcal{X}$, which allows for locally-Lipschitz behavior that is not covered by other approaches.  In the case of unbounded functions, one often needs both of the features (1)  and (2), whether or not there is sample reuse.
\end{remark}
\begin{proof}  
We will check that all of the conditions required by Theorem \ref{thm:McDiarmid_general_mean_bound_method} are satisfied.
The assumption $h_{\mathcal{X}}\in L^1(P_X\times P_X\times P_Y)$ and  $h_{\mathcal{Y}}\in L^1(P_Y\times P_Y)$ imply that
 $\int h_{\mathcal{X}}(x,x^\prime,y^\prime) P_X(dx) <\infty$ for $P_X\times P_Y$-a.e. $({x}^\prime,y^\prime)$ and  $\int h_{\mathcal{Y}}(y,{y}^\prime)P_Y(dy)<\infty$ for $P_Y$-a.e. ${y}^\prime$.  Therefore there exists $x^\prime,y^\prime$ such that both $\int h_{\mathcal{X}}(x,x^\prime,y^\prime) P_X(dx) <\infty$ and   $\int h_{\mathcal{Y}}(y,{y}^\prime)P_Y(dy)<\infty$.

Let $g\in\mathcal{G}$.  For all $x,y$ we have
\begin{align}
|g(x,y)|\leq& |g(x,y)-g(x,y^\prime)|+|g(x,y^\prime)-g(x^\prime,y^\prime)|+|g(x^\prime,y^\prime)|\\
\leq& h_{\mathcal{Y}}(y,y^\prime)+h_{\mathcal{X}}(x,x^\prime,y^\prime)+|g(x^\prime,y^\prime)|\notag
\end{align}
and hence
\begin{align}
E_{P_{X}\times P_Y}[|g|]\leq  \int h_{\mathcal{Y}}(y,y^\prime)P_Y(dy)+\int h_{\mathcal{X}}(x,x^\prime,y^\prime)P_X(dx)+|g(x^\prime,y^\prime)|<\infty\,.
\end{align}
Therefore $\mathcal{G}\subset L^1(P_X\times P_Y)$. For $g\in\mathcal{G}$ we have
\begin{align}
&\left|\frac{1}{nm}\sum_{i=1}^n\sum_{j=1}^mg(x_i,y_{i,j})-E_{P_X\times P_Y}[g]\right|\\
\leq&\frac{1}{nm}\sum_{i=1}^n\sum_{j=1}^m|g(x_i,y_{i,j})-g(x_i,y^\prime)|+\frac{1}{n}\sum_{i=1}^n|g(x_i,y^\prime)-g(x^\prime,y^\prime)|\notag\\
&+\int |g(x^\prime,y^\prime)-g(x,y^\prime)|P_X(dx)+\int\int |g(x,y^\prime)-g(x,y)|P_X(dx)P_Y(dy)\notag\\
\leq&\frac{1}{nm}\sum_{i=1}^n\sum_{j=1}^m h_{\mathcal{Y}}(y_{i,j},y^\prime)+\frac{1}{n}\sum_{i=1}^n h_{\mathcal{X}}(x_i,x^\prime,y^\prime)\notag\\
&+\int h_{\mathcal{X}}(x,x^\prime,y^\prime)P_X(dx)+\int h_{\mathcal{Y}}(y,y^\prime)P_Y(dy)<\infty\,.\notag
\end{align}
Therefore  $\phi$ is a well-defined measurable real-valued function and the integral of $\phi$ over any collection of its variables (w.r.t. $P_X$ for the $x$-variables and $P_Y$ for the $y$-variables) is finite.   Thus we have proven the  integrability properties required by Theorem \ref{thm:McDiarmid_general_mean_bound_method}.

 Writing $y^k\coloneqq(y_{k,1},...,y_{k,m})$ we have
\begin{align}
&|\phi(x_1,y^1,...,x_k,y^k,...,x_n,y^n)-\phi(x_1,y^1,...,\tilde{x}_k,y^k,...,x_n,y^n)|\\
\leq&\sup_{g\in\mathcal{G}}\left|\frac{1}{nm}\sum_{j=1}^mg(x_k,y_{k,j})-\frac{1}{nm}\sum_{j=1}^mg(\tilde{x}_k,y_{k,j})\right|
\leq \frac{1}{nm}\sum_{j=1}^m h_{\mathcal{X}}(x_k,\tilde{x}_k,y_{k,j})\notag
\end{align}
and similarly
\begin{align}
&|\phi(x_1,y^1,...,x_k,y^k,...,x_n,y^n)-\phi(x_1,y^1,...,x_k,y_{k,1},...,\tilde{y}_{k,\ell},...,y_{k,m},...,x_n,y^n)|\\
%\leq&\sup_{g\in\mathcal{G}}\left|\frac{1}{nm}g(x_k,y_{k,\ell})- \frac{1}{nm}g(x_k,\tilde{y}_{k,\ell})\right|\notag\\
\leq&\frac{1}{nm}h_{\mathcal{Y}}(y_{k,\ell},\tilde{y}_{k,\ell})\,.\notag
\end{align}
Therefore, defining $P^{n,m}_{k,\ell}\coloneqq P_Y^{m-\ell}\times  (P_X\times P_Y^m)^{n-k}$, $\ell=0,...,m-1$, and $P^{n,m}_{k,m}\coloneqq  (P_X\times P_Y^m)^{n-k}$,  we have
\begin{align}
&\left|E_{P^{n,m}_{k,0}}[\phi(x_1,y^1,...,x_k,\cdot)]-E_{P^{n,m}_{k,0}}[\phi(x_1,y^1,...,\tilde{x}_k,\cdot)]\right|\\
\leq&E_{P^{n,m}_{k,0}}\left[\left|\phi(x_1,y^1,...,x_k,\cdot)-\phi(x_1,y^1,...,\tilde{x}_k,\cdot)\right|\right]\leq \frac{1}{n}   E_{P_Y}[  h_{\mathcal{X}}(x_k,\tilde{x}_k,\cdot)]\notag
\end{align}
and
\begin{align}
&\left|E_{P^{n,m}_{k,\ell}} [\phi(x_1,y^1,...,x_k,y_{k,1},...,{y}_{k,\ell},\cdot)]-E_{P^{n,m}_{k,\ell}}[\phi(x_1,y^1,...,x_k,y_{k,1},...,\tilde{y}_{k,\ell},\cdot)]\right|\\
\leq&E_{P^{n,m}_{k,\ell}}\left [\left|\phi(x_1,y^1,...,x_k,y_{k,1},...,{y}_{k,\ell},\cdot)-\phi(x_1,y^1,...,x_k,y_{k,1},...,\tilde{y}_{k,\ell},\cdot)\right|\right]\notag\\
\leq &\frac{1}{nm}h_{\mathcal{Y}}(y_{k,\ell},\tilde{y}_{k,\ell})\,.\notag
\end{align}
Thus we have obtained the necessary one-component mean-difference bounds.

Let $K=nK_X$.  For $\lambda\in(-K,K)$, Assumption \ref{assump:ULLN} implies    $-K_Y<\lambda/(nm)<K_Y$ and 
\begin{align}
&\int  \cosh\left(\lambda  n^{-1} E_{P_Y}[h_{\mathcal{X}}(x,\tilde{x},\cdot)]\right)  (P_X\times P_X)(dxd\tilde{x}) \leq e^{\xi_X(|\lambda|/n)}\,,\\
&\int \cosh\left(\lambda (nm)^{-1} h_{\mathcal{Y}}(y,\tilde{y})\right)  (P_Y\times P_Y)(dyd\tilde{y})\leq e^{\xi_Y(|\lambda |/(nm))}\,.\notag
\end{align}
 Therefore  we have shown that all of the hypotheses of Theorem \ref{thm:McDiarmid_general_mean_bound_method}  hold  and hence we obtain $\phi\in L^1({P^{n,m}})$ and, after changing variables from $\lambda$ to $n\lambda$ in the supremum in \eqref{eq:new_McDiarmid_conc}, we  also obtain   \eqref{eq:new_ULLN_conc}.
\end{proof}

\subsection{Bounding the Mean}
For  appropriate  families of functions, one can use Rademacher complexity and entropy integral techniques to show that $E_{{P^{n,m}}}[\phi]$ in \eqref{eq:new_ULLN_conc} approaches zero as  the number of samples $n\to\infty$. For this purpose, we will use  the following novel distribution-dependent Rademacher complexity bound, which is capable of producing finite bounds for distributions with unbounded support, without needing to employ truncation arguments.  A less general version of this result (i.e., for $\eta=0$)  was previously proven in \cite{birrell2025statisticalerrorboundsgans}; we  provide a detailed proof in Appendix \ref{app:rademacher_bound},  Theorem \ref{thm:Rademacher_bound_unbounded_support}.
\begin{theorem}\label{thm:Rademacher_bound_unbounded_support_main}
Let $(\Theta,d_\Theta)$ be a non-empty pseudometric space and denote its diameter by $D_\Theta\coloneqq\sup_{\theta_1,\theta_2\in\Theta}d_\Theta(\theta_1,\theta_2)$.  Let $(\mathcal{Z},P)$ be a probability space and suppose we have a collection of measurable   functions $g_\theta:\mathcal{Z}\to \mathbb{R}$, $\theta\in\Theta$. Define $\mathcal{G}\coloneqq \{g_\theta:\theta\in\Theta\}$.

If for all $z\in\mathcal{Z}$  there exists $L(z)\in(0,\infty)$ such that $\theta\mapsto g_\theta(z)$ is $L(z)$-Lipschitz then
  for all $n\in\mathbb{Z}^+$ and all $z\in\mathcal{Z}^n$ we have the following bound on the empirical Rademacher complexity, as defined in   \eqref{def:Rademacher_complexity}:
\begin{align}
\widehat{\mathcal{R}}_{\mathcal{G},n}(z)\leq&L_n(z)\inf_{\eta>0}\left\{4\eta+1_{\eta<  D_\Theta}8\sqrt{2}n^{-1/2}\int_{\eta/2}^{ D_\Theta/2}\sqrt{\log  N(\epsilon,\Theta,d_{\Theta})}  d\epsilon\right\}\,, 
\end{align} 
where $L_n(z)\coloneqq \left(\frac{1}{n}\sum_{i=1}^n L(z_i)^2\right)^{1/2}$ and  $N(\epsilon,\Theta,d_\Theta)$ denotes the size of a minimal $\epsilon$-cover of $\Theta$ with respect to the pseudometric $d_\Theta$. Assuming the requisite measurability properties, we also obtain a bound on the $P$-Rademacher complexity  \eqref{eq:Rademacher_def},
\begin{align}
\mathcal{R}_{\mathcal{G},P,n}\leq   E_P[L^2]^{1/2}\inf_{\eta>0}\left\{4\eta+1_{\eta<  D_\Theta}8\sqrt{2}n^{-1/2}\int_{\eta/2}^{ D_\Theta/2}\sqrt{\log  N(\epsilon,\Theta,d_{\Theta})}  d\epsilon\right\}\,.
\end{align}
 
Moreover, if $\Theta$ is the unit ball in $\mathbb{R}^k$ under some norm and $d_\Theta$ is the corresponding metric  then  we have
\begin{align}\label{eq:entropy_int_bound_unit_ball_main}
\inf_{\eta>0}\left\{4\eta+1_{\eta<  D_\Theta}8\sqrt{2}n^{-1/2}\int_{\eta/2}^{ D_\Theta/2}\sqrt{\log  N(\epsilon,\Theta,d_{\Theta})}  d\epsilon\right\}  \leq 32 (k/n)^{1/2}\,.
\end{align}
\end{theorem}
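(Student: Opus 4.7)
The plan is to apply a truncated Dudley chaining bound to the empirical Rademacher complexity with respect to the sample-dependent $L^2$ pseudometric $d_z(\theta_1,\theta_2)\coloneqq\bigl(n^{-1}\sum_{i=1}^n(g_{\theta_1}(z_i)-g_{\theta_2}(z_i))^2\bigr)^{1/2}$ on $\Theta$, and then to transfer the resulting covering-number bound from $d_z$ to the given pseudometric $d_\Theta$ using the Lipschitz hypothesis. The key observation is that $|g_{\theta_1}(z_i)-g_{\theta_2}(z_i)|\leq L(z_i)\,d_\Theta(\theta_1,\theta_2)$ yields $d_z(\theta_1,\theta_2)\leq L_n(z)\,d_\Theta(\theta_1,\theta_2)$, which gives the covering-number comparison $N(\epsilon,\{g_\theta:\theta\in\Theta\},d_z)\leq N(\epsilon/L_n(z),\Theta,d_\Theta)$ and a $d_z$-diameter bound of $L_n(z) D_\Theta$.

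First I would invoke the standard truncated Dudley bound
$$\widehat{\mathcal{R}}_{\mathcal{G},n}(z)\leq \inf_{\alpha\geq 0}\left\{4\alpha + 8\sqrt{2}\,n^{-1/2}\int_{\alpha/2}^{D_z/2}\sqrt{\log N(\epsilon,\mathcal{G}|_z,d_z)}\,d\epsilon\right\},$$
where $D_z$ is the $d_z$-diameter of $\mathcal{G}|_z$. This follows from chaining applied to the conditionally subgaussian Rademacher process $\theta\mapsto n^{-1}\sum_i\sigma_i g_\theta(z_i)$, whose increments are subgaussian with parameter $n^{-1/2}d_z(\theta_1,\theta_2)$ by Hoeffding's lemma. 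Substituting the covering-number and diameter transfers from the previous step and performing the change of variables $\epsilon'=\epsilon/L_n(z)$, $\eta=\alpha/L_n(z)$ factors $L_n(z)$ out of both terms in the infimum and produces the claimed empirical bound; the indicator $1_{\eta<D_\Theta}$ records that the entropy integral is vacuous when $\eta\geq D_\Theta$, in which case only the $4\eta L_n(z)$ term contributes (and a cruder diameter-based bound also suffices).

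The population bound is immediate from Jensen's inequality applied to the concave square root: $E_{P^n}[L_n(Z)]\leq \bigl(E_{P^n}[L_n(Z)^2]\bigr)^{1/2}=E_P[L^2]^{1/2}$, since $L_n(z)$ is the only $z$-dependent factor on the right-hand side. Finally, for $\Theta$ the unit ball in $(\mathbb{R}^k,\|\cdot\|)$ one has $D_\Theta\leq 2$ together with the standard volumetric estimate $\log N(\epsilon,\Theta,d_\Theta)\leq k\log(1+1/\epsilon)$; taking $\eta=0$ and evaluating the resulting entropy integral via the substitution $u=\log(c/\epsilon)$ reduces it to an incomplete gamma integral bounded by a pure constant times $\sqrt{k}$, yielding the $32(k/n)^{1/2}$ bound. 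The main obstacle is not conceptual but bookkeeping: tracking the chaining constants with enough care to land on exactly $8\sqrt{2}$ and $32$, respectively, rather than slightly larger universal constants.
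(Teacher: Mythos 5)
Your proposal is correct and follows essentially the same route as the paper's Appendix C proof: a truncated Dudley chaining bound for the sub-Gaussian Rademacher process, the Lipschitz transfer of covering numbers and diameter from the empirical $L^2$ pseudometric to $d_\Theta$ (factoring out $L_n(z)$ by a change of variables), Cauchy--Schwarz for the population bound, and the volumetric covering estimate for the unit ball. The only difference is that you invoke the truncated Dudley bound with constants $4\alpha$ and $8\sqrt{2}$ as standard, whereas the paper proves exactly that variant from scratch (its Theorem C.2) because it needed the infimum-over-$\eta$ form with an objective monotone in the diameter.
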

\begin{remark}
{ Pseudometrics allow for the possibility of $\theta_1\neq \theta_2$ but $d_\Theta(\theta_1,\theta_2)=0$; see, e.g.,  Remark 10.5 in \cite{anthony1999neural}.}
\end{remark}
We emphasize that the primary novelty of the above Rademacher complexity bound is the ability to use a sample-dependent Lipschitz constant, $L(z)$, which is only required to be square integrable.  This allows the result to be applied to distributions with unbounded support,  even in certain heavy-tailed cases.  Many commonly used neural network architectures satisfy a local Lipschitz bound of the form assumed above with $\mathcal{Z}=\mathbb{R}^d$, $\Theta$ a bounded subset of $\mathbb{R}^k$, and $L(z)=a+b\|z\|$ for some $a,b$, and so the assumption that $L(z)$ is square integrable  is often satisfied so long as $P$ has finite second moment.

We are now ready to derive a bound on the mean of quantities of the form \eqref{eq:ULLN_phi_def}.  For this, we will work under the following assumption.
\begin{assumption}\label{assump:mean_bound}
Let $(\Theta,d_\Theta)$ be a non-empty pseudometric space, $(\mathcal{X},P_X)$, $(\mathcal{Y},P_Y)$ be   probability spaces, and $g_\theta:\mathcal{X}\times\mathcal{Y}\to\mathbb{R}$, $\theta\in\Theta$,  be measurable.  Suppose that we have the following:
\begin{enumerate}
\item  $c:\Theta\to\mathbb{R}$ and   $h:\mathcal{X}\times\mathcal{Y}\to[0,\infty)$  such that $h\in L^1(P_X\times P_Y)$ and $|g_\theta-c_\theta|\leq h$ for all $\theta\in\Theta$.
\item  A measurable $L:\mathcal{X}\times\mathcal{Y}\to (0,\infty)$ such that  $\theta\mapsto g_\theta(x,y)-c_\theta$ is $L(x,y)$-Lipschitz  with respect to $d_\Theta$ for all $x\in\mathcal{X}$, $y\in\mathcal{Y}$.
\end{enumerate}
\end{assumption}
\begin{remark}
The purpose of allowing for a nonzero $c_\theta$ is to cancel a possible  parameter that shifts the output of $g_\theta$ (e.g., the bias parameter of the output layer of a neural network).
\end{remark}

\begin{lemma}\label{lemma:variable_reuse_expectation_bound}
Under Assumption \ref{assump:mean_bound},  for $n,m\in\mathbb{Z}^+$ and $\Theta_0$ a countable  subset of $\Theta$ we have { 
\begin{align}\label{eq:mean_bound}
&E_{P^{n,m}}\!\left[\sup_{\theta\in\Theta_0}\left\{\pm\left(\frac{1}{nm}\sum_{i=1}^n  \sum_{j=1}^m g_\theta(x_i,y_{i,j})
-E_{P_X\times P_Y}[ g_\theta]\right)\right\}\right]\leq C_{n,m}\,,
\end{align}
where      $P^{n,m}\coloneqq \prod_{i=1}^n \left(P_X\times \prod_{j=1}^m P_Y\right)$ and
\begin{align}\label{C_nm_def}
C_{n,m}\coloneqq& 2\left((1-1/m)\int  E_{P_Y(dy)}[ L(x,y)] ^2 P_X(dx)+\frac{1}{m} E_{P_X\times P_Y}[L(x,y)^2]\right)^{1/2}\\
&\times\inf_{\eta>0}\left\{4\eta+1_{\eta<D_\Theta}8\sqrt{2}n^{-1/2}\int_{\eta/2}^{D_\Theta/2}\sqrt{\log N(\epsilon,\Theta,d_\Theta)}d\epsilon\right\}\notag\,.
\end{align}}
\end{lemma}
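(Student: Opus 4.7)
The plan is to combine a standard symmetrization argument with the distribution-dependent Rademacher complexity bound of Theorem \ref{thm:Rademacher_bound_unbounded_support_main}, after first reorganizing the doubly-indexed sum to expose the sample-reuse structure. First I would pass from $g_\theta$ to $\tilde g_\theta\coloneqq g_\theta-c_\theta$ inside both the empirical average and the expectation; since the $c_\theta$ terms cancel identically, the quantity being controlled is unchanged, and by Assumption \ref{assump:mean_bound} the new integrand $\tilde g_\theta$ is $L(x,y)$-Lipschitz in $\theta$ and satisfies $|\tilde g_\theta|\le h\in L^1(P_X\times P_Y)$. I would then group the inner sum into $\tilde T_\theta(x,y_1,\ldots,y_m)\coloneqq \frac{1}{m}\sum_{j=1}^m\tilde g_\theta(x,y_j)$, so that
\begin{align*}
\frac{1}{nm}\sum_{i,j}g_\theta(x_i,y_{i,j})-E_{P_X\times P_Y}[g_\theta]=\frac{1}{n}\sum_{i=1}^n \tilde T_\theta(x_i,\mathbf y_i)-E_{P_X\times P_Y^m}[\tilde T_\theta]
\end{align*}
becomes a centered empirical mean of $n$ i.i.d.\ samples from $P_X\times P_Y^m$.

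Next I would apply the standard ghost-sample symmetrization inequality to this i.i.d.\ sum, which is legal by countability of $\Theta_0$ and the uniform $L^1$ bound $|\tilde T_\theta|\le \frac{1}{m}\sum_j h(x,y_j)$; this yields at most twice the $(P_X\times P_Y^m)$-Rademacher complexity of the class $\{\tilde T_\theta:\theta\in\Theta_0\}$ at sample size $n$. Since $\tilde g_\theta(x,y)$ is $L(x,y)$-Lipschitz in $\theta$, the function $\tilde T_\theta$ is Lipschitz in $\theta$ with constant $L_m(x,y_1,\ldots,y_m)\coloneqq \frac{1}{m}\sum_{j=1}^m L(x,y_j)$. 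Theorem \ref{thm:Rademacher_bound_unbounded_support_main} (applied on the product space $\mathcal{X}\times\mathcal{Y}^m$, and enlarging the index set from $\Theta_0$ to $\Theta$ in the covering-number factor, which only weakens the bound) then produces
\begin{align*}
\mathcal R_{\{\tilde T_\theta\},P_X\times P_Y^m,n}\le E_{P_X\times P_Y^m}[L_m^2]^{1/2}\inf_{\eta>0}\Bigl\{4\eta+1_{\eta<D_\Theta}8\sqrt 2 n^{-1/2}\int_{\eta/2}^{D_\Theta/2}\!\sqrt{\log N(\epsilon,\Theta,d_\Theta)}\,d\epsilon\Bigr\}.
\end{align*}

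The final ingredient is an explicit computation of $E[L_m^2]$. Conditioning on $X=x$ and using independence of the $Y_j$'s under $P_Y^m$,
\begin{align*}
E_{P_Y^m}\!\bigl[L_m(x,\cdot)^2\bigr]=\frac{1}{m}E_{P_Y}[L(x,\cdot)^2]+\Bigl(1-\tfrac{1}{m}\Bigr)\bigl(E_{P_Y}[L(x,\cdot)]\bigr)^2,
\end{align*}
so integrating against $P_X$ reproduces precisely the mixture inside the square root in $C_{n,m}$, while the factor $2$ comes from the symmetrization step. The main obstacle is really just this bookkeeping: it is the conditional second-moment decomposition of $L_m$ that converts the $m$-fold reuse of each $X_i$ into the specific convex combination of $\int E_{P_Y}[L(x,\cdot)]^2\,P_X(dx)$ and $E_{P_X\times P_Y}[L^2]$ with weights $1-1/m$ and $1/m$; everything else is standard symmetrization plus a direct appeal to Theorem \ref{thm:Rademacher_bound_unbounded_support_main}.
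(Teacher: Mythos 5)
Your proposal is correct and follows essentially the same route as the paper: reduce to the shifted, $Y$-averaged class $\frac{1}{m}\sum_j(g_\theta(x,y_j)-c_\theta)$ on $\mathcal{X}\times\mathcal{Y}^m$, symmetrize to twice the Rademacher complexity, apply Theorem \ref{thm:Rademacher_bound_unbounded_support_main} with the averaged Lipschitz constant $\frac{1}{m}\sum_j L(x,y_j)$, and compute its second moment by separating diagonal from off-diagonal terms. The only cosmetic difference is that the paper passes through the empirical Rademacher bound plus Cauchy--Schwarz rather than invoking the $P$-Rademacher form directly, which changes nothing.
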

\begin{remark}
Jensen's inequality implies 
\begin{align}
\int  E_{P_Y(dy)}[ L(x,y)] ^2 P_X(dx)\leq E_{P_X\times P_Y}[L(x,y)^2]\,.
\end{align}
Therefore  the right-hand side of \eqref{eq:mean_bound} is non-increasing in $m$, thus demonstrating a quantitative benefit of sample reuse.  When $m$ is large, \eqref{eq:mean_bound}-\eqref{C_nm_def} can be interpreted intuitively as  saying that the $y$ dependence of the Lipschitz constant can be integrated out; this is the behavior one would expect if it was justified to replace the empirical $Y$-average with an expectation. 
\end{remark}
\begin{proof}
Define the probability measure  $P^m\coloneqq P_X\times P_Y^m$ on $\mathcal{X}\times\mathcal{Y}^m$ and define
\begin{align}
\mathcal{G}_0=\left\{z\coloneqq(x,y_1,...,y_m)\mapsto\frac{1}{m}\sum_{j=1}^m (g_\theta(x,y_j)-c_\theta):\theta\in\Theta_0\right\}\,,
\end{align}
a countable family of measurable functions on $\mathcal{X}\times\mathcal{Y}^m$. For $\theta\in\Theta_0$ we have
\begin{align}
\left|\frac{1}{m}\sum_{j=1}^m (g_\theta(x,y_j)-c_\theta)\right|\leq \frac{1}{m}\sum_{j=1}^m |g_\theta(x,y_j)-c_\theta|\leq\frac{1}{m}\sum_{j=1}^m h(x,y_j)\in L^1(P^m)\,.
\end{align}
The ULLN mean bound from Theorem \ref{thm:ULLN_unbounded} in Appendix \ref{app:ULLN_rademacher}   therefore implies
\begin{align}\label{eq:mean_bound_1}
E_{P^{n,m}}\!\left[\sup_{\theta\in\Theta_0}\left\{\pm\left(\frac{1}{nm}\sum_{i=1}^n  \sum_{j=1}^m g_\theta(x_i,y_{i,j})
-E_{P_X\times P_Y}[ g_\theta]\right)\right\}\right]\leq 2\mathcal{R}_{\mathcal{G}_0,P^m,n}\,.
\end{align}
{ Note that the use of a countable $\Theta_0$ ensures there are no measurability issues.}

To bound the Rademacher complexity on the right-hand side of \eqref{eq:mean_bound_1}, let 
\begin{align}
\mathcal{G}\coloneqq\left\{z\coloneqq(x,y_1,...,y_m)\mapsto\frac{1}{m}\sum_{j=1}^m (g_\theta(x,y_j)-c_\theta):\theta\in\Theta\right\}\,.
\end{align}
 For $\theta_1,\theta_2\in\Theta$ we have
\begin{align}
&\left|\frac{1}{m}\sum_{j=1}^m (g_{\theta_1}(x,y_j)-c_{\theta_1})-\frac{1}{m}\sum_{j=1}^m (g_{\theta_2}(x,y_j)-c_{\theta_2})\right|\\
\leq&\frac{1}{m}\sum_{j=1}^m| (g_{\theta_1}(x,y_j)-c_{\theta_1})- (g_{\theta_2}(x,y_j)-c_{\theta_2})\notag|\leq\frac{1}{m}\sum_{j=1}^mL(x,y_j)d_\Theta(\theta_1,\theta_2) \,.\notag
\end{align}
Therefore we can use the empirical Rademacher complexity from Theorem \ref{thm:Rademacher_bound_unbounded_support_main}   to obtain
\begin{align}
\widehat{\mathcal{R}}_{\mathcal{G},n}(z^n)\leq L_n(z^n)\inf_{\eta>0}\left\{4\eta+1_{\eta<D_\Theta}8\sqrt{2}n^{-1/2}\int_{\eta/2}^{D_\Theta/2}\sqrt{\log N(\epsilon,\Theta,d_\Theta)}d\epsilon\right\}\,,
\end{align}
where we use the notation $z^n\coloneqq (z_1,...,z_n)$, $z_i\coloneqq (x_i,y_{i,1},...,y_{i,m})$ and 
\begin{align}
L_n(z^n)\coloneqq \left(\frac{1}{n}\sum_{i=1}^n \left(\frac{1}{m}\sum_{j=1}^mL(x_i,y_{i,j})\right)^2\,\right)^{1/2}\,.
\end{align}
Clearly we have $\widehat{\mathcal{R}}_{\mathcal{G}_0,n}(z^n)\leq\widehat{\mathcal{R}}_{\mathcal{G},n}(z^n)$ and therefore

\begin{align}\label{eq:R_G0_bound}
\mathcal{R}_{\mathcal{G}_0,P^m,n}\leq& E_{P^{n,m}}[L_n(z^n)]\inf_{\eta>0}\left\{4\eta+1_{\eta<D_\Theta}8\sqrt{2}n^{-1/2}\int_{\eta/2}^{D_\Theta/2}\sqrt{\log N(\epsilon,\Theta,d_\Theta)}d\epsilon\right\}\\
\leq& E_{P^{n,m}}[L_n(z^n)^2]^{1/2}\inf_{\eta>0}\left\{4\eta+1_{\eta<D_\Theta}8\sqrt{2}n^{-1/2}\int_{\eta/2}^{D_\Theta/2}\sqrt{\log N(\epsilon,\Theta,d_\Theta)}d\epsilon\right\}\,,\notag
\end{align}
where
\begin{align}\label{eq:Ln_simplification}
E_{P^{n,m}}\!\left[L_n(z^n)^2\right]
=&\frac{1}{m^2} \sum_{j=1}^m\sum_{\ell=1}^mE_{P^m}[L(x,y_{j})L(x,y_{\ell})]\\
=&\frac{1}{m^2} \sum_{j=1}^m\sum_{\ell=1,\ell\neq j}^mE_{P^m}[L(x,y_{j})L(x,y_{\ell})]+\frac{1}{m^2} \sum_{j=1}^m E_{P^m}[L(x,y_{j})^2]\notag\\
=&(1-1/m)\int  E_{P_Y(dy)}[ L(x,y)] ^2 P_X(dx)+\frac{1}{m} E_{P_X\times P_Y}[L(x,y)^2]\,.\notag
\end{align}
Taken together, the results \eqref{eq:mean_bound_1}, \eqref{eq:R_G0_bound},  and \eqref{eq:Ln_simplification} imply the claimed bound \eqref{eq:mean_bound}.
\end{proof}

\subsection{Combining the Concentration Inequality and Mean Bound}
Combining Lemmas \ref{lemma:ULLN_var_reuse}  and \ref{lemma:variable_reuse_expectation_bound} we obtain the following ULLN concentration inequality. 
\begin{theorem}\label{thm:ULLN_var_reuse_mean_bound} 
Let $(\Theta,d_\Theta)$ be a non-empty separable pseudometric space,  $(\mathcal{X},P_X)$, $(\mathcal{Y},P_Y)$ be probability spaces,   $g_\theta:\mathcal{X}\times\mathcal{Y}\to\mathbb{R}$ for $\theta\in\Theta$, and $\mathcal{G}\coloneqq\{g_\theta:\theta\in\Theta\}$ such that Assumptions \ref{assump:ULLN} and  \ref{assump:mean_bound} are satisfied. 
For  $n\in\mathbb{Z}^+$ define $P^{n,m}\coloneqq \prod_{i=1}^n \left(P_X\times \prod_{j=1}^m P_Y\right)$ and  $\phi: \prod_{i=1}^n\left(\mathcal{X}\times \mathcal{Y}^m\right)\to  \mathbb{R}$ by
\begin{align}
&\phi_\pm(z_1,...,z_n)\coloneqq\sup_{\theta\in\Theta}\left\{\pm\left(\frac{1}{nm}\sum_{i=1}^n\sum_{j=1}^mg_\theta(x_i,y_{i,j})-E_{P_X\times P_Y}[g_\theta]\right)\right\}\,,
\end{align}
where $z_i\coloneqq (x_i,y_{i,1},...,y_{i,m})$. Then   $\phi_\pm \in L^1({P^{n,m}})$ and  for all $t\geq 0$ we have
\begin{align}
P^{n,m}(\phi_\pm\geq t+C_{n,m})\leq \exp\left(-n\sup_{\lambda\in[0,K_X)}\left\{\lambda t-\left( \xi_X(\lambda)+m \xi_Y(\lambda/m)\right)\right\}\right)\,,
\end{align}
where $C_{n,m}$ was defined in \eqref{C_nm_def}.

\end{theorem}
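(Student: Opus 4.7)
The plan is to combine the two ingredients already established, Lemma \ref{lemma:ULLN_var_reuse} (the concentration inequality for $\phi$ around its mean) and Lemma \ref{lemma:variable_reuse_expectation_bound} (the mean bound by $C_{n,m}$), via a standard centering argument. The only real work is that both of those lemmas assume a \emph{countable} index set, whereas here $(\Theta,d_\Theta)$ is only separable. I will therefore first replace $\Theta$ with a countable dense subset and check that doing so does not change the value of $\phi_\pm$.

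Concretely, let $\Theta_0\subset\Theta$ be a countable dense subset and define $\phi_\pm^0$ by restricting the supremum in the definition of $\phi_\pm$ to $\Theta_0$. I claim $\phi_\pm^0=\phi_\pm$ pointwise on $\prod_{i=1}^n(\mathcal{X}\times\mathcal{Y}^m)$. For any fixed sample $(x_i,y_{i,j})_{i,j}$, the empirical average $\theta\mapsto \tfrac{1}{nm}\sum_{i,j}(g_\theta(x_i,y_{i,j})-c_\theta)$ is Lipschitz in $\theta$ with constant $\tfrac{1}{nm}\sum_{i,j}L(x_i,y_{i,j})$ by Assumption \ref{assump:mean_bound}, and the map $\theta\mapsto E_{P_X\times P_Y}[g_\theta-c_\theta]$ is continuous by the dominated convergence theorem with dominator $h\in L^1(P_X\times P_Y)$. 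Since the $c_\theta$ terms cancel between the empirical and population averages, the argument of the supremum defining $\phi_\pm$ is continuous in $\theta$, so density of $\Theta_0$ gives $\phi_\pm^0=\phi_\pm$.

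Next I apply Lemma \ref{lemma:ULLN_var_reuse} to the countable collection $\mathcal{G}_0=\{g_\theta:\theta\in\Theta_0\}$ to handle $\phi_+$, and to $-\mathcal{G}_0=\{-g_\theta:\theta\in\Theta_0\}$ to handle $\phi_-$; Assumption \ref{assump:ULLN} is inherited by these subfamilies because $h_\mathcal{X}$ and $h_\mathcal{Y}$ are unchanged under restriction of the index set and under sign flips of $g$. This yields $\phi_\pm\in L^1(P^{n,m})$ together with
\begin{align*}
P^{n,m}(\phi_\pm - E_{P^{n,m}}[\phi_\pm]\geq t)\leq \exp\!\left(-n\sup_{\lambda\in[0,K_X)}\!\left\{\lambda t-\bigl(\xi_X(\lambda)+m\xi_Y(\lambda/m)\bigr)\right\}\right).
\end{align*}
Meanwhile, Lemma \ref{lemma:variable_reuse_expectation_bound} applied to the same $\Theta_0$ gives $E_{P^{n,m}}[\phi_\pm]=E_{P^{n,m}}[\phi_\pm^0]\leq C_{n,m}$. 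Combining these via the trivial inclusion $\{\phi_\pm\geq t+C_{n,m}\}\subset\{\phi_\pm-E[\phi_\pm]\geq t\}$ (valid because $E[\phi_\pm]\leq C_{n,m}$) yields the claimed concentration bound.

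The main subtlety is the passage from $\Theta$ to $\Theta_0$, since without it we would face measurability issues for the supremum as well as the countability hypotheses of the cited lemmas. The continuity argument via Assumption \ref{assump:mean_bound} handles the empirical average cleanly, and the use of $h$ as an integrable dominator is what allows the expectation $\theta\mapsto E[g_\theta]$ to be continuous even though $g_\theta$ itself need not be bounded. Everything else is bookkeeping.
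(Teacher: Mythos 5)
Your proof is correct and follows essentially the same route as the paper's: restrict the supremum to a countable dense subset $\Theta_0$ (justified via the pointwise Lipschitz property from Assumption \ref{assump:mean_bound} together with dominated convergence using the envelope $h$), then invoke Lemma \ref{lemma:ULLN_var_reuse} for the concentration around the mean and Lemma \ref{lemma:variable_reuse_expectation_bound} for the bound $E_{P^{n,m}}[\phi_\pm]\leq C_{n,m}$, handling $\phi_-$ by applying the lemmas to $-\mathcal{G}_0$. You have simply made explicit the details the paper leaves implicit.
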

\begin{remark}\label{remark:C_nm_decay}
By separability of a pseudometric space, we mean there exists a countable subset  $\Theta_0\subset\Theta$ such that for all $\theta\in\Theta$ there exists a sequence $\theta_j\in\Theta_0$ such that $d_\Theta(\theta_j,\theta)\to 0$ (i.e., a countable dense subset).  When $\Theta$ is the unit ball in $\mathbb{R}^k$ under some norm and $d_\Theta$ is the corresponding metric then a simplified upper bound on $C_{n,m}$ can be obtained by using \eqref{eq:entropy_int_bound_unit_ball_main}, { which leads to $C_{n,m}=O(n^{-1/2})$; see also Appendix \ref{app:mean_ULLN_finite_dim}. }
\end{remark}
\begin{proof}
The result follows from noting that the dominated convergence theorem allows the supremum in the definition of $\phi_+$ to  be restricted to a countable dense subset, $\Theta_0$, of $\Theta$ and then using  Lemmas \ref{lemma:ULLN_var_reuse}  and    \ref{lemma:variable_reuse_expectation_bound}.  We note that $-g_\theta$ satisfies the same set of assumptions, hence  the result holds  for $\phi_-$ as well as for $\phi_+$.
\end{proof}

\section{Concentration Inequalities for Stochastic Optimization Problems with Sample Reuse}\label{sec:stoch_opt}
In this section we apply the techniques developed  above to derive statistical error bounds for stochastic optimization problems, again focusing on the case where we have two types of samples, $X$ and $Y$, with a multiple $m$ more $Y$ samples  than $X$ samples.  More specifically, we derive concentration inequalities for
\begin{align}\label{eq:stoch_opt_error_main}
E_{P_X\times P_Y}[g_{\theta^*_{n,m}}]-\inf_{\theta\in\Theta} E_{P_X\times P_Y}\left[g_\theta\right]\,,
\end{align}
where $g_\theta:\mathcal{X}\times \mathcal{Y}\to\mathbb{R}$ is an appropriate  (unbounded) objective function, depending on parameters  $\theta\in\Theta$, that satisfies certain sample-dependent  (locally) Lipschitz conditions and $\theta^*_{n,m}$ is (approximately) a solution to the empirical optimization problem
\begin{align}
\inf_{\theta\in\Theta}\frac{1}{nm}\sum_{i=1}^n \sum_{j=1}^m g_{\theta}(X_i,Y_{i,j})\,,
\end{align}
with $X_i,Y_{i,j}$ being independent samples from $P_X$ and $P_Y$ respectively. { Thus \eqref{eq:stoch_opt_error_main} is the excess error that results from using the solution to the empirical problem.} The additional assumptions made here are summarized below.
\begin{assumption}\label{assump:X_Y}
Let $(\mathcal{X},P_X)$, $(\mathcal{Y},P_Y)$, $(\Omega,\mathbb{P})$ be  probability spaces, $n,m\in\mathbb{Z}^+$,   and suppose we have random variables $X_i:\Omega\to\mathcal{X}$, $i=1,..,n$, $Y_{i,j}:\Omega\to\mathcal{Y}$, $i=1,...,n$, $j=1,...,m$ that are all  independent  and with $X_i\sim P_X$  and $Y_{i,j}\sim P_Y$. 

Suppose also that $\Theta$ is a measurable space, $g:\Theta\times\mathcal{X}\times\mathcal{Y}\to\mathbb{R}$ is measurable, and there exists $\psi\in L^1(P_X\times P_Y)$ such that $g\geq \psi$.  Let $\epsilon_{n,m}^{\text{opt}}\geq 0$ and suppose we have a $\Theta$-valued random variable $\theta^*_{n,m}$ such that
\begin{align}\label{eq:empirical_optim}
\frac{1}{nm}\sum_{i=1}^n \sum_{j=1}^m g_{\theta^*_{n,m}}(X_i,Y_{i,j})\leq \inf_{\theta\in\Theta}\frac{1}{nm}\sum_{i=1}^n \sum_{j=1}^m g_{\theta}(X_i,Y_{i,j})+\epsilon_{n,m}^{\text{opt}} \,\,\,\,\,\mathbb{P}\text{-a.s.}
\end{align}
\end{assumption} 
\begin{remark}
{ Exact empirical risk minimization algorithms correspond to  $\epsilon^{\text{opt}}_{n,m}=0$ in \eqref{eq:empirical_optim}; we have allowed for  a nonzero $\epsilon_{n,m}^{\text{opt}}$ in order to demonstrate that the optimization error  propagates through to the final bounds in a rather benign way.}
\end{remark}
  We start by deriving a $L^1$ error bound. The  proof is based on a standard error decomposition argument, which we will again use in Theorem \ref{thm:conc_ineq_stoch_opt} below, together with the mean bound from Lemma \ref{lemma:variable_reuse_expectation_bound}.
\begin{theorem}\label{thm:opt_L1_bound}
Let $(\Theta,d_\Theta)$ be a non-empty separable pseudometric space equipped with a $\sigma$-algebra, $(\mathcal{X},P_X)$, $(\mathcal{Y},P_Y)$, $(\Omega,\mathbb{P})$ be  probability spaces,  $g_\theta:\mathcal{X}\times\mathcal{Y}\to\mathbb{R}$, $\theta\in\Theta$, $n,m\in\mathbb{Z}^+$,  and $X_i:\Omega\to\mathcal{X}$, $i=1,..,n$, $Y_{i,j}:\Omega\to\mathcal{Y}$, $i=1,...,n$, $j=1,...,m$ be random variables such that Assumptions \ref{assump:mean_bound} and \ref{assump:X_Y} hold.

Then $g_\theta\in L^1(P_X\times P_Y)$ for all $\theta\in\Theta$, $\inf_{\theta\in\Theta} E_{P_X\times P_Y}[g_\theta]$ is finite,  
and, with $C_{n,m}$ as defined in \eqref{C_nm_def}, we have the $L^1$ error bound
\begin{align}\label{eq:Mean_bound_opt_error}
&\mathbb{E}\left[\left|E_{P_X\times P_Y}[g_{\theta^*_{n,m}}]- \inf_{\theta\in\Theta} E_{P_X\times P_Y}[g_\theta]\right|\right]\leq 2C_{n,m}+\epsilon_{n,m}^{\text{opt}}\,.
\end{align}

\end{theorem}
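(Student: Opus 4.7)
The plan is to use the classical three-term excess-risk decomposition and then control the two sup-deviation terms by the mean bound from Lemma \ref{lemma:variable_reuse_expectation_bound}. Write $R(\theta)\coloneqq E_{P_X\times P_Y}[g_\theta]$ and $\widehat{R}_{n,m}(\theta)\coloneqq \frac{1}{nm}\sum_{i,j}g_\theta(X_i,Y_{i,j})$. First I would dispatch the integrability preliminaries: Assumption \ref{assump:mean_bound}(1) gives $|g_\theta|\leq |c_\theta|+h$ with $h\in L^1(P_X\times P_Y)$, so $g_\theta\in L^1(P_X\times P_Y)$ for every $\theta\in\Theta$; the lower bound $g\geq\psi$ in Assumption \ref{assump:X_Y} together with $\psi\in L^1$ gives $R(\theta)\geq E_{P_X\times P_Y}[\psi]>-\infty$, so $\inf_{\theta\in\Theta}R(\theta)$ is finite (the set is nonempty since $\theta^*_{n,m}$ takes values in $\Theta$).

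Next I would decompose, for any $\theta_0\in\Theta$,
\begin{align}
R(\theta^*_{n,m})-R(\theta_0)
&=\bigl(R(\theta^*_{n,m})-\widehat{R}_{n,m}(\theta^*_{n,m})\bigr)
+\bigl(\widehat{R}_{n,m}(\theta^*_{n,m})-\widehat{R}_{n,m}(\theta_0)\bigr)\notag\\
&\quad+\bigl(\widehat{R}_{n,m}(\theta_0)-R(\theta_0)\bigr).
\end{align}
The first summand is at most $\phi_-$, the third is at most $\phi_+$, and the middle, by \eqref{eq:empirical_optim}, is at most $\epsilon_{n,m}^{\text{opt}}+\widehat{R}_{n,m}(\theta_0)-\widehat{R}_{n,m}(\theta_0)=\epsilon_{n,m}^{\text{opt}}$ after replacing the infimum by $\widehat{R}_{n,m}(\theta_0)$. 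Hence, for any $\theta_0\in\Theta$,
\begin{align}
R(\theta^*_{n,m})-R(\theta_0)\leq \phi_-+\phi_++\epsilon_{n,m}^{\text{opt}}.
\end{align}
Since this upper bound is independent of $\theta_0$, I can take an infimum over $\theta_0$ to obtain $R(\theta^*_{n,m})-\inf_{\theta\in\Theta}R(\theta)\leq \phi_-+\phi_++\epsilon_{n,m}^{\text{opt}}$. As $R(\theta^*_{n,m})\geq \inf_{\theta\in\Theta}R(\theta)$, the absolute value in \eqref{eq:Mean_bound_opt_error} coincides with this nonnegative difference.

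Finally I would take expectations under $\mathbb{P}$. By separability of $(\Theta,d_\Theta)$ and the sample-wise Lipschitz property in Assumption \ref{assump:mean_bound}(2), the suprema defining $\phi_\pm$ equal their restrictions to a countable dense subset $\Theta_0\subset\Theta$ (the integrands are continuous in $\theta$ with an $L^1$-bound, so dominated convergence applies), which renders $\phi_\pm$ measurable. Lemma \ref{lemma:variable_reuse_expectation_bound}, applied to both $g_\theta$ and $-g_\theta$ over $\Theta_0$, yields $\mathbb{E}[\phi_+]\leq C_{n,m}$ and $\mathbb{E}[\phi_-]\leq C_{n,m}$, giving the stated bound $2C_{n,m}+\epsilon_{n,m}^{\text{opt}}$.

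The main obstacle I anticipate is purely bookkeeping rather than analytic: verifying that $R(\theta^*_{n,m})$ is a well-defined measurable random variable and that the reduction to the countable subset $\Theta_0$ is valid with the given measurability hypotheses on $\Theta$. Both are handled by combining separability with the Lipschitz-continuity in $\theta$ and the $L^1$ envelope $|g_\theta-c_\theta|\leq h$; everything else in the proof is the standard decomposition argument.
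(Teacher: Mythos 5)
Your proposal is correct and follows essentially the same route as the paper: the same integrability preliminaries, the standard three-term excess-risk decomposition (the paper phrases it via $\inf_\theta\widehat{R}_{n,m}(\theta)-\inf_\theta R(\theta)\leq\sup_\theta\{\widehat{R}_{n,m}(\theta)-R(\theta)\}$ rather than fixing $\theta_0$ and taking the infimum at the end, but these are equivalent), the same dominated-convergence reduction to a countable dense subset, and the same application of Lemma \ref{lemma:variable_reuse_expectation_bound} to both signs to get $2C_{n,m}+\epsilon_{n,m}^{\text{opt}}$.
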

\begin{remark}
{ Note that  $C_{n,m}\to 0$ as $n\to \infty$ for appropriate spaces $(\Theta,d_\Theta)$ and when the sample-dependent Lipschitz constant from Assumption \ref{assump:mean_bound} is square integrable; see \eqref{C_nm_def} and Remark \ref{remark:C_nm_decay}. We emphasize that our novel Rademacher complexity bound from Theorem \ref{thm:Rademacher_bound_unbounded_support_main} is the key to obtaining a finite $L^1$ error bound  under such relatively weak  assumptions; in particular, it applies to heavy-tailed $P_X$ and $P_Y$  so long as   $L(x,y)$ is square integrable. }

\end{remark}
\begin{proof}
The assumptions imply that $|g_\theta|\leq |g_\theta-c_\theta|+|c_\theta|\leq h+|c_\theta|\in L^1(P_X\times P_Y)$. We also have $g_\theta\geq \psi\in L^1(P_X\times P_Y)$ for all $\theta$ and hence $\inf_\theta E_{P_X\times P_Y}[g_\theta]$ is finite.  On the $\mathbb{P}$-probability one set where  \eqref{eq:empirical_optim} holds we have the following error decomposition:
\begin{align}\label{eq:as_opt_bound}
0\leq &E_{P_X\times P_Y}[g_{\theta^*_{n,m}}]- \inf_\theta E_{P_X\times P_Y}[g_\theta]\\
\leq&E_{P_X\times P_Y}[g_{\theta^*_{n,m}}]-\frac{1}{nm}\sum_{i=1}^n \sum_{j=1}^m g_{\theta^*_{n,m}}(X_i,Y_{i,j})\notag\\
&+\inf_{\theta\in\Theta}\frac{1}{nm}\sum_{i=1}^n \sum_{j=1}^m g_{\theta}(X_i,Y_{i,j})- \inf_\theta E_{P_X\times P_Y}[g_\theta]+\epsilon_{n,m}^{\text{opt}}\notag\\
\leq&\sup_{\theta\in\Theta}\!\left\{\!-\!\left(\!\frac{1}{nm}\sum_{i=1}^n \sum_{j=1}^m g_{\theta}(X_i,Y_{i,j})-E_{P_X\times P_Y}[g_{\theta}]\!\right)\!\!\right\}\notag\\
&+\sup_{\theta\in\Theta}\!\left\{\!\frac{1}{nm}\sum_{i=1}^n \sum_{j=1}^m g_{\theta}(X_i,Y_{i,j})-  E_{P_X\times P_Y}[g_\theta]\!\right\}+\epsilon_{n,m}^{\text{opt}}\,\notag.
\end{align}
The objectives are unchanged under the replacement of $g_\theta$ with $g_\theta-c_\theta$, which the assumptions imply is Lipschitz  in $\theta$, pointwise in $(x,y)$. This, together with the dominated convergence theorem, allows the suprema to be restricted to a countable dense subset  $\Theta_0\subset \Theta$.  Therefore taking expectations and using the bound from   Lemma \ref{lemma:variable_reuse_expectation_bound}  completes the proof.
\end{proof}

Next we use the tools developed above to derive a concentration inequality for stochastic optimization of an unbounded objective.
\begin{theorem}\label{thm:conc_ineq_stoch_opt}
Let $(\Theta,d_\Theta)$ be a non-empty separable pseudometric space equipped with a $\sigma$-algebra, $(\mathcal{X},P_X)$, $(\mathcal{Y},P_Y)$, $(\Omega,\mathbb{P})$ be  probability spaces,  $g_\theta:\mathcal{X}\times\mathcal{Y}\to\mathbb{R}$, $\theta\in\Theta$,   $n,m\in\mathbb{Z}^+$,  and $X_i:\Omega\to\mathcal{X}$, $i=1,..,n$, $Y_{i,j}:\Omega\to\mathcal{Y}$, $i=1,...,n$, $j=1,...,m$ be random variables such that Assumptions  \ref{assump:ULLN} (for $\mathcal{G}\coloneqq\{g_\theta:\theta\in\Theta\}$), \ref{assump:mean_bound}, and \ref{assump:X_Y} all hold.

Then $g_\theta\in L^1(P_X\times P_Y)$ for all $\theta\in\Theta$ and for all $t\geq 0$ we have
\begin{align}
&\mathbb{P}\left(E_{P_X\times P_Y}[g_{\theta^*_{n,m}}]-\inf_{\theta\in\Theta} E_{P_X\times P_Y}[g_\theta]\geq t+2C_{n,m}+\epsilon_{n,m}^{\text{opt}}\right)\\
\leq&\exp\left(-n\sup_{\lambda\in[0,K_X)}\left\{\lambda t/2-(\xi_X(\lambda)+m\xi_Y({\lambda}/{m}))\right\}\right)\,,\notag
\end{align}
{ where $C_{n,m}$ was defined in \eqref{C_nm_def}.}

\end{theorem}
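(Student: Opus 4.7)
The plan is to combine the almost-sure error decomposition already used in the proof of Theorem \ref{thm:opt_L1_bound} with the ULLN concentration inequality of Theorem \ref{thm:ULLN_var_reuse_mean_bound}, converting the two one-sided uniform-deviation tail bounds into a single concentration inequality for the excess optimization error via a union bound.

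First I would note that $|g_\theta|\leq h+|c_\theta|\in L^1(P_X\times P_Y)$ for every $\theta\in\Theta$ by Assumption \ref{assump:mean_bound}, as in the opening of the proof of Theorem \ref{thm:opt_L1_bound}. Then, on the $\mathbb{P}$-probability one event where the near-minimizer condition \eqref{eq:empirical_optim} holds, I would rerun the telescoping in \eqref{eq:as_opt_bound} to obtain
\begin{align*}
0\;\leq\;E_{P_X\times P_Y}[g_{\theta^*_{n,m}}]-\inf_{\theta\in\Theta}E_{P_X\times P_Y}[g_\theta]\;\leq\;\phi_-(Z)+\phi_+(Z)+\epsilon_{n,m}^{\text{opt}},
\end{align*}
where $Z\coloneqq(X_1,Y_{1,1},\dots,X_n,Y_{n,m})$ and $\phi_\pm$ are the uniform-deviation quantities appearing in Theorem \ref{thm:ULLN_var_reuse_mean_bound}. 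Separability of $(\Theta,d_\Theta)$, the pointwise Lipschitz property in Assumption \ref{assump:mean_bound}, and dominated convergence let me replace the suprema defining $\phi_\pm$ by suprema over a countable dense $\Theta_0\subset\Theta$, which handles measurability exactly as in the proofs of Lemma \ref{lemma:variable_reuse_expectation_bound} and Theorem \ref{thm:opt_L1_bound}.

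The key step is then to observe that any realization on which the excess risk exceeds $t+2C_{n,m}+\epsilon_{n,m}^{\text{opt}}$ must satisfy $\phi_+(Z)+\phi_-(Z)\geq t+2C_{n,m}$, and hence at least one of the events $\{\phi_+(Z)\geq t/2+C_{n,m}\}$ or $\{\phi_-(Z)\geq t/2+C_{n,m}\}$ must occur. Applying Theorem \ref{thm:ULLN_var_reuse_mean_bound} to each of these events with the threshold $t$ appearing there replaced by $t/2$ bounds each probability by $\exp\bigl(-n\sup_{\lambda\in[0,K_X)}\{\lambda t/2-(\xi_X(\lambda)+m\xi_Y(\lambda/m))\}\bigr)$, and a union bound then combines them into the target inequality. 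The main obstacle is purely bookkeeping rather than conceptual: tracking the union-bound constant of $2$ (which can be harmlessly absorbed into the exponential) and confirming measurability of $\omega\mapsto E_{P_X\times P_Y}[g_{\theta^*_{n,m}(\omega)}]$, which follows from the joint measurability of $g$ together with $\theta^*_{n,m}$ being a $\Theta$-valued random variable. No new concentration machinery beyond what is already established is required.
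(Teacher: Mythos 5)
Your overall strategy (the error decomposition from \eqref{eq:as_opt_bound}, reduction to the uniform deviations $\phi_\pm$, restriction to a countable dense $\Theta_0$) matches the paper up to the final step, but the final step has a genuine gap: the union bound does not give the stated constant. From $\{\phi_++\phi_-\geq t+2C_{n,m}\}\subset\{\phi_+\geq t/2+C_{n,m}\}\cup\{\phi_-\geq t/2+C_{n,m}\}$ and two applications of Theorem \ref{thm:ULLN_var_reuse_mean_bound} you obtain the upper bound
\begin{align*}
2\exp\left(-n\sup_{\lambda\in[0,K_X)}\left\{\lambda t/2-(\xi_X(\lambda)+m\xi_Y(\lambda/m))\right\}\right),
\end{align*}
and the prefactor $2$ cannot be ``harmlessly absorbed into the exponential'': $2e^{-a}>e^{-a}$ for every $a$, and the exponent here is a fixed function of $t$ that you are not free to modify. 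So your argument proves a strictly weaker inequality than the one stated in the theorem (weaker by a factor of $2$, which matters precisely in the regime where the bound is close to $1$).

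The paper avoids this by not splitting the event at all. It observes that the average $\overline{\phi}\coloneqq(\phi_++\phi_-)/2$ satisfies the \emph{same} one-component difference bounds \eqref{eq:phi_bar_bound1}--\eqref{eq:phi_bar_bound2} as each of $\phi_\pm$ individually (an average of two functions each obeying a given difference bound obeys that same bound), and that $E_{P^{n,m}}[\overline{\phi}]\leq C_{n,m}$ by Lemma \ref{lemma:variable_reuse_expectation_bound}. Hence Theorem \ref{thm:McDiarmid_general_mean_bound_method} applies directly to $\overline{\phi}$, giving
\begin{align*}
P^{n,m}\left(\overline{\phi}-E_{P^{n,m}}[\overline{\phi}]\geq t/2\right)\leq \exp\left(-n\sup_{\lambda\in[0,K_X)}\left\{\lambda t/2-(\xi_X(\lambda)+m\xi_Y(\lambda/m))\right\}\right)
\end{align*}
with no factor of $2$, and the event $\{\text{excess risk}\geq t+2C_{n,m}+\epsilon_{n,m}^{\text{opt}}\}$ is contained in $\{\overline{\phi}-E_{P^{n,m}}[\overline{\phi}]\geq t/2\}$. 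If you replace your union-bound step with this single application of the generalized McDiarmid inequality to $\overline{\phi}$, the rest of your argument goes through and yields the theorem as stated.
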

\begin{proof}
The assumptions of Lemma \ref{lemma:variable_reuse_expectation_bound} and Theorem \ref{thm:opt_L1_bound} hold, therefore we combine the  $\mathbb{P}$-a.s. bound \eqref{eq:as_opt_bound}  with the mean bound \eqref{eq:mean_bound} to obtain
\begin{align}\label{eq:opt_prob_bound1}
&\mathbb{P}\left(E_{P_X\times P_Y}[g_{\theta^*_{n,m}}]-\inf_{\theta\in\Theta} E_{P_X\times P_Y}[g_\theta]\geq t+2C_{n,m}+\epsilon_{n,m}^{\text{opt}}\right)\\
\leq &{P^{n,m}}\!\left((\phi_- +\phi_+)/2-E_{P^{n,m}}\!\left[(\phi_-+\phi_+)/2\right]\geq  t/2\right)\,,\label{eq:phi_avg_conc_1}
\end{align}
where $P^{n,m}\coloneqq (P_X\times P_Y^m)^n$,  
\begin{align}
&\phi_\pm(z^n)\coloneqq \sup_{\theta\in\Theta_0}\left\{\pm\left(\frac{1}{nm}\sum_{i=1}^n \sum_{j=1}^m g_{\theta}(x_i,y_{i,j})-E_{P_X\times P_Y}[g_{\theta}] \right)\right\}\,, 
\end{align}
 $z^n\coloneqq(x_1,y_{1,1},...,y_{1,m},...,x_n,y_{n,1},...,y_{n,m})$, and $\Theta_0$ is a countable dense subset of $\Theta$.

While Lemma \ref{lemma:ULLN_var_reuse} does not directly apply to Eq.~\eqref{eq:phi_avg_conc_1}, as the latter involves the average  $\overline{\phi}\coloneqq (\phi_++\phi_-)/2$ and not just one or the other of $\phi_\pm$,  we can still follow the  same proof strategy to arrive at the claimed concentration inequality. We start by obtaining the one-component difference bounds
\begin{align}
&\left|\overline{\phi}(x_1,y^1,...,x_k,y^k,...,x_n,y^n)-\overline{\phi}(x_1,y^1,...,\tilde{x}_k,y^k,...,x_n,y^n)\right|
\leq\frac{1}{nm}\sum_{j=1}^m h_{\mathcal{X}}(x_k,\tilde{x}_k,y_{k,j})\label{eq:phi_bar_bound1}
\end{align}
and
\begin{align}
&\left|\overline{\phi}(x_1,y^1,...,x_k,y^k,...,x_n,y^n)-\overline{\phi}(x_1,y^1,...,x_k,y_{k,1},...,\tilde{y}_{k,\ell},...,y_{k,m},...,x_n,y^n)\right|\label{eq:phi_bar_bound2}\\
\leq&\frac{1}{nm} h_{\mathcal{Y}}(y_{k,\ell},\tilde{y}_{k,\ell})\notag\,.
\end{align}
Recalling the discussion that follows \eqref{eq:one_comp_diff_tildeh_i}, these bounds together with the finiteness and integrability of $h_{\mathcal{X}}$ and $h_{\mathcal{Y}}$ imply that  $\overline{\phi}$ satisfies the integrability conditions required by Theorem \ref{thm:McDiarmid_general_mean_bound_method}.   As in the proof of Lemma \ref{lemma:ULLN_var_reuse}, the one-component mean-difference and MGF bounds required  by Theorem \ref{thm:McDiarmid_general_mean_bound_method}  follow from \eqref{eq:phi_bar_bound1}-\eqref{eq:phi_bar_bound2} together with the bounds
\begin{align}
&\int  \cosh\left(\lambda n^{-1}  E_{P_Y}[h_{\mathcal{X}}(x,\tilde{x},\cdot)]\right)  (P_X\times P_X)(dxd\tilde{x}) \leq e^{\xi_X(|\lambda|/n)}\,,\\
&\int \cosh\left(\lambda  (nm)^{-1} h_{\mathcal{Y}}(y,\tilde{y})\right)  (P_Y\times P_Y)(dyd\tilde{y})
\leq e^{\xi_Y(|\lambda |/(nm))}\,,\notag
\end{align}
which hold for all  $\lambda\in(-nK_X,nK_X)$ by Assumptions  \ref{assump:ULLN}.

Therefore \eqref{eq:new_McDiarmid_conc} from Theorem \ref{thm:McDiarmid_general_mean_bound_method} can be applied to $\overline{\phi}$. After changing variables from $\lambda$ to $n\lambda$ in the supremum, this yields
\begin{align}
&P^{n,m}\left(\overline{\phi}-E_{P^{n,m}}[\overline{\phi}]\geq  s\right)\leq\exp\left(-n\sup_{\lambda\in[0,K_X)}\left\{\lambda s-(\xi_X(\lambda)+m\xi_Y(\lambda/m))\right\}\right)
\end{align}
for all   $s\geq 0$. Letting $s=t/2$ and combining this with  \eqref{eq:opt_prob_bound1}-\eqref{eq:phi_avg_conc_1} completes the proof.
\end{proof}

\section{Applications}\label{sec:applications}  
\subsection{Denoising Score Matching}\label{sec:DSM}
{ Score-based diffusion models utilize a pair of noising/denoising SDEs  in order to train a model capable of (approximately) generating new samples from a target distribution  \cite{songscore}.  More specifically, the goal is to use noised samples from the target distribution to learn an approximation to the so-called score vector field,  $s(x,t)$; the score is used in the  denoising SDE to generate samples.  In practice, the most commonly used formulation of the   score matching problem is  the denoising score matching (DSM)  objective  \cite{vincent2011connection}.  As we will see in Theorem \ref{thm:DSM} below,} DSM is a stochastic optimization problem of the form studied in Theorem \ref{thm:conc_ineq_stoch_opt}, having an unbounded objective function and which also naturally incorporates sample reuse, with the $X$'s  being samples from the target distribution $P_{X_0}$ (with the subscript $0$ used here to emphasize its role as the initial condition for the noising dynamics)  and the $Y$'s being auxiliary Gaussian random variables.  In this section we will show how Theorem \ref{thm:conc_ineq_stoch_opt} can be applied to DSM in order to obtain statistical error bounds, providing a connection between the empirical optimization problem that is used in training and the type of bounds that are assumed  in convergence analyses  \cite{chen2023improved,bentonnearly,li2024towards,li2024sharp,potaptchik2024linear,chen2024equivariant,mimikos2024score}. Such results generally frame their bounds in terms of the score matching error,
\begin{align}\label{eq:score_match_err}
\sum_{j=1}^J \gamma_j{ E}_{ Z_{t_j}}\!\!\left[ \|s^{\theta^*}(\cdot,t_j)-s(\cdot,t_j)\|^2\right]\,,
\end{align}
where $s^{\theta^*}(z,t)$ is the learned score model, $s(z,t)$ is the true score,  $t_j$, $\gamma_j$, $j=1,...,J$ are chosen sequences of timesteps and weights, and $Z_{t_j}$ is the  (linear)  noising dynamics at time $t_j$.  In the following result we use our theory to derive statistical  bounds on the score-matching error \eqref{eq:score_match_err} when the score model $s^{\theta^*}(z,t)$ is trained via empirical DSM.  Despite working under the assumption that the data distribution, $P_{X_0}$, has compact support, our new tools are key for treating the unbounded support of the auxiliary Gaussian random variables that are inherent in the method.

\begin{theorem}\label{thm:DSM}
Suppose we have a data distribution $P_{X_0}$ on $\mathbb{R}^d$ ($\|\cdot\|$ will denote the $2$-norm on $\mathbb{R}^d$),  a non-empty separable pseudometric space $(\Theta,d_\Theta)$  equipped with a $\sigma$-algebra, and a measurable  score model $s:\Theta\times\mathbb{R}^d\times [0,T]\to \mathbb{R}^d$  that satisfy the following.
\begin{enumerate}
\item  We have $q\geq 1$ such that for all $(z,t)$, the map $\theta\mapsto s^\theta(z,t)$ is Lipschitz (with respect to $(d_\Theta,\|\cdot\|)$) with Lipschitz constant $\alpha_t+\beta_t\|z\|^q$.
\item For all $\theta$   we have a linear growth bound $\|s^\theta(z,t)\|\leq a_t+b_t\|z\|$.
\item For all $\theta$ and all $t$, the map $z\mapsto s^\theta(z,t)$ is $L_t$-Lipschitz.
\item  We have $r_0<\infty$ such that $\|x_0\|\leq r_0$ for all $x_0$ in the support of $P_{X_0}$.
\end{enumerate}

Given $c\geq 0$, a continuous noise strength $\sigma_t>0$, and timesteps $t_j$, $j=1,...,J$, define the noising dynamics
$Z_{t}\coloneqq e^{-ct}X_0+\widetilde{\sigma}_{t}Y$, where $X_0\sim P_{X_0}$,  $Y\sim N(0,I_d)$ is independent of $X_0$ and
\begin{align}
\widetilde{\sigma}_t^2\coloneqq\int_0^t e^{-2c(t-s)}\sigma_s^2 ds\,,
\end{align}
along with the  DSM objective
\begin{align}\label{eq:DSM_g_theta}
&g_\theta(x_0,y)\coloneqq \sum_{j=1}^J \gamma_j \|s^{\theta}( e^{-ct_j}x_0+\widetilde{\sigma}_{t_j}y,t_j)+\widetilde{\sigma}^{-1}_{t_j}y\|^2\,.
\end{align}
Suppose  we have  $X_i\sim P_{X_0}$,   and $Y_{i,j}\sim P_Y$, $i=1,...,n$, $j=1,...,m$,  that are all  independent 
 along with a $\Theta$-valued random variable $\theta^*_{n,m}$ that is an approximate minimizer of the empirical DSM problem:
\begin{align}\label{eq:dsm_empirical_optim}
&\frac{1}{nm}\sum_{i=1}^n \sum_{j=1}^m g_{\theta^*_{n,m}}(X_i,Y_{i,j})\leq \inf_{\theta\in\Theta}\frac{1}{nm}\sum_{i=1}^n \sum_{j=1}^m g_{\theta}(X_i,Y_{i,j})+\epsilon_{n,m}^{\text{opt}} \,\,\,\,\,\mathbb{P}\text{-a.s.}\,,
\end{align}
where $\epsilon_{n,m}^{\text{opt}}\geq 0$. 

Then, letting $s(z,t)$ be the true score, for all $\epsilon\geq 0$ we have the following statistical error bounds for DSM:
\begin{align}\label{eq:DSM_conc_ineq}
&\mathbb{P}\left(\sum_{j=1}^J \gamma_j{ E}_{ Z_{t_j}}\!\!\left[ \|s^{\theta^*_{n,m}}(\cdot,t_j)-s(\cdot,t_j)\|^2\right]\geq \epsilon+R_*+2{C}^{\text{DSM}}_{n,m}+\epsilon_{n,m}^{\text{opt}}\right)\\
\leq&\exp\left(-n\sup_{\lambda\in[0,mK_Y)}\left\{\lambda \epsilon/2-A_{X,Y,m}\lambda^2\right\}\right)\,,\notag
\end{align}
where 
\begin{align}
R_*\coloneqq& \inf_{\theta\in\Theta} \sum_{j=1}^J \gamma_j{ E}_{ Z_{t_j}}\!\!\left[ \|s^\theta(\cdot,t_j)-s(\cdot,t_j)\|^2\right]\,,\\
{C}^{\text{DSM}}_{n,m}\coloneqq &2\bigg((1-1/m)\left( A_\Theta+B_\Theta E_{y\sim P_Y}[ \|y\|]+C_\Theta E_{y\sim P_Y}[ \|y\|^q]+D_\Theta E_{y\sim P_Y}[ \|y\|^{q+1}]\right) ^2 \\
&\qquad\qquad\qquad+\frac{1}{m} E_{ y\sim P_Y}\left[(A_\Theta+B_\Theta\|y\|+C_\Theta\|y\|^q+D_\Theta\|y\|^{q+1})^2\right]\bigg)^{1/2}\notag\\
&\times\inf_{\eta>0}\left\{4\eta+1_{\eta<D_\Theta}8\sqrt{2}n^{-1/2}\int_{\eta/2}^{D_\Theta/2}\sqrt{\log N(\epsilon,\Theta,d_\Theta)}d\epsilon\right\}\notag\,,
\end{align}
and the definitions of the various constants are provided in the course of the proof below.

In particular, if $\epsilon\leq 4mK_Y  A_{X,Y,m}$ then the term in the exponent can be simplified via
\begin{align}
&\sup_{\lambda\in[0,mK_Y)}\left\{\lambda \epsilon/2-A_{X,Y,m}\lambda^2\right\}=\frac{\epsilon^2}{16A_{X,Y,m}}\,.\notag
\end{align}
\end{theorem}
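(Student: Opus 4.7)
The plan is to recognize DSM as an instance of the stochastic optimization framework of Theorem \ref{thm:conc_ineq_stoch_opt} via the classical DSM identity. Conditional on $X_0=x_0$, the noised variable $Z_t=e^{-ct}x_0+\widetilde{\sigma}_t Y$ has transition density whose score is $-Y/\widetilde{\sigma}_t$, and a standard computation (expanding the square and using the tower property together with $s(\cdot,t)=\nabla_z\log p_t(\cdot)$) yields
\[
E\!\left[\|s^\theta(Z_t,t)+\widetilde{\sigma}_t^{-1}Y\|^2\right]=E\!\left[\|s^\theta(Z_t,t)-s(Z_t,t)\|^2\right]+R_t,
\]
with $R_t$ independent of $\theta$. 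Summing with weights $\gamma_j$ shows that $E_{P_{X_0}\times P_Y}[g_\theta]$ equals the weighted score-matching error plus a $\theta$-independent constant, so the excess empirical-risk quantity bounded by Theorem \ref{thm:conc_ineq_stoch_opt} coincides exactly with the left-hand side of \eqref{eq:DSM_conc_ineq} minus $R_\ast$.

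Next I verify Assumption \ref{assump:mean_bound}. The $\theta$-independent tail $\sum_j \gamma_j \widetilde{\sigma}_{t_j}^{-2}\|y\|^2$ can be folded into a reference constant $c_\theta$ (or into $h$), so only the $\theta$-dependent part of $g_\theta$ needs to be controlled. Expanding the squared difference gives
\[
|g_{\theta_1}-g_{\theta_2}|\leq \sum_{j=1}^J\gamma_j\!\left(\|s^{\theta_1}(z_{t_j},t_j)\|+\|s^{\theta_2}(z_{t_j},t_j)\|+2\widetilde{\sigma}_{t_j}^{-1}\|y\|\right)\!\|s^{\theta_1}(z_{t_j},t_j)-s^{\theta_2}(z_{t_j},t_j)\|,
\]
with $z_{t_j}=e^{-ct_j}x_0+\widetilde{\sigma}_{t_j}y$. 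Using the bound $\|x_0\|\leq r_0$, the linear growth $\|s^\theta(z,t)\|\leq a_t+b_t\|z\|$, and the $\theta$-Lipschitz constant $\alpha_t+\beta_t\|z\|^q$, I obtain a sample-dependent Lipschitz constant of the form $L(x_0,y)=A_\Theta+B_\Theta\|y\|+C_\Theta\|y\|^q+D_\Theta\|y\|^{q+1}$, whose coefficients are explicit expressions in $r_0$, $\gamma_j$, $a_{t_j}, b_{t_j}, \alpha_{t_j}, \beta_{t_j}, \widetilde{\sigma}_{t_j}$. Substituting $L$ into \eqref{C_nm_def} produces $C^{\text{DSM}}_{n,m}$ exactly as stated.

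I then verify Assumption \ref{assump:ULLN}. For $h_{\mathcal{X}}$, the same expansion applied to an $x_0$-difference, combined with $\|x_0\|,\|\widetilde{x}_0\|\leq r_0$ and the $L_t$-Lipschitz property of $s^\theta$ in $z$, yields an $h_{\mathcal{X}}(x,\widetilde{x},y)$ that is affine in $\|y\|$ with coefficients depending on $r_0$ and on $\sum_j \gamma_j$. Integrating over Gaussian $Y$ leaves $E_{P_Y}[h_{\mathcal{X}}(x,\widetilde{x},\cdot)]$ bounded uniformly in $(x,\widetilde{x})$, so \eqref{eq:xi_X_def} holds with $K_X=\infty$ and a sub-Gaussian $\xi_X(\lambda)=A_X\lambda^2$. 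For $h_{\mathcal{Y}}$, the $y$-difference has a quadratic component coming from the $\widetilde{\sigma}_{t_j}^{-1}y$ term squared; since a polynomial of degree two in a standard Gaussian is sub-exponential, Corollary \ref{cor:cosh_bound_Orlicz_Psi_q} (equivalently \eqref{eq:mgf_bound_subexp}) with $q=1$ gives an MGF bound of the form $\xi_Y(\lambda)=A_Y\lambda^2$ valid for $\lambda\in[0,K_Y)$ for some $K_Y>0$ determined by the sub-exponential norm of $h_{\mathcal{Y}}$. Since $g_\theta\geq 0$ we may take $\psi=0$.

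Applying Theorem \ref{thm:conc_ineq_stoch_opt} with these ingredients gives $\xi_X(\lambda)+m\xi_Y(\lambda/m)=(A_X+A_Y/m)\lambda^2\eqqcolon A_{X,Y,m}\lambda^2$ on $\lambda\in[0,mK_Y)$, which produces \eqref{eq:DSM_conc_ineq}. On the range $\epsilon\leq 4mK_Y A_{X,Y,m}$ the unconstrained maximizer $\lambda^\ast=\epsilon/(4A_{X,Y,m})$ is admissible, giving the closed form $\epsilon^2/(16A_{X,Y,m})$. I expect the principal bookkeeping burden to be tracking the polynomial-in-$\|y\|$ coefficients that enter $L$, $h_{\mathcal{X}}$, and $h_{\mathcal{Y}}$ — in particular, ensuring that the degree-$(q{+}1)$ growth of $L$ in $\|y\|$ only contributes through its second moment under Gaussian $Y$, while the degree-$2$ growth of $h_{\mathcal{Y}}$ forces the sub-exponential (rather than sub-Gaussian) MGF regime for $\xi_Y$. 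This is precisely the non-uniform, locally-Lipschitz situation flagged in the introduction as lying outside the reach of classical McDiarmid, the sub-Gaussian-diameter extension of \cite{kontorovich2014concentration}, and the entropy method of \cite{maurer2021concentration}, so no shortcut around Theorem \ref{thm:McDiarmid_general_mean_bound_method} is available.
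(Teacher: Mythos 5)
Your proposal is correct and follows essentially the same route as the paper: the Vincent identity reduces the score-matching error to the excess risk handled by Theorem \ref{thm:conc_ineq_stoch_opt}, you verify Assumption \ref{assump:mean_bound} with the same polynomial-in-$\|y\|$ Lipschitz constant $L$, and Assumption \ref{assump:ULLN} with the same affine $h_{\mathcal{X}}$ and quadratic $h_{\mathcal{Y}}$, arriving at the same $A_{X,Y,m}$ and the same optimization of the exponent (note only that your ``$K_X=\infty$'' must be truncated to $K_X=mK_Y$ to satisfy point 6 of Assumption \ref{assump:ULLN}, which you effectively do by restricting the supremum to $[0,mK_Y)$, and that the $\theta$-independent $\|y\|^2$ term cannot be absorbed into $c_\theta$ since $c_\theta$ may not depend on $y$ --- though it cancels in the $\theta$-difference anyway). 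The only cosmetic divergence is that you obtain the sub-exponential MGF bound for $h_{\mathcal{Y}}$ via Corollary \ref{cor:cosh_bound_Orlicz_Psi_q} and \eqref{eq:mgf_bound_subexp} rather than the paper's moment-growth argument from \cite{vershynin2018high}; both yield the same quadratic $\xi_Y$ on a bounded interval.
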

\begin{remark}
{ The term $R_*$ is the score-model approximation error.    The statistical error term ${C}^{\text{DSM}}_{n,m}$ can be bounded explicitly in a number of cases.  Specifically, when $\Theta$ is the unit ball in $\mathbb{R}^k$ under some norm  and $d_\Theta$ is the corresponding metric then the entropy integral can be bounded via \eqref{eq:entropy_int_bound_unit_ball_main}, which yields the decay rate $C^{\text{DSM}}_{n,m}=O(n^{-1/2})$. } 
\end{remark}
\begin{proof}
Using the standard method  of rewriting the score-matching objective in DSM form, see \cite{vincent2011connection}, we have
\begin{align}
&\sum_{j=1}^J \gamma_j{ E}_{Z_{t_j}}\left[ \|s^{\theta}(\cdot,t_j)-s(\cdot,t_j)\|^2\right]-R_*=E_{P_{X_0}\times P_Y}\left[g_\theta\right]
-\inf_{\theta\in\Theta}{E}_{P_{X_0}\times P_Y}\left[g_\theta\right]\,,
\end{align}
where the DSM objective $g_\theta$ is given by \eqref{eq:DSM_g_theta}.  

We now show that the assumptions of Theorem \ref{thm:conc_ineq_stoch_opt} hold.  Clearly, Assumption \ref{assump:X_Y} holds for $g_\theta$ and $\theta^*_{n,m}$. Next, by rewriting
\begin{align}
g_\theta(x_0,y)
=& \sum_{j=1}^J \gamma_j \left(\|s^{\theta}( e^{-ct_j}x_0+\widetilde{\sigma}_{t_j}y,t_j)\|^2+2\widetilde{\sigma}^{-1}_{t_j} s^{\theta}( e^{-ct_j}x_0+\widetilde{\sigma}_{t_j}y,t_j)\cdot y+\widetilde{\sigma}^{-2}_{t_j}\|y\|^2\right)
\end{align}
we can compute the growth bound
\begin{align}
&|g_\theta(x_0,y)|\\
\leq&\sum_{j=1}^J \gamma_j \!\left((a_{t_j}+b_{t_j}\| e^{-ct_j}x_0+\widetilde{\sigma}_{t_j}y\|)^2+2\widetilde{\sigma}^{-1}_{t_j}(a_{t_j}+b_{t_j}\| e^{-ct_j}x_0+\widetilde{\sigma}_{t_j}y\|)\|y\|+\widetilde{\sigma}^{-2}_{t_j}\|y\|^2\right)\notag\\
\leq& \sum_{j=1}^J \gamma_j \!\left((a_{t_j}+b_{t_j}( r_0e^{-ct_j}+\widetilde{\sigma}_{t_j}\|y\|)^2+2\widetilde{\sigma}^{-1}_{t_j}(a_{t_j}+b_{t_j}( r_0e^{-ct_j}+\widetilde{\sigma}_{t_j}\|y\|)\|y\|+\widetilde{\sigma}^{-2}_{t_j}\|y\|^2\right)\notag\\
&\coloneqq h(x_0,y)\in L^1(P_{X_0}\times P_Y)\,,\notag
\end{align}
 as well as the Lipschitz bound  
\begin{align}
&|g_{\theta_1}(x_0,y)-g_{\theta_2}(x_0,y)|\\
\leq&\sum_{j=1}^J \gamma_j (\|s^{\theta_1}( e^{-ct_j}x_0+\widetilde{\sigma}_{t_j}y,t_j)\|+\|s^{\theta_2}( e^{-ct_j}x_0+\widetilde{\sigma}_{t_j}y,t_j)\|)\notag\\
&\qquad\times\|s^{\theta_1}( e^{-ct_j}x_0+\widetilde{\sigma}_{t_j}y,t_j)-s^{\theta_2}( e^{-ct_j}x_0+\widetilde{\sigma}_{t_j}y,t_j)\|\notag\\
&+2\sum_{j=1}^J \gamma_j \widetilde{\sigma}^{-1}_{t_j} \|y\| \|s^{\theta_1}( e^{-ct_j}x_0+\widetilde{\sigma}_{t_j}y,t_j)- s^{\theta_2}( e^{-ct_j}x_0+\widetilde{\sigma}_{t_j}y,t_j)\|\notag\\
\leq&2\sum_{j=1}^J \gamma_j( a_{t_j}+b_{t_j}\| e^{-ct_j}x_0+\widetilde{\sigma}_{t_j}y\|+ \widetilde{\sigma}^{-1}_{t_j} \|y\|) ( \alpha_{t_j}+\beta_{t_j}\|  e^{-ct_j}x_0+\widetilde{\sigma}_{t_j}y\|^q)d_\Theta(\theta_1,\theta_2)\notag\\
\leq&2\sum_{j=1}^J \gamma_j( a_{t_j}+r_0b_{t_j}  e^{-ct_j}+(b_{t_j} \widetilde{\sigma}_{t_j}+ \widetilde{\sigma}^{-1}_{t_j})\|y\|) ( \alpha_{t_j}+2^{q-1}\beta_{t_j}(  r_0^qe^{-cqt_j}+\widetilde{\sigma}^q_{t_j}\|y\|^q))d_\Theta(\theta_1,\theta_2)\notag\\
\coloneqq &(A_\Theta+B_\Theta\|y\|+C_\Theta\|y\|^q+D_\Theta\|y\|^{q+1}) d_\Theta(\theta_1,\theta_2)\,.\notag
\end{align}
Therefore Assumption \ref{assump:mean_bound} holds for $g_\theta$ by taking $c_\theta\coloneqq 0$ and  $L(x_0,y)\coloneqq A_\Theta+B_\Theta\|y\|+C_\Theta\|y\|^q+D_\Theta\|y\|^{q+1}$.

Finally, we show that Assumption \ref{assump:ULLN} holds. To do this, first compute
\begin{align}
&|g_\theta(x_0,y)-g_\theta({x}_0,\tilde{y})|\\
\leq&\sum_{j=1}^J \gamma_j  ( \|s^{\theta}( e^{-ct_j}x_0+\widetilde{\sigma}_{t_j}y,t_j)+\widetilde{\sigma}^{-1}_{t_j}y\|+ \|s^{\theta}( e^{-ct_j}x_0+\widetilde{\sigma}_{t_j}\tilde y,t_j)+\widetilde{\sigma}^{-1}_{t_j}\tilde y\|)\notag\\
&\times\|s^{\theta}( e^{-ct_j}x_0+\widetilde{\sigma}_{t_j}y,t_j)-s^{\theta}( e^{-ct_j}x_0+\widetilde{\sigma}_{t_j}\tilde y,t_j)+\widetilde{\sigma}^{-1}_{t_j}(y-\tilde y)\|\notag\\
\leq& \sum_{j=1}^J \gamma_j  ( 2(a_{t_j}+r_0b_{t_j} e^{-ct_j})+(b_{t_j}\widetilde{\sigma}_{t_j}+\widetilde{\sigma}^{-1}_{t_j})(\|y\| +\|\tilde y\|))( L_{t_j}\widetilde{\sigma}_{t_j} 
+\widetilde{\sigma}^{-1}_{t_j})\|y-\tilde y\|\notag\\
&\coloneqq (A_\mathcal{Y}+B_{\mathcal{Y}}(\|y\|+\|\tilde{y}\|))\|y-\tilde{y}\|\coloneqq h_{\mathcal{Y}}(y,\tilde{y})\in L^1(P_Y\times P_Y)\label{eq:DSM_hy_def}
\end{align}
and
\begin{align}
&|g_\theta(x_0,y)-g_\theta(\tilde{x}_0,y)|\\
\leq&  \sum_{j=1}^J \gamma_j (\|s^{\theta}( e^{-ct_j}x_0+\widetilde{\sigma}_{t_j}y,t_j)+\widetilde{\sigma}^{-1}_{t_j}y\|+\|s^{\theta}( e^{-ct_j}\tilde{x}_0+\widetilde{\sigma}_{t_j}y,t_j)+\widetilde{\sigma}^{-1}_{t_j}y\|)\notag\\
&\times\|s^{\theta}( e^{-ct_j}x_0+\widetilde{\sigma}_{t_j}y,t_j)-s^{\theta}( e^{-ct_j}\tilde{x}_0+\widetilde{\sigma}_{t_j}y,t_j)\|\notag\\
\leq& 4 r_0\sum_{j=1}^J \gamma_j  L_{t_j}e^{-ct_j} (a_{t_j}+b_{t_j}e^{-ct_j}r_0+(b_{t_j}\widetilde{\sigma}_{t_j}+\widetilde{\sigma}^{-1}_{t_j})\|y\|)\notag\\
&\coloneqq A_{\mathcal{X}}+B_{\mathcal{X}}\|y\|\coloneqq h_{\mathcal{X}}(x,\tilde{x},y)\in L^1(P_{X_0}\times P_{X_0}\times P_Y)\,. \label{eq:DSM_hX_def}
\end{align}

Letting $d\zeta$ be the uniform distribution on $\{-1,1\}$ we can use the triangle inequality and the Cauchy-Schwarz inequality to compute
\begin{align}
&\left(\int |\zeta h_{\mathcal{Y}}(y,\tilde{y})|^pd\zeta dP_YdP_Y\right)^{1/p}\\
\leq&A_\mathcal{Y}\|\|y-\tilde{y}\|\|_{L^p(P_Y\times P_Y)}+B_{\mathcal{Y}}\|(\|y\|+\|\tilde{y}\|)\|y-\tilde{y}\|\|_{L^p(P_Y\times P_Y)}\notag\\
\leq&2A_\mathcal{Y}\|\|y\|\|_{L^p(P_Y)}+2B_{\mathcal{Y}}\|\|y\|\|_{L^{2p}(P_Y)}\|\|y-\tilde{y}\|\|_{L^{2p}(P_Y\times P_Y)}\notag\\
\leq&2A_\mathcal{Y}\|\|y\|\|_{L^p(P_Y)}+4B_{\mathcal{Y}}\|\|y\|\|_{L^{2p}(P_Y)}^2\,.\notag
\end{align}
It is straightforward to verify that $\|y\|$ is sub-Gaussian under $P_Y$.  Therefore    Proposition 2.5.2 in \cite{vershynin2018high} to implies  existence of $M\in[0,\infty)$ such that
\begin{align}
\left(\int |\zeta h_{\mathcal{Y}}(y,\tilde{y})|^pd\zeta dP_YdP_Y\right)^{1/p}
\leq&2A_\mathcal{Y}M\sqrt{p}+4B_{\mathcal{Y}}\left(M\sqrt{2p}\right)^2\\
\leq&2(A_\mathcal{Y}+4B_{\mathcal{Y}})\max\{M,M^2\}p\,.\notag
\end{align}
Therefore Proposition 2.7.1 in \cite{vershynin2018high}  implies $\zeta h_{\mathcal{Y}}(y,\tilde{y})$ is sub-exponential under $d\zeta dP_YdP_Y$ and  there exists a universal constant $C$ such that
\begin{align}\label{eq:DSM_hY_subexp}
\int \cosh(\lambda h_{\mathcal{Y}}(y,\tilde{y}))(P_Y\times dP_Y)(dyd\tilde{y})=&\int e^{\lambda\zeta h_{\mathcal{Y}}(y,\tilde{y})}d\zeta dP_YdP_Y\\
\leq& e^{4C^2(A_\mathcal{Y}+4B_{\mathcal{Y}})^2\max\{M,M^2\}^2\lambda^2}\notag
\end{align}
for all $|\lambda|\leq 1/(2C(A_\mathcal{Y}+4B_{\mathcal{Y}})\max\{M,M^2\})$, i.e., there exists ${\sigma}_Y$ (depending only on  $P_Y$) such that
\begin{align}
\int e^{\lambda\zeta h_{\mathcal{Y}}(y,\tilde{y})}d\zeta dP_YdP_Y\leq e^{4(A_\mathcal{Y}+4B_{\mathcal{Y}})^2 {\sigma}_Y^2\lambda^2}\coloneqq e^{\xi_Y(|\lambda|)}
\end{align}
for all $|\lambda|\leq 1/(2(A_\mathcal{Y}+4B_{\mathcal{Y}}){\sigma}_Y)\coloneqq K_Y$.

Finally compute
\begin{align}
\int  \cosh\left(\lambda  E_{P_Y}[h_{\mathcal{X}}(x,\tilde{x},\cdot)]\right)  (P_X\times P_X)(dxd\tilde{x}) 
%=&\cosh\left(\lambda  E_{P_Y}[ A_{\mathcal{X}}+B_{\mathcal{X}}\|y\|]\right)  \notag\\
\leq&\exp\left(\lambda^2  ( A_{\mathcal{X}}+B_{\mathcal{X}}E_{P_Y}[\|y\|])^2/2\right)\coloneqq e^{\xi_X(|\lambda|)}
\end{align}
for all $\lambda\in\mathbb{R}$; in particular, we can pick $K_X=mK_Y$. This completes the proof that Assumption   \ref{assump:ULLN} holds for $g_\theta$.  Therefore we can apply Theorem \ref{thm:conc_ineq_stoch_opt}; substituting in the definitions of all the relevant terms and defining
\begin{align}
A_{X,Y,m}\coloneqq  ( A_{\mathcal{X}}+B_{\mathcal{X}}E_{P_Y}[\|y\|])^2/2
+4(A_\mathcal{Y}+4B_{\mathcal{Y}})^2 {\sigma}_Y^2/m\,,
\end{align}
 we arrive at the claimed bound \eqref{eq:DSM_conc_ineq}.
 \end{proof}
We note that  the assumptions  on the score model in Theorem \ref{eq:score_match_err}, specifically the Lipschitz and growth bounds, do not apply to all score models used in practice, though they can be enforced by appropriate activation and architectural choices, including spectral normalization \cite{miyato2018spectral}.   We require these assumptions for two primary reasons: 1) To ensure  $h_{\mathcal{Y}}$, defined in \eqref{eq:DSM_hy_def}, satisfies the  sub-exponential MGF bound \eqref{eq:DSM_hY_subexp}.  2) To enable use of the   Rademacher complexity bound from Theorem \ref{thm:Rademacher_bound_unbounded_support_main}. The assumption that the data distribution has compact support is realistic for some applications (e.g., image generation), though it would certainly be preferable if it could be removed.  Whether the  tools developed here can be  adapted so as to weaken any of the aforementioned assumptions is a question for future work. 

 To the best of the author's knowledge, the present work was the first to  obtain statistical guarantees on DSM in the form of concentration inequalities pertaining to a solution $\theta^*_{n,m}$ to the empirical DSM problem as well as to quantify the benefits of training sample reuse, i.e., the use of many Gaussian samples per training sample.    The inherently unbounded support of the distribution $P_Y$ along with an objective which is unbounded and only locally-Lipschitz  makes this a difficult problem for previous methods, even the more recent works   \cite{kontorovich2014concentration,maurer2021concentration} which do cover some unbounded objectives.  Specifically, both Theorem 1 in \cite{kontorovich2014concentration} and  Theorems 9 and 11   in \cite{maurer2021concentration} require bounds on each component that are uniform in the other variables, while the DSM objective \eqref{eq:DSM_g_theta} does not satisfy this due to  $y$  being inherently unbounded in \eqref{eq:DSM_hX_def}. The  general form of one-component difference bounds treated in our Theorem \ref{thm:McDiarmid_general_mean_bound_method} was key to overcoming this obstacle in Theorem \ref{thm:DSM}.  In addition, the novel Rademacher complexity bound from Theorem \ref{thm:Rademacher_bound_unbounded_support_main} is a crucial tool for obtaining fully explicit results when working with unbounded objective functions, an aspect of the problem that was not considered in  \cite{kontorovich2014concentration,maurer2021concentration}.

\subsection{Generative Adversarial Networks}\label{sec:GANs}
{ Lastly, we show how our results can  be applied to GANs, another class of methods for training a model capable of generating samples from a target distribution.  For simplicity, here we will consider GANs defined in terms of an integral probability metric (IPM), which includes the oft-used Wasserstein GAN \cite{WGAN,wgan:gp,miyato2018spectral}, though other variants are also of interest \cite{GAN,f_GAN,pmlr-v70-arora17a,Bridging_fGan_WGan,pantazis2022cumulant,birrell2022f,birrell2023functionspace}. The goal  here is to solve
\begin{align}\label{eq:IPM_GAN}
\inf_{\theta\in \Theta} d_\Gamma(P_X,(G_\theta)_{\#}P_Y)\,,
\end{align}
where $P_X$ is the distribution of the data, $P_Y$ is a chosen (simple) noise source (e.g., a Gaussian), $\Gamma$ is the chosen space of test functions on $\mathcal{X}$ (i.e., the discriminators),  $(G_\theta)_{\#}$ denotes the pushforward by  the generator, $G_{\theta}:\mathcal{Y}\to\mathcal{X}$, parameterized by $\theta\in\Theta$, and $d_\Gamma$ denotes the $\Gamma$-IPM, defined by
\begin{align}\label{eq:IPM_def}
d_\Gamma(P,Q)\coloneqq \sup_{\gamma\in\Gamma}\{E_P[\gamma]-E_Q[\gamma]\}\,.
\end{align}
An optimal $\theta^*$ in \eqref{eq:IPM_GAN} leads to the corresponding generative model $(G_{\theta^*})_{\#}P_Y$, which can be used as an approximation to the target distribution, $P_X$.

Our goal is to quantify the excess statistical error incurred by (approximately) solving  the empirical GAN problem
\begin{align}\label{eq:empirical_IPM_GAN_intro}
 \inf_{\theta\in\Theta}d_\Gamma(P_{X,n},(G_{\theta})_{\#}P_{Y,n,m})\,,
\end{align} 
 where $P_{X,n}$ and $P_{Y,n,m}$ are the empirical distributions of the independent samples $X_i$, $i=1,...,n$ and  $Y_{i,j}$, $i=1,...,n$, $j=1,...,m$ from $P_X$ and $P_Y$ respectively.  As we will see below, this application is simpler than DSM in some ways, primarily due to the lack of interaction between the $X$ and $Y$ samples in the objective (i.e., the objective is a sum of terms that depend only on $X$ and terms that depend only on $Y$), though  the min-max form of the problems \eqref{eq:IPM_GAN} and \eqref{eq:empirical_IPM_GAN_intro}  is an additional complication, as it is distinct from the problem type  considered in Section \ref{sec:stoch_opt}.  However,   error decomposition arguments similar to those used in earlier analyses of GANs such as  \cite{huang2022error} will allow us to convert to a problem to which our general tools do apply.     The key new ideas required for this conversion are already present in  the context of an easier $L^1$ error bound, which we present first. We work under the following assumptions.
}

\begin{assumption}\label{assump:GAN}
Let $(\mathcal{X},P_X)$, $(\mathcal{Y},P_Y)$ be   probability spaces and $\Phi,\Theta$ be measurable spaces, $d_\Phi$ be a separable pseudo-metric on $\Phi$, $d_{\Phi\times\Theta}$ be a separable pseudometric on $\Phi\times\Theta$. Suppose we have the following:
\begin{enumerate}
\item A measurable map  $\gamma:\Phi\times \mathcal{X}\to\mathbb{R}$ (i.e., the discriminators).
\item A measurable map $G:\Theta\times \mathcal{Y}\to \mathcal{X}$ (i.e., the generator).
\item  $c:\Phi\to\mathbb{R}$, $\psi_{\mathcal{X}}:\mathcal{X}\to[0,\infty)$, $\psi_{\mathcal{Y}}:\mathcal{Y}\to[0,\infty)$ such that $\psi_{\mathcal{X}}\in L^1(P_X)$, $\psi_{\mathcal{Y}}\in L^1(P_Y)$, $|\gamma_{\phi}-c_{\phi}|\leq\psi_{\mathcal{X}}$ for all $\phi\in\Phi$ and $|\gamma_{\phi}\circ G_\theta-c_{\phi}|\leq\psi_{\mathcal{Y}}$  for all ${\phi}\in{\Phi}$, $\theta\in{\Theta}$.
\item  Measurable $L_{\mathcal{X}}:\mathcal{X}\to (0,\infty)$ and $L_{\mathcal{Y}}:\mathcal{Y}\to (0,\infty)$  such that  $\phi\mapsto \gamma_\phi(x)-c_\phi$ is $L_{\mathcal{X}}(x)$-Lipschitz  with respect to $d_\Phi$ and $(\phi,\theta)\mapsto \gamma_\phi\circ G_\theta(y)-c_\phi$ is $L_{\mathcal{Y}}(y)$-Lipschitz  with respect to $d_{\Phi\times\Theta}$.
\item  $C\geq 0$ such that $d_{\Phi\times\Theta}((\phi_1,\theta),(\phi_2,\theta))\leq  Cd_\Phi(\phi_1,\phi_2)$ for all $\phi_1,\phi_2\in\Phi$, $\theta\in\Theta$.\vspace{.5mm}
\item  $\widetilde{C}\geq 0$ such that $d_\Phi(\phi_1,\phi_2)\leq \widetilde{C}  d_{\Phi\times\Theta}((\phi_1,\theta_1),(\phi_2,\theta_2))$ for all $\phi_1,\phi_2\in\Phi$, $\theta_1,\theta_2\in\Theta$.
\item A  probability space $(\Omega,\mathbb{P})$, $n,m\in\mathbb{Z}^+$,     and  random variables $X_i:\Omega\to\mathcal{X}$, $i=1,..,n$, $Y_{i,j}:\Omega\to\mathcal{Y}$, $i=1,...,n$, $j=1,...,m$ that are all  independent  and with $X_i\sim P_X$  and $Y_{i,j}\sim P_Y$. Denote the corresponding empirical distributions by  $P_{X,n}\coloneqq \frac{1}{n}\sum_{i=1}^n \delta_{X_i}$ and  $P_{Y,n,m}\coloneqq \frac{1}{nm}\sum_{i=1}^n\sum_{j=1}^m \delta_{Y_{i,j}}$.
\item  $\epsilon_{n,m}^{\text{opt}}\geq 0$ and a  $\Theta$-valued random variable   $\theta^*_{n,m}$    that satisfies
\begin{align}\label{eq:empirical_IPM_GAN}
d_{\Gamma_\Phi}(P_{X,n},(G_{\theta^*_{n,m}})_{\#}P_{Y,n,m})\leq \inf_{\theta\in\Theta}d_{\Gamma_\Phi}(P_{X,n},(G_{\theta})_{\#}P_{Y,n,m}) +\epsilon_{n,m}^{\text{opt}} \,\,\,\mathbb{P}\text{-a.s.}\,,
\end{align}
where  $\Gamma_\Phi\coloneqq\{\gamma_\phi:\phi\in\Phi\}$ and $d_{\Gamma_\Phi}$ is as defined in \eqref{eq:IPM_def}.
\end{enumerate}
\end{assumption}

\begin{theorem}\label{thm:GAN_mean_bound}
 Under Assumption \ref{assump:GAN},  we have the $L^1$  error bound
\begin{align} 
& \mathbb{E}\left[\left|d_{\Gamma_\Phi}(P_X,(G_{\theta^*_{n,m}})_{\#}P_Y)-\inf_{\theta\in\Theta} d_{\Gamma_\Phi}(P_X,(G_\theta)_{\#}P_Y)\right|\right]\leq 2C_{n,m}^{\text{GAN}}+\epsilon_{n,m}^{\text{opt}}\,,
\end{align}
where
\begin{align}\label{eq:Cnm_GAN}
C_{n,m}^{\text{GAN}}\coloneqq &2\left((1-1/m)\int  E_{P_Y(dy)}[ L(x,y)] ^2 P_X(dx)+\frac{1}{m} E_{P_X\times P_Y}[L(x,y)^2]\right)^{1/2}\\
&\times\inf_{\eta>0}\left\{4\eta+1_{\eta<D_{\Phi\times \Theta}}8\sqrt{2}n^{-1/2}\int_{\eta/2}^{D_{\Phi\times \Theta}/2}\sqrt{\log N(\epsilon,\Phi\times \Theta,d_{\Phi\times \Theta})}d\epsilon\right\}\,,\notag
\end{align} 
 $L(x,y)\coloneqq \widetilde{C} L_{\mathcal{X}}(x)  +L_{\mathcal{Y}}(y)$, and  the  diameter of     $\Phi\times\Theta$ is denoted by
\begin{align}
D_{\Phi\times \Theta}\coloneqq\sup_{(\phi_1,\theta_1),(\phi_2,\theta_2)\in\Phi\times\Theta}d_{\Phi\times\Theta}((\phi_1,\theta_1),(\phi_2,\theta_2))\,.
\end{align}

\end{theorem}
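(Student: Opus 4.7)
The plan is to mimic the proof of Theorem \ref{thm:opt_L1_bound}, replacing the scalar objective with the joint objective $g_{\phi,\theta}(x,y)\coloneqq \gamma_\phi(x)-\gamma_\phi(G_\theta(y))$ indexed by the product pseudometric space $(\Phi\times\Theta, d_{\Phi\times\Theta})$, and then invoking the mean bound in Lemma \ref{lemma:variable_reuse_expectation_bound} exactly once. The nontrivial work is to convert the excess IPM-risk, which has a min-max structure, into a single uniform deviation of the empirical average of $g_{\phi,\theta}$ from its expectation.

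First I would write $F(\theta)\coloneqq d_{\Gamma_\Phi}(P_X,(G_\theta)_\#P_Y)$ and $\hat F(\theta)\coloneqq d_{\Gamma_\Phi}(P_{X,n},(G_\theta)_\#P_{Y,n,m})$. Since the excess risk $F(\theta^*_{n,m})-\inf_\theta F(\theta)$ is a.s.\ nonnegative, the absolute value on the left-hand side of the claim can be dropped. Adding and subtracting $\hat F(\theta^*_{n,m})$ and using \eqref{eq:empirical_IPM_GAN} gives, $\mathbb{P}$-a.s.,
\begin{align*}
F(\theta^*_{n,m})-\inf_\theta F(\theta) \leq \sup_{\theta\in\Theta}[F(\theta)-\hat F(\theta)] + \sup_{\theta\in\Theta}[\hat F(\theta)-F(\theta)] + \epsilon_{n,m}^{\text{opt}}.
\end{align*}
Since each $F(\theta),\hat F(\theta)$ is itself a supremum over $\phi\in\Phi$, the elementary inequality $\sup_\phi U_\phi-\sup_\phi V_\phi\leq \sup_\phi(U_\phi-V_\phi)$ collapses each outer term into a supremum over $(\phi,\theta)$ of the signed quantity
\begin{align*}
D_{\phi,\theta} \coloneqq \frac{1}{nm}\sum_{i=1}^n\sum_{j=1}^m g_{\phi,\theta}(X_i,Y_{i,j}) - E_{P_X\times P_Y}[g_{\phi,\theta}],
\end{align*}
where independence of the $X_i$'s and $Y_{i,j}$'s lets the product-measure expectation split as $E_{P_X}[\gamma_\phi]-E_{P_Y}[\gamma_\phi\circ G_\theta]$.

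Next I would verify that $\{g_{\phi,\theta}\}_{(\phi,\theta)\in\Phi\times\Theta}$ satisfies Assumption \ref{assump:mean_bound} on $(\Phi\times\Theta,d_{\Phi\times\Theta})$. Taking $c_{\phi,\theta}\coloneqq 0$ and $h(x,y)\coloneqq\psi_\mathcal{X}(x)+\psi_\mathcal{Y}(y)\in L^1(P_X\times P_Y)$ handles the envelope condition. For the Lipschitz condition I decompose $g_{\phi,\theta}(x,y)=(\gamma_\phi(x)-c_\phi)-(\gamma_\phi(G_\theta(y))-c_\phi)$ so that the unbounded $c_\phi$ pieces cancel, and apply the two Lipschitz hypotheses of Assumption \ref{assump:GAN} to get
\begin{align*}
|g_{\phi_1,\theta_1}(x,y)-g_{\phi_2,\theta_2}(x,y)| \leq L_\mathcal{X}(x)d_\Phi(\phi_1,\phi_2) + L_\mathcal{Y}(y)d_{\Phi\times\Theta}((\phi_1,\theta_1),(\phi_2,\theta_2)).
\end{align*}
The hypothesis $d_\Phi(\phi_1,\phi_2)\leq\widetilde C\, d_{\Phi\times\Theta}((\phi_1,\theta_1),(\phi_2,\theta_2))$ then converts the first term into a $d_{\Phi\times\Theta}$-Lipschitz bound, producing the advertised sample-dependent Lipschitz constant $L(x,y)=\widetilde C L_\mathcal{X}(x)+L_\mathcal{Y}(y)$. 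Separability of $d_{\Phi\times\Theta}$ combined with dominated convergence reduces each supremum over $\Phi\times\Theta$ to a countable dense subset, Lemma \ref{lemma:variable_reuse_expectation_bound} bounds each signed supremum in expectation by $C_{n,m}^{\text{GAN}}$, and summing the two signs gives the claimed $2C_{n,m}^{\text{GAN}}+\epsilon_{n,m}^{\text{opt}}$ bound.

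The main obstacle is the conversion from the min-max structure to a single uniform-in-parameters deviation so that Lemma \ref{lemma:variable_reuse_expectation_bound} can be applied \emph{once}, rather than applied to the discriminator class and the generator class separately (which would waste the sample-reuse structure built into the lemma). The hypothesis involving $\widetilde C$ in Assumption \ref{assump:GAN} is precisely what makes this work, since it upgrades a $d_\Phi$-Lipschitz bound on the discriminator family to a $d_{\Phi\times\Theta}$-Lipschitz bound compatible with the product parametrization. The reciprocal constant $C$ does not appear to play a role in this $L^1$ result; I would expect it to matter only when deriving a companion concentration inequality, where bounds on one-component differences must be translated between the two pseudometrics in order to invoke Assumption \ref{assump:ULLN}.
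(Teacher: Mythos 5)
Your proposal is correct and follows essentially the same route as the paper: the same error decomposition collapsing the min-max structure into two uniform deviations over $\Phi\times\Theta$, the same verification of Assumption \ref{assump:mean_bound} with $c_{\phi,\theta}=0$, envelope $\psi_{\mathcal{X}}+\psi_{\mathcal{Y}}$, and Lipschitz constant $L(x,y)=\widetilde{C}L_{\mathcal{X}}(x)+L_{\mathcal{Y}}(y)$, followed by a single application of Lemma \ref{lemma:variable_reuse_expectation_bound} to each sign. One minor caveat: the constant $C$ does enter the paper's argument for this $L^1$ result, but only to establish measurability of $\theta\mapsto d_{\Gamma_\Phi}(P_X,(G_\theta)_{\#}P_Y)$ by reducing the inner supremum over $\Phi$ to a countable dense subset.
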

\begin{remark}\label{remark:GAN_error}
As with the earlier results, the  entropy integral can be bounded in various cases; e.g., when \eqref{eq:entropy_int_bound_unit_ball_main} is applicable one obtains a $C^{\text{GAN}}_{n,m}=O(n^{-1/2})$ bound, provided both $L_{\mathcal{X}}$ and $L_{\mathcal{Y}}$ are square integrable.
\end{remark}
\begin{proof}
We have  $|\gamma_\phi|\leq \psi_{\mathcal{X}}+c_\phi$ and $|\gamma_\phi\circ G_\theta|\leq \psi_{\mathcal{Y}}+c_\phi$, therefore ${\Gamma_\Phi}\subset L^1(P_X)\cap L^1((G_\theta)_{\#}P_Y)$ for all $\theta\in\Theta$.   Moreover,
\begin{align}
|E_{P_X}[\gamma_\phi]-E_{(G_\theta)_{\#}P_Y}[\gamma_\phi]|
\leq&E_{P_X}[|\gamma_\phi-c_\phi|]+E_{(G_\theta)_{\#}P_Y}[|\gamma_\phi-c_\phi|]\\
\leq&E_{P_X}[\psi_{\mathcal{X}}]+E_{P_Y}[\psi_{\mathcal{Y}}]<\infty\,.\notag
\end{align}
Therefore $|d_{\Gamma_\Phi}(P_X,(G_\theta)_{\#}P_Y)|\leq E_{P_X}[\psi_{\mathcal{X}}]+E_{P_Y}[\psi_{\mathcal{Y}}]<\infty$  for all $\theta$ and $\inf_\theta d_{\Gamma_\Phi}(P_X,(G_\theta)_{\#}P_Y)$ is finite.    Letting $\Phi_0$ be  a countable  dense subset of $\Phi$, for $\phi\in\Phi$ and $\phi_j\in\Phi_0$ with $d_\Phi(\phi_j,\phi)\to 0$ we have 
\begin{align}
&|\gamma_{\phi_j}-c_{\phi_j}-(\gamma_\phi-c_\phi)|\leq L_\mathcal{X}d_\Phi(\phi_j,\phi)\to0 \,\,\,\,\text{ as $j\to\infty$,}\\
&|\gamma_{\phi_j}\circ G_\theta-c_{\phi_j}-(\gamma_\phi\circ G_\theta-c_\phi)|\leq L_\mathcal{Y} d_{\Phi\times \Theta}((\phi_j,\theta),(\phi,\theta))\leq  L_\mathcal{Y} C d_{\Phi}(\phi_j,\phi)\to0 \,\,\,\,\text{ as $j\to\infty$.}\notag
\end{align}
Together with the bounds $|\gamma_{\phi}-c_{\phi}|\leq\psi_{\mathcal{X}}$, $|\gamma_{\phi}\circ G_\theta-c_{\phi}|\leq\psi_{\mathcal{Y}}$, the  dominated convergence theorem implies  
\begin{align}d_{\Gamma_\Phi}(P_X,(G_\theta)_{\#}P_Y)= d_{\Gamma_0}(P_X, (G_\theta)_{\#}P_Y)\,,
\end{align}
where $\Gamma_0\coloneqq\{\gamma_\phi:\phi\in\Phi_0\}$. Therefore  $\theta\to d_{\Gamma_\Phi}(P_X,(G_\theta)_{\#}P_Y)$  is measurable. Similarly, we  see that $d_{\Gamma_\Phi}(P_{X,n},(G_{\theta})_{\#}P_{Y,n,m})$ is finite for all $\theta$ and $\inf_{\theta\in\Theta}d_{\Gamma_\Phi}(P_{X,n},(G_{\theta})_{\#}P_{Y,n,m})$ is finite.

By a computation similar to the error decomposition in \cite{huang2022error}, we have the a.s. bound
\begin{align}\label{eq:GAN_error_decomp}
0\leq& d_{\Gamma_\Phi}(P_X,(G_{\theta^*_{n,m}})_{\#}P_Y)-\inf_{\theta\in\Theta} d_{\Gamma_\Phi}(P_X,(G_\theta)_{\#}P_Y)\\
\leq&d_{\Gamma_\Phi}(P_X,(G_{\theta^*_{n,m}})_{\#}P_Y)-d_{\Gamma_\Phi}(P_{X,n},(G_{\theta^*_{n,m}})_{\#}P_{Y,n,m})\notag\\
&+ \inf_{\theta\in\Theta}d_{\Gamma_\Phi}(P_{X,n},(G_{\theta})_{\#}P_{Y,n,m}) -\inf_{\theta\in\Theta} d_{\Gamma_\Phi}(P_X,(G_\theta)_{\#}P_Y) +\epsilon_{n,m}^{\text{opt}}\notag\\
\leq&\sup_{\theta\in\Theta}\left\{d_{\Gamma_\Phi}(P_X,(G_{\theta})_{\#}P_Y)-d_{\Gamma_\Phi}(P_{X,n},(G_{\theta})_{\#}P_{Y,n,m})\right\}\notag\\
&+ \sup_{\theta\in\Theta}\left\{d_{\Gamma_\Phi}(P_{X,n},(G_{\theta})_{\#}P_{Y,n,m}) - d_{\Gamma_\Phi}(P_X,(G_\theta)_{\#}P_Y)\right\} +\epsilon_{n,m}^{\text{opt}}\notag\\
\leq&\sup_{(\phi,\theta)\in\Phi\times\Theta}\left\{-\left(\frac{1}{nm}\sum_{i=1}^n\sum_{j=1}^m g_{\phi,\theta}(X_i,Y_{i,j})- E_{P_X\times P_Y}[g_{\phi,\theta}]\right)\right\}\notag\\
&+ \sup_{(\phi,\theta)\in\Phi\times\Theta}\left\{\frac{1}{nm}\sum_{i=1}^n\sum_{j=1}^m g_{\phi,\theta}(X_i,Y_{i,j})- E_{P_X\times P_Y}[g_{\phi,\theta}]\right\} +\epsilon_{n,m}^{\text{opt}}\notag\,,
\end{align}
where $g_{\phi,\theta}(x,y)\coloneqq\gamma_\phi(x)- \gamma_\phi\circ G_\theta(y)$ is measurable and satisfies
\begin{align}
|g_{\phi,\theta}(x,y)|\leq |\gamma_{\phi}(x)-c_{\phi}|+|\gamma_{\phi}\circ G_{\theta}(y)-c_{\phi}|\leq \psi_\mathcal{X}(x)+\psi_{\mathcal{Y}}(y)\in L^1(P_X\times P_Y)
\end{align}
and
\begin{align}\label{eq:g_phi_theta_Lipschitz}
&  |g_{\phi_1,\theta_1}(x,y)-g_{\phi_2,\theta_2}(x,y)|\\
\leq&    |\gamma_{\phi_1}(x)-c_{\phi_1}-(\gamma_{\phi_2}(x)-c_{\phi_2})|+| \gamma_{\phi_1}\circ G_{\theta_1}(y)-c_{\phi_1}-( \gamma_{\phi_2}\circ G_{\theta_2}(y)-c_{\phi_2})|\notag\\
%\leq&  L_{\mathcal{X}}(x)d_\Phi(\phi_1,\phi_2)  +L_{\mathcal{Y}}(y)d_{\Phi\times\Theta}((\phi_1,\theta_1),(\phi_2,\theta_2)) \notag\\
\leq&\left(\widetilde{C} L_{\mathcal{X}}(x)  +L_{\mathcal{Y}}(y)\right)d_{\Phi\times\Theta}((\phi_1,\theta_1),(\phi_2,\theta_2)) \notag\,.
\end{align}
Let $(\Phi\times\Theta)_0$ be a countable dense subset of $\Phi\times\Theta$. For $(\phi,\theta)\in\Phi\times\Theta$ let $(\phi_j,\theta_j)$ be a sequence in  $(\Phi\times\Theta)_0$ that satisfies  $d_{\Phi\times\Theta}((\phi_j,\theta_j),(\phi,\theta))\to 0$.  Then \eqref{eq:g_phi_theta_Lipschitz} implies
\begin{align}
&|g_{\phi,\theta}(x,y)-g_{\phi_j,\theta_j}(x,y)|\to 0\,\,\,\text{ as $j\to\infty$} 
\end{align}
and hence the dominated convergence theorem implies the suprema on the last two lines of \eqref{eq:GAN_error_decomp} can be restricted to  $(\Phi\times\Theta)_0$ and, after taking expected values, we have
\begin{align}\label{eq:GAN_error_decomp_expected_val}
& \mathbb{E}\left[\left|d_{\Gamma_\Phi}(P_X,(G_{\theta^*_{n,m}})_{\#}P_Y)-\inf_{\theta\in\Theta} d_{\Gamma_\Phi}(P_X,(G_\theta)_{\#}P_Y)\right|\right]\\
\leq&\mathbb{E}\left[\sup_{(\phi,\theta)\in(\Phi\times\Theta)_0}\left\{-\left(\frac{1}{nm}\sum_{i=1}^n\sum_{j=1}^m g_{\phi,\theta}(X_i,Y_{i,j})- E_{P_X\times P_Y}[g_{\phi,\theta}]\right)\right\}\right]\notag\\
&+ \mathbb{E}\left[\sup_{(\phi,\theta)\in(\Phi\times\Theta)_0}\left\{\frac{1}{nm}\sum_{i=1}^n\sum_{j=1}^m g_{\phi,\theta}(X_i,Y_{i,j})- E_{P_X\times P_Y}[g_{\phi,\theta}]\right\}\right] +\epsilon_{n,m}^{\text{opt}}\notag\,.
\end{align}

 Thus we have shown all conditions in Assumption \ref{assump:mean_bound} hold, with $L(x,y)\coloneqq \widetilde{C} L_{\mathcal{X}}(x)  +L_{\mathcal{Y}}(y)$ and $\Theta$ replaced by $\Phi\times\Theta$.  Therefore  the claimed result follows from   Lemma \ref{lemma:variable_reuse_expectation_bound}.
\end{proof}
{ A key feature of Theorem \ref{thm:GAN_mean_bound} is the relatively weak assumptions made on the the distributions $P_X$ and $P_Y$, despite allowing for unbounded discriminators.  In particular, it applies to  heavy-tailed data so long as the discriminators, and their composition with the generator,  are bounded by appropriate integrable envelope functions and satisfy appropriate sample-dependent Lipschitz conditions with square-integrable Lipschitz coefficients. This is in contrast to   the $L^1$ error bound for GANs from Theorem 22 of \cite{huang2022error}  which requires the  data distribution to be sub-exponential.  }
 
Finally, we derive a   concentration inequality for the GAN statistical error.  Here we also illustrate the use of Orlicz-norm bounds, as discussed in Section \ref{sec:Orlicz}.  
\begin{theorem}\label{thm:GAN_conc_ineq}
Under Assumption \ref{assump:GAN}, suppose also that:
\begin{enumerate}
\item We have $h_{\mathcal{X}}\in L^1(P_X\times P_X)$ such that $|\gamma_\phi(x)-\gamma_\phi(\tilde{x})|\leq h_{\mathcal{X}}(x,\tilde{x})<\infty$ for all $x,\tilde{x}\in\mathcal{X}$, $\phi\in\Phi$.
\item We have $h_{\mathcal{Y}}\in L^1(P_Y\times P_Y)$  such that $|\gamma_\phi\circ G_\theta(y)-\gamma_\phi\circ G_\theta(\tilde{y})|\leq h_{\mathcal{Y}}(y,\tilde{y})<\infty$ for all $y,\tilde{y}\in\mathcal{Y}$, $\phi\in\Phi$, $\theta\in\Theta$.
\item  $q_{X}\in[1,\infty)$ and we have the Orlicz norm bound $\|h_{\mathcal{X}}\|_{\Psi_{q_X},P_X^2}\leq C_X<\infty$.
\item $q_{Y}\in[1,\infty)$ and we have the Orlicz norm bound $\|h_{\mathcal{Y}}\|_{\Psi_{q_Y},P_Y^2}\leq C_Y<\infty$.
\end{enumerate}

Then for all $t\geq 0$ we have
\begin{align}\label{eq:GAN_conc_ineq_main}
&\mathbb{P}\left(d_{\Gamma_\Phi}(P_X,(G_{\theta^*_{n,m}})_{\#}P_Y)-\inf_{\theta\in\Theta} d_{\Gamma_\Phi}(P_X,(G_\theta)_{\#}P_Y)\geq t+2C_{n,m}^{\text{GAN}}+\epsilon_{n,m}^{\text{opt}}\right)\\
\leq & \exp\left(-n\sup_{\lambda\in[0,\infty)}\left\{\lambda t/2-(\xi_X(\lambda)+m\xi_Y(\lambda/m)) \right\}\right)\,,\notag
\end{align}
where $C_{n,m}^{\text{GAN}}$ was defined in Eq.~\eqref{eq:Cnm_GAN} and
\begin{align}
\xi_X(\lambda)\coloneqq& \log\left(1+2\sum_{k=1}^\infty (C_X\lambda)^{2k}  \frac{\Gamma(2k/q_X+1) }{(2k)!}\right)\,, \label{eq:GAN_xi_X_def_main}\\\xi_Y(\lambda)\coloneqq & \log\left(1+2\sum_{k=1}^\infty (C_Y\lambda)^{2k}  \frac{\Gamma(2k/q_Y+1) }{(2k)!}\right)\,.\label{eq:GAN_xi_Y_def_main}
\end{align}
\end{theorem}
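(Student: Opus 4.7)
The plan is to follow the template of the proof of Theorem \ref{thm:conc_ineq_stoch_opt}: combine the almost-sure error decomposition already derived for Theorem \ref{thm:GAN_mean_bound} with the mean bound from Lemma \ref{lemma:variable_reuse_expectation_bound} and the general concentration inequality of Theorem \ref{thm:McDiarmid_general_mean_bound_method}, plugging in the MGF bounds supplied by Corollary \ref{cor:cosh_bound_Orlicz_Psi_q}.

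First, I would recall from the proof of Theorem \ref{thm:GAN_mean_bound} that on a $\mathbb{P}$-probability one set,
\begin{align*}
d_{\Gamma_\Phi}(P_X,(G_{\theta^*_{n,m}})_{\#}P_Y)-\inf_{\theta\in\Theta} d_{\Gamma_\Phi}(P_X,(G_\theta)_{\#}P_Y)\leq \phi_+ +\phi_- +\epsilon_{n,m}^{\text{opt}},
\end{align*}
where $\phi_\pm$ are the centred uniform-deviation suprema of $g_{\phi,\theta}(x,y)=\gamma_\phi(x)-\gamma_\phi\circ G_\theta(y)$ over a countable dense subset $(\Phi\times\Theta)_0$ of $\Phi\times\Theta$. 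Lemma \ref{lemma:variable_reuse_expectation_bound} applied with parameter space $\Phi\times\Theta$ (as in the proof of Theorem \ref{thm:GAN_mean_bound}) gives $E_{P^{n,m}}[\phi_+],\, E_{P^{n,m}}[\phi_-]\leq C_{n,m}^{\text{GAN}}$. Setting $\bar{\phi}\coloneqq(\phi_++\phi_-)/2$, a short calculation reduces the probability in \eqref{eq:GAN_conc_ineq_main} to $\mathbb{P}(\bar{\phi}-E_{P^{n,m}}[\bar{\phi}]\geq t/2)$, and I would bound the latter via Theorem \ref{thm:McDiarmid_general_mean_bound_method}.

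Second, I would verify the hypotheses of Theorem \ref{thm:McDiarmid_general_mean_bound_method} for $\bar{\phi}$. The key observation, which makes GANs easier than DSM here, is that $g_{\phi,\theta}$ is additively separable in $(x,y)$: a change in $x_k$ affects only the $\gamma_\phi(x_k)$ term and a change in $y_{k,\ell}$ affects only the $\gamma_\phi\circ G_\theta(y_{k,\ell})$ term. Together with the envelope and Lipschitz hypotheses in Assumption \ref{assump:GAN} and the newly added envelopes $h_\mathcal{X},h_\mathcal{Y}$, this yields one-component difference bounds
\begin{align*}
\tfrac{1}{n}h_\mathcal{X}(x_k,\tilde{x}_k)\ \text{ and }\ \tfrac{1}{nm}h_\mathcal{Y}(y_{k,\ell},\tilde{y}_{k,\ell})
\end{align*}
that do not depend on any later variable, so the mean-difference condition \eqref{eq:mean_diff_bound_assump} is immediate. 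Integrability of $\bar{\phi}$ follows from the envelope $|g_{\phi,\theta}|\leq \psi_\mathcal{X}+\psi_\mathcal{Y}\in L^1(P_X\times P_Y)$.

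Third, the MGF conditions \eqref{eq:cosh_MGF_bound} at the required scales $\lambda/n$ and $\lambda/(nm)$ are supplied by Corollary \ref{cor:cosh_bound_Orlicz_Psi_q}: applied to $h_\mathcal{X}$ under $P_X\times P_X$ with $q=q_X$ and $\|h_\mathcal{X}\|_{\Psi_{q_X}}\leq C_X$, it produces precisely the function $\xi_X$ of \eqref{eq:GAN_xi_X_def_main} as the MGF exponent, and analogously for $h_\mathcal{Y}$ and $\xi_Y$. Aggregating the contributions of the $n$ $x$-variables and $nm$ $y$-variables in \eqref{eq:new_McDiarmid_MGF} produces the exponent $n\xi_X(\lambda/n)+nm\xi_Y(\lambda/(nm))$; the change of variables $\lambda\mapsto n\lambda$ in the resulting Chernoff supremum then gives $-n\sup_{\lambda\geq 0}\{\lambda t/2-(\xi_X(\lambda)+m\xi_Y(\lambda/m))\}$, matching \eqref{eq:GAN_conc_ineq_main}. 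The main obstacle is the minor but easy-to-mis-handle bookkeeping of the $1/n$ and $1/(nm)$ prefactors so that the exponents combine into exactly $\xi_X(\lambda)+m\xi_Y(\lambda/m)$ after rescaling; measurability of the suprema and reduction to the countable dense subset $(\Phi\times\Theta)_0$ are inherited from the proof of Theorem \ref{thm:GAN_mean_bound}, so no new technical issues arise on that front.
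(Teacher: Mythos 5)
Your proposal is correct and follows essentially the same route as the paper's proof: the same error decomposition and mean bound from Theorem \ref{thm:GAN_mean_bound}, the same reduction to $\overline{\phi}=(\phi_++\phi_-)/2$ with the one-component difference bounds $\frac{1}{n}h_{\mathcal{X}}$ and $\frac{1}{nm}h_{\mathcal{Y}}$, the same use of Corollary \ref{cor:cosh_bound_Orlicz_Psi_q} to produce $\xi_X$ and $\xi_Y$, and the same $\lambda\mapsto n\lambda$ rescaling in the Chernoff supremum. Your observation that the additive separability of $g_{\phi,\theta}$ makes the difference bounds independent of later variables matches the paper's own remark on why the GAN case is simpler than DSM.
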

\begin{remark}
The term $\inf_{\theta\in\Theta} d_{\Gamma_\Phi}(P_X,(G_\theta)_{\#}P_Y)$ is the approximation error inherent in the chosen generator family and noise source.  Thus the above concentration inequality provides a high-probability  bound on the excess statistical error; see also Remark \ref{remark:GAN_error}. Note that $m\xi_Y(\lambda/m)=O(1/m)$ and so the term on the right-hand side of \eqref{eq:GAN_conc_ineq_main} coming from the $Y$ samples is negligible when $m$ is large.   The right-hand side of \eqref{eq:GAN_conc_ineq_main} can be simplified by mirroring the steps used to obtain Corollary \ref{cor:subexp_subGauss_result}. Note that $\Gamma$ in \eqref{eq:GAN_xi_X_def_main}-\eqref{eq:GAN_xi_Y_def_main} denotes the gamma function, not to be confused with the space of discriminators, $\Gamma_\Phi$.
\end{remark}
\begin{proof}
Define  $g_{\phi,\theta}(x,y)\coloneqq\gamma_\phi(x)- \gamma_\phi\circ G_\theta(y)$,  $z^n\coloneqq(x_1,y_{1,1},...,y_{1,m},...,x_n,y_{n,1},...,y_{n,m})$,  and
\begin{align}
&\phi_\pm(z^n) \coloneqq  \sup_{(\phi,\theta)\in(\Phi\times\Theta)_0}\left\{\pm\left(\frac{1}{nm}\sum_{i=1}^n\sum_{j=1}^m g_{\phi,\theta}(x_i,y_{i,j})- E_{P_X\times P_Y}[g_{\phi,\theta}]\right)\right\}
\end{align}
From the proof of Theorem \ref{thm:GAN_mean_bound}, we have $E_{P^{n,m}}[\phi_\pm]\leq C^{\text{GAN}}_{n,m}$.  Combining this with the error decomposition \eqref{eq:GAN_error_decomp} and defining $P^{n,m}\coloneqq (P_X\times P_Y^m)^n$, $\overline{\phi}\coloneqq (\phi_++\phi_-)/2$, we can compute
\begin{align}
&\mathbb{P}\left(d_{\Gamma_\Phi}(P_X,(G_{\theta^*_{n,m}})_{\#}P_Y)-\inf_{\theta\in\Theta} d_{\Gamma_\Phi}(P_X,(G_\theta)_{\#}P_Y)\geq t+2C_{n,m}^{\text{GAN}}+\epsilon_{n,m}^{\text{opt}}\right)\\
\leq &{P}^{n,m}\left(\phi_++\phi_- \geq t+2C_{n,m}^{\text{GAN}}\right)\notag\\
\leq &{P}^{n,m}\left(\overline{\phi} \geq t/2+E_{P^{n,m}}[\overline{\phi}]\right)\,.\notag
\end{align}
Next compute the one-component difference bounds
\begin{align}\label{eq:phi_bar_GAN_bound1}
&\left|\overline{\phi}(x_1,y^1,...,x_k,y^k,...,x_n,y^n)-\overline{\phi}(x_1,y^1,...,\tilde{x}_k,y^k,...,x_n,y^n)\right|\leq\frac{1}{n}h_{\mathcal{X}}(x_k,\tilde{x}_k)
\end{align}
and
\begin{align}\label{eq:phi_bar_GAN_bound2}
&\left|\overline{\phi}(x_1,y^1,...,x_k,y^k,...,x_n,y^n)-\overline{\phi}(x_1,y^1,...,x_k,y_{k,1},...,\tilde{y}_{k,\ell},...,y_{k,m},...,x_n,y^n)\right|\\
\leq&  \frac{1}{nm}h_{\mathcal{Y}}(y_{k,\ell},\tilde{y}_{k,\ell})\,.\notag
\end{align}
Finiteness and integrability of $h_{\mathcal{X}}$ and $h_{\mathcal{Y}}$ together with the discussion following  \eqref{eq:one_comp_diff_tildeh_i} implies that $\overline{\phi}$ satisfies the integrability conditions required by Theorem \ref{thm:McDiarmid_general_mean_bound_method}.  Finally, using Corollary \ref{cor:cosh_bound_Orlicz_Psi_q}, for $\lambda\in\mathbb{R}$ we can bound
\begin{align}
E_{P_X\times P_X}[\cosh(\lambda  h_{\mathcal{X}}/n)]\leq &1+2\sum_{k=1}^\infty (C_X\lambda/n)^{2k}  \frac{\Gamma(2k/q_X+1) }{(2k)!}= e^{\xi_X(|\lambda|/n)} \,,\\
E_{P_Y\times P_Y}[\cosh(\lambda  h_{\mathcal{Y}}/(nm))]\leq &  1+2\sum_{k=1}^\infty (C_Y\lambda/(nm))^{2k}  \frac{\Gamma(2k/q_Y+1) }{(2k)!} = e^{\xi_Y(|\lambda|/(nm))} \,,\notag
\end{align}
where $\xi_X$ and $\xi_Y$ were defined in \eqref{eq:GAN_xi_X_def_main}-\eqref{eq:GAN_xi_Y_def_main}. Therefore the claim follows from applying Theorem \ref{thm:McDiarmid_general_mean_bound_method} with $t$ replaced by $t/2$ and changing variables  from $\lambda$ to $n\lambda$ in the supremum.
\end{proof}

\section{Conclusion}
In  Theorem \ref{thm:McDiarmid_general_mean_bound_method}   we  derived  a   generalization of McDiarmid's inequality that applies to appropriate families of unbounded functions.  Building on the earlier work of   \cite{kontorovich2014concentration} we  obtained  results that cover a significantly  wider class of locally-Lipschitz objective functions, including those that occur in applications such as denoising score matching (DSM).    Using this result, along with the distribution-dependent Rademacher complexity bound in Theorem \ref{thm:Rademacher_bound_unbounded_support_main}, we derived novel uniform law of large numbers results in Lemma \ref{lemma:variable_reuse_expectation_bound} and Theorem \ref{thm:ULLN_var_reuse_mean_bound}  as well as new concentration inequalities for stochastic optimization problems in Theorem \ref{thm:conc_ineq_stoch_opt}, all of which apply to appropriate unbounded (objective) functions and distributions with unbounded support.  We also showed that  our method is able to capture the benefits of sample-reuse in algorithms that  pair training data with  easily-sampled (e.g., Gaussian) auxiliary random variables.  In particular,   in Section \ref{sec:applications} we showed how these results can be used to derive   novel statistical guarantees for both DSM and GANs.

\appendix

\section{Generalization of Theorem \ref{thm:McDiarmid_general_mean_bound_method}}\label{app:McDiarmid_general_mean_bound_method2}
In this appendix we present a slight generalization of the concentration inequality from Theorem \ref{thm:McDiarmid_general_mean_bound_method}.  We do not use this version in any of the results in this paper, but we include it here as it may be of independent interest.  The proof is nearly identical to that of  Theorem \ref{thm:McDiarmid_general_mean_bound_method} and so we omit the details.
\begin{theorem}\label{thm:McDiarmid_general_mean_bound_method2}
Let $(\mathcal{X}_i,P_i)$, $i=1,...,n$ be probability spaces, define $\mathcal{X}^n\coloneqq\prod_{i=1}^n \mathcal{X}_i$ with the product sigma algebra, and let $P^n\coloneqq\prod_{i=1}^n P_i$.  Suppose   $\phi:\mathcal{X}^n\to\mathbb{R}$ satisfies the following:
\begin{enumerate}
\item Integrability:  $\phi\in L^1(P^n)$ and  $\phi(x_1,...,x_i,\cdot)\in L^1(\prod_{j>i} P_j)$ for all $i\in\{1,...,n-1\}$ and all $(x_1,...,x_i)\in\prod_{j=1}^i \mathcal{X}_j$.
\item One-component mean-difference bounds: For all  $i\in\{1,...,n\}$ we have measurable $h_i:\left(\prod_{j<i}\mathcal{X}_j\right)\times\mathcal{X}_i\times\mathcal{X}_i\to[0,\infty]$ such that  
\begin{align}\label{eq:mean_diff_bound_assump2}
\left|E_{\prod_{j>i} P_j}[\phi(x_1,...,x_{i-1},x_i,\cdot)]-E_{\prod_{j>i} P_j}[\phi(x_1,...,x_{i-1},\tilde{x}_i,\cdot)]\right|\leq h_i(x_1,...,x_i,\tilde x_i)
\end{align}
for $(\prod_{j\leq i} P_j)\times P_i$-a.e.   $(x_1,...,x_i,\tilde{x}_i)$. Note that for $i=n$ this assumption should be interpreted as requiring the almost-everywhere bound
\begin{align}\label{eq:mean_diff_bound_assump_n_2}
|\phi(x_1,...,x_{n-1},x_n)-\phi(x_1,...,x_{n-1},\tilde{x}_n)|\leq h_n(x_1,...,x_n,\tilde x_n)\,.
\end{align}
\item MGF bounds: We have    $\xi_i:[0,\infty)\to [0,\infty]$, $i=1,...,n$, such that for all $\lambda\in\mathbb{R}$ the following MGF bounds hold:
\begin{align}\label{eq:cosh_MGF_bound_2}
E_{P_i\times P_i}[\cosh(\lambda h_i(x_1,...,x_{i-1},\cdot))]\leq e^{\xi_i(|\lambda|)} \,\,\,\text{ for  $\prod_{j<i}P_i$-a.e. $(x_1,...,x_{i-1})$\,.}
\end{align}
Note that for $i=1$ this should be interpreted as
\begin{align}
E_{P_1\times P_1}[\cosh(\lambda h_1)]\leq e^{\xi_1(|\lambda|)} \,\,\,\,\text{ for all $\lambda\in\mathbb{R}$.}
\end{align}
\end{enumerate}
  Then  we have the MGF bound
\begin{align}\label{eq:new_McDiarmid_MGF_2}
E_{P^n}\!\left[e^{\lambda (\phi-E_{P^n}[\phi])}\right]\leq \exp\left(\sum_{i=1}^n \xi_i(|\lambda|)\right) \,\,\, \text{  for all $\lambda\in \mathbb{R}$}
\end{align}
and for all $t\geq 0$ we have the concentration inequality
\begin{align}\label{eq:new_McDiarmid_conc_2}
P^n(\pm(\phi-E_{P^n}[\phi])\geq t)\leq \exp\left(-\sup_{\lambda\in[0,\infty)}\left\{\lambda t-\sum_{i=1}^n \xi_i(\lambda)\right\}\right)\,.
\end{align}

\end{theorem}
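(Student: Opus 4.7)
The plan is to follow the strategy used in the proof of Theorem \ref{thm:McDiarmid_general_mean_bound_method} essentially verbatim, with the modifications needed to track the new dependence of $h_i$ (and hence of the MGF bound) on the earlier variables $x_1,\ldots,x_{i-1}$. The only real change is that several equalities that previously held pointwise now hold only $\prod_{j<i} P_j$-almost-everywhere, so I would check at each step that the almost-everywhere set has full measure under $P^n$ and hence does not affect the subsequent integrations.

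First I would reproduce the telescoping decomposition: set $\phi_n \coloneqq \phi$, $\phi_i(x_1,\ldots,x_i) \coloneqq E_{\prod_{j>i} P_j}[\phi(x_1,\ldots,x_i,\cdot)]$ for $1 \leq i \leq n-1$, $\phi_0 \coloneqq E_{P^n}[\phi]$, and $V_i \coloneqq \phi_i - \phi_{i-1}$, so that $\phi - E_{P^n}[\phi] = \sum_{i=1}^n V_i$. The integrability hypothesis guarantees each $\phi_i$ is a well-defined integrable function (on the appropriate product space). Next, for fixed $(x_1,\ldots,x_{i-1})$ in the full-measure set on which both \eqref{eq:mean_diff_bound_assump2} and \eqref{eq:cosh_MGF_bound_2} hold, I would run the chain of inequalities \eqref{eq:gen_McDiarmid_derivation} unchanged: a Jensen step moving $\lambda$ inside the difference-of-expectations, a symmetrization to replace the exponential by $\cosh$, the monotonicity of $\cosh$ together with the difference bound to insert $h_i(x_1,\ldots,x_{i-1},x_i,\tilde{x}_i)$, and finally the conditional MGF bound \eqref{eq:cosh_MGF_bound_2} itself. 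This yields, for $\prod_{j<i} P_j$-a.e. $(x_1,\ldots,x_{i-1})$ and all $\lambda \in \mathbb{R}$,
\begin{align}
\int e^{\lambda V_i(x_1,\ldots,x_i)}\, P_i(dx_i) \leq e^{\xi_i(|\lambda|)}.
\end{align}

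With this conditional bound in hand, the inductive step of the original proof goes through verbatim: integrating $\prod_{i=1}^n e^{\lambda V_i}$ successively in $x_n, x_{n-1}, \ldots, x_1$ via Fubini, and using at each stage that the failure set of the conditional MGF bound has measure zero, produces the centered MGF bound
\begin{align}
E_{P^n}\!\left[e^{\lambda(\phi - E_{P^n}[\phi])}\right] \leq \exp\left(\sum_{i=1}^n \xi_i(|\lambda|)\right),
\end{align}
which is \eqref{eq:new_McDiarmid_MGF_2}. The concentration inequality \eqref{eq:new_McDiarmid_conc_2} then follows from a standard Chernoff/Legendre-transform argument, exactly as at the end of the proof of Theorem \ref{thm:McDiarmid_general_mean_bound_method}.

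The only genuine new wrinkle, and therefore the one place I would be careful, is the bookkeeping for the almost-everywhere statements: one must verify that the union over $i$ of the $\prod_{j<i} P_j$-null sets on which either \eqref{eq:mean_diff_bound_assump2} or \eqref{eq:cosh_MGF_bound_2} can fail embeds into a $P^n$-null subset of $\mathcal{X}^n$, so that the integration against $P_i(dx_i)$ of the $V_i$-MGF can be invoked inside the telescoping Fubini argument without loss. This is routine since each failure set is a cylinder set of the form $N_i \times \prod_{j \geq i} \mathcal{X}_j$ with $P^{i-1}(N_i) = 0$; apart from this verification, no new analytic input is required beyond what is already in the proof of Theorem \ref{thm:McDiarmid_general_mean_bound_method}.
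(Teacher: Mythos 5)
Your proposal is correct and follows exactly the route the paper intends: the paper itself omits the proof of Theorem \ref{thm:McDiarmid_general_mean_bound_method2} on the grounds that it is nearly identical to that of Theorem \ref{thm:McDiarmid_general_mean_bound_method}, and your reconstruction (telescoping decomposition, Jensen, symmetrization, monotonicity of $\cosh$, conditional MGF bound, then Chernoff) is precisely that argument. Your identification of the almost-everywhere bookkeeping as the only new wrinkle, and your observation that the failure sets are null cylinder sets and hence harmless in the successive Fubini integrations, is the right and complete way to handle it.
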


\section{Proofs of  Results from Section \ref{sec:Orlicz}}\label{app:additional_proofs}
In this appendix we prove the results from Section \ref{sec:Orlicz},  showing how Orlicz norms can be used to produce bounds of the form \eqref{eq:cosh_MGF_bound_2}.
\begin{lemma} 
Let $\Psi$ be a non-decreasing Young function  and   $X$ be a real-valued random variable with $\|X\|_\Psi<\infty$. Then for all $\lambda\in\mathbb{R}$ we have
\begin{align}
\mathbb{E}[\cosh(\lambda X)]\leq&1+\sum_{k=1}^\infty  \frac{\lambda^{2k} }{(2k)!}  \int_0^\infty  \min\left\{\frac{1}{\Psi(t^{1/(2k)}/\|X\|_\Psi)},1\right\}dt\,.
\end{align}
\end{lemma}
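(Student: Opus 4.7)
The plan is to expand $\cosh$ as a power series, reduce the bound to controlling the even moments $\mathbb{E}[|X|^{2k}]$, and then control each moment via a tail estimate derived from the definition of the Luxemburg norm.

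First I would write $\cosh(\lambda X) = 1 + \sum_{k=1}^\infty (\lambda X)^{2k}/(2k)!$ and interchange the sum and expectation by Tonelli's theorem, using that every term is nonnegative. This gives
\begin{align*}
\mathbb{E}[\cosh(\lambda X)] = 1 + \sum_{k=1}^\infty \frac{\lambda^{2k}}{(2k)!}\,\mathbb{E}[|X|^{2k}]\,,
\end{align*}
so it suffices to bound $\mathbb{E}[|X|^{2k}]$ by the inner integral. (The case $\|X\|_\Psi=0$, in which $X=0$ almost surely, is trivial under the convention $0\cdot\infty=0$; otherwise assume $\|X\|_\Psi>0$.)

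Next I would use the layer-cake identity
\begin{align*}
\mathbb{E}[|X|^{2k}]=\int_0^\infty \mathbb{P}(|X|^{2k}>t)\,dt=\int_0^\infty \mathbb{P}(|X|>t^{1/(2k)})\,dt\,.
\end{align*}
For each $u>0$, monotonicity of $\Psi$ together with Markov's inequality gives $\mathbb{P}(|X|>u)\leq \mathbb{E}[\Psi(|X|/\|X\|_\Psi)]/\Psi(u/\|X\|_\Psi)$, and the defining property of the Luxemburg norm (applied with a limit $\|X\|_\Psi+\varepsilon\downarrow\|X\|_\Psi$ and monotone convergence, to handle the case where the infimum is not attained) yields $\mathbb{E}[\Psi(|X|/\|X\|_\Psi)]\leq 1$. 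Combining this with the trivial bound $\mathbb{P}(|X|>u)\leq 1$ produces
\begin{align*}
\mathbb{P}(|X|>u)\leq \min\!\left\{\frac{1}{\Psi(u/\|X\|_\Psi)},\,1\right\}\,.
\end{align*}

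Finally I would substitute $u=t^{1/(2k)}$ into the previous display, plug the resulting bound for $\mathbb{E}[|X|^{2k}]$ into the series, and collect terms to obtain the claim. The main technical nuisance, rather than an actual obstacle, is verifying $\mathbb{E}[\Psi(|X|/\|X\|_\Psi)]\leq 1$ without extra regularity on $\Psi$; this is handled by the $\varepsilon\downarrow 0$ limiting argument just noted, which is standard for Young functions (see, e.g., Chapter 2.2 in \cite{van2013weak}).
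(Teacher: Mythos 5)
Your proposal is correct and follows essentially the same route as the paper: expand $\cosh$ in its Taylor series, bound each even moment via the layer-cake formula and the Orlicz-norm tail inequality $\mathbb{P}(|X|>s)\leq \min\{1/\Psi(s/\|X\|_\Psi),1\}$. The only (harmless) difference is that you derive this tail bound from Markov's inequality and $\mathbb{E}[\Psi(|X|/\|X\|_\Psi)]\leq 1$, whereas the paper simply cites it from the literature.
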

\begin{proof}
Note that the claim is trivial if $\|X\|_{\Psi_q}=0$, as that implies $X=0$ a.s.  Assuming  $\|X\|_{\Psi_q}>0$, first recall that  the  concentration inequality 
\begin{align}
\mathbb{P}(|X|> s)\leq \min\left\{\frac{1}{\Psi(s/\|X\|_\Psi)},1\right\}
\end{align}
for all $s>0$; see, e.g., page 96 in  \cite{van2013weak}.  From this, for  $p>0$ we obtain the moment bound
\begin{align}\label{eq:moment_bound_Orlicz}
\mathbb{E}[|X|^p]=\int_0^\infty \mathbb{P}\left(|X|> t^{1/p}\right)dt  \leq \int_0^\infty  \min\left\{\frac{1}{\Psi(t^{1/p}/\|X\|_\Psi)},1\right\}dt\,.
\end{align}
Using \eqref{eq:moment_bound_Orlicz}  to bound the expectation of each term in the Taylor series for $\cosh(\lambda X)$  we obtain the claimed result.
\end{proof}
\begin{corollary} 
Let $(\mathcal{X}_i,P_i)$, $i=1,...,n$ be probability spaces and $\phi:\mathcal{X}^n\to\mathbb{R}$ be measurable. Suppose $\phi$ satisfies $\eqref{eq:one_comp_diff_tildeh_i}$ for real-valued $\tilde{h}_i\in L^1(P_i\times P_i\times\prod_{j>i}P_j)$  and  define $h_i(x_i,\tilde{x}_i)\coloneqq E_{\prod_{j>i} P_j}[\tilde{h}_i(x_i,\tilde{x}_i,\cdot)]$.
\begin{enumerate}
\item If the $h_i$ are sub-Gaussian under $P_i\times P_i$ for all $i$ then for all $t> 0$ we have
\begin{align}
P^n(\pm(\phi-E_{P^n}[\phi])\geq t)\leq& \exp\left(-\frac{t^2}{4\sum_{i=1}^n\|h_i\|_{\Psi_2,P_i^2}^{2}}\right)\,.
\end{align}
\item If the $h_i$ are   sub-exponential under $P_i\times P_i$ for all $i$ then for all $t> 0$ we have
\begin{align}
P^n(\pm(\phi-E_{P^n}[\phi])\geq t)\leq&  \exp\left(-\frac{t^2}{8 \sum_{i=1}^n\|h_i\|_{\Psi_1,P_i^2}^2 +2t \max_i\{\|h_i\|_{\Psi_1,P_i^2}\}}\right) \,.
\end{align}
\end{enumerate}
\end{corollary}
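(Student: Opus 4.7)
The plan is to combine Corollary \ref{cor:cosh_bound_Orlicz_Psi_q} (in the sub-Gaussian case $q=2$) and equation \eqref{eq:mgf_bound_subexp} (in the sub-exponential case $q=1$) with Theorem \ref{thm:McDiarmid_general_mean_bound_method}, and then carry out the Legendre transforms. First I would invoke items 1 and 2 from the discussion following \eqref{eq:one_comp_diff_tildeh_i}: the hypothesis $\tilde{h}_i \in L^1(P_i\times P_i\times \prod_{j>i} P_j)$ yields the required integrability of $\phi$, and the definition $h_i(x_i,\tilde{x}_i) \coloneqq E_{\prod_{j>i}P_j}[\tilde{h}_i(x_i,\tilde{x}_i,\cdot)]$ supplies the one-component mean-difference bound \eqref{eq:mean_diff_bound_assump}. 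All that then remains is to produce acceptable MGF bounds \eqref{eq:cosh_MGF_bound} for these $h_i$ and to optimize.

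For part 1, I would apply Corollary \ref{cor:cosh_bound_Orlicz_Psi_q} with $q=2$, which yields $E_{P_i\times P_i}[\cosh(\lambda h_i)] \leq 1 + 2\sum_{k\geq 1} (\|h_i\|_{\Psi_2,P_i^2}\lambda)^{2k} k!/(2k)!$. Using the elementary bound $(2k)! \geq 2^k (k!)^2$ (i.e.\ $\binom{2k}{k}\geq 2^k$), this series is dominated by $1 + 2(\exp(\tfrac{1}{2}(\|h_i\|_{\Psi_2,P_i^2}\lambda)^2) - 1)$. Since $2e^u - 1 \leq e^{2u}$ (which is $(e^u-1)^2 \geq 0$), I obtain
\begin{align*}
E_{P_i\times P_i}[\cosh(\lambda h_i)] \leq \exp\bigl((\|h_i\|_{\Psi_2,P_i^2}\lambda)^2\bigr),
\end{align*}
so I may take $\xi_i(\lambda) = \|h_i\|_{\Psi_2,P_i^2}^2 \lambda^2$ in Theorem \ref{thm:McDiarmid_general_mean_bound_method}. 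Substituting into \eqref{eq:new_McDiarmid_conc} gives $\sup_{\lambda\geq 0}\{\lambda t - \lambda^2 \sum_i \|h_i\|_{\Psi_2,P_i^2}^2\} = t^2/(4\sum_i \|h_i\|_{\Psi_2,P_i^2}^2)$, which is exactly the stated bound.

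For part 2, I would appeal directly to \eqref{eq:mgf_bound_subexp}, which gives $\xi_i(\lambda) = 2\|h_i\|_{\Psi_1,P_i^2}^2 \lambda^2 / (1 - \|h_i\|_{\Psi_1,P_i^2}^2 \lambda^2)$ valid for $|\lambda|<K_i \coloneqq 1/\|h_i\|_{\Psi_1,P_i^2}$. Writing $\sigma^2 \coloneqq \sum_i \|h_i\|_{\Psi_1,P_i^2}^2$ and $b \coloneqq \max_i \|h_i\|_{\Psi_1,P_i^2}$, and using $1 - \|h_i\|_{\Psi_1,P_i^2}^2\lambda^2 \geq (1-b\lambda)(1+b\lambda) \geq 1-b\lambda$ termwise for $\lambda \in [0,1/b)$, I obtain the Bernstein-type aggregate
\begin{align*}
\sum_{i=1}^n \xi_i(\lambda) \leq \frac{2\sigma^2\lambda^2}{1-b\lambda}.
\end{align*}
Then it is a standard Chernoff/Bernstein calculation to evaluate $\sup_{\lambda\in[0,1/b)}\{\lambda t - 2\sigma^2\lambda^2/(1-b\lambda)\}$: choosing $\lambda = t/(4\sigma^2 + bt) \in [0,1/b)$ and simplifying yields the lower bound $t^2/(8\sigma^2 + 2bt)$, which I then substitute into \eqref{eq:new_McDiarmid_conc}.

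The only mildly delicate step is the sub-exponential Legendre transform: the termwise bound $1 - \|h_i\|_{\Psi_1}^2\lambda^2 \geq 1-b\lambda$ is slightly wasteful but is what yields the clean Bernstein form $t^2/(8\sigma^2+2bt)$; the remaining manipulations are routine. The sub-Gaussian case is essentially a one-line Chernoff minimization once the clean MGF bound $E[\cosh(\lambda h_i)] \leq \exp(\|h_i\|_{\Psi_2}^2\lambda^2)$ is in hand.
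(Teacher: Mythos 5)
Your proposal is correct and follows essentially the same route as the paper: both parts combine Corollary \ref{cor:cosh_bound_Orlicz_Psi_q} (for $q=2$) and \eqref{eq:mgf_bound_subexp} (for $q=1$) with Theorem \ref{thm:McDiarmid_general_mean_bound_method}, arrive at the identical MGF bounds $\xi_i(\lambda)=\|h_i\|_{\Psi_2,P_i^2}^2\lambda^2$ and $\sum_i\xi_i(\lambda)\leq 2\sigma^2\lambda^2/(1-b\lambda)$, and perform the same Chernoff/Bernstein optimizations (the paper's series bound uses $2k!/(2k)!\leq 1/k!$ directly rather than your $\binom{2k}{k}\geq 2^k$ detour, and its denominator bound uses $(\|h_i\|\lambda)^2\leq\|h_i\|\lambda\leq b\lambda$ rather than your factoring, but these are immaterial variations yielding the same inequalities).
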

\begin{proof}

First suppose the $h_i$ are    sub-Gaussian. From Corollary \ref{cor:cosh_bound_Orlicz_Psi_q} with $q=2$ we can compute 
\begin{align}
E_{P_i\times P_i}[\cosh(\lambda h_i)]\leq&  1+2\sum_{k=1}^\infty (\|h_i\|_{\Psi_2,P_i^2}\lambda)^{2k}  \frac{k!}{(2k)!}    \\
\leq&1+\sum_{k=1}^\infty (\|h_i\|_{\Psi_2,P_i^2}\lambda)^{2k}  \frac{1}{k!}  \notag\\
=&  \exp((\|h_i\|_{\Psi_2,P_i^2}\lambda)^{2})\,,\notag
\end{align}
where we used the bound $2k!/(2k)!\leq 1/k!$ for all $k\geq 1$.  Therefore the assumptions of Theorem \ref{thm:McDiarmid_general_mean_bound_method} hold with  $\xi_i(\lambda)=(\|h_i\|_{\Psi_2,P_i^2}\lambda)^{2}$ and hence we obtain
\begin{align}
P^n(\pm(\phi-E_{P^n}[\phi])\geq t)\leq& \exp\left(-\sup_{\lambda\in[0,\infty)}\left\{\lambda t-\sum_{i=1}^n(\|h_i\|_{\Psi_2,P_i^2}\lambda)^{2}\right\}\right)\\
=&\exp\left(-\frac{t^2}{4\sum_{i=1}^n\|h_i\|_{\Psi_2,P_i^2}^{2}}\right)\,.\notag
\end{align}

If instead  the $h_i$ are sub-exponential then, noting \eqref{eq:mgf_bound_subexp}, we see that the assumptions of Theorem \ref{thm:McDiarmid_general_mean_bound_method} hold with 
\begin{align}
\xi_i(\lambda)\coloneqq  \frac{2 (\|h_i\|_{\Psi_1,P_i^2}\lambda)^{2}}{1- (\|h_i\|_{\Psi_1,P_i^2}\lambda)^{2}}\,,\,\,\,|\lambda|<1/\|h_i\|_{\Psi_1,P_i^2}\,.
\end{align}
Therefore we obtain
\begin{align} 
P^n(\pm(\phi-E_{P^n}[\phi])\geq t)\leq& \exp\left(-\sup_{0\leq \lambda <1/\max_i\{\|h_i\|_{\Psi_1,P_i^2}\}}\left\{\lambda t-\sum_{i=1}^n \frac{2 (\|h_i\|_{\Psi_1,P_i^2}\lambda)^{2}}{1- (\|h_i\|_{\Psi_1,P_i^2}\lambda)^{2}}\right\}\right)\\
\leq& \exp\left(-\sup_{0\leq \lambda <1/\max_i\{\|h_i\|_{\Psi_1,P_i^2}\}}\left\{\lambda t- \frac{2 \sum_{i=1}^n\|h_i\|_{\Psi_1,P_i^2}^2\lambda^{2}}{1- \max_i\|h_i\|_{\Psi_1,P_i^2}\lambda}\right\}\right) \notag\\
\leq& \exp\left(-\frac{t^2}{8 \sum_{i=1}^n\|h_i\|_{\Psi_1,P_i^2}^2 +2t \max_i\{\|h_i\|_{\Psi_1,P_i^2}\}}\right) \notag\,.
\end{align}
\end{proof}

\section{Distribution-Dependent Rademacher Complexity Bound}\label{app:rademacher_bound} 

 In this appendix we derive a distribution-dependent Rademacher complexity bound that applies to collections of functions that satisfy a  local Lipschitz property with respect to a parameterizing metric space. Finiteness of our bound only requires that the local Lipchitz constant have finite second moment, as opposed to the much stronger assumptions made in other approaches to distribution-dependent bounds, e.g., the sub-Gaussian and sub-exponential  assumptions required in  Proposition 20 of \cite{biau2021some} and Theorem 22 of \cite{huang2022error} respectively.  Distribution-dependent bounds are especially useful in applications such as denoising score matching (DSM), which use auxiliary random variables with known distribution,  despite the distribution of the data being unknown.    A less general version of the following result (i.e., for $\eta=0$)  was previously proven in \cite{birrell2025statisticalerrorboundsgans}.  

First recall the definition of Rademacher complexity.
\begin{definition}\label{def:Rademacher_complexity}
Let $\mathcal{G}$ be a collection of functions on $\mathcal{Z}$ and $n\in\mathbb{Z}^+$.  We define the  empirical Rademacher complexity of $\mathcal{G}$ at  $z\in\mathcal{Z}^n$  by
\begin{align}\label{eq:emp_rad_def}
\widehat{\mathcal{R}}_{\mathcal{G},n}(z)\coloneqq E_\sigma\left[\sup_{g\in\mathcal{G}}\left\{\frac{1}{n}\sigma\cdot g_n(z)\right\}\right]\,,
\end{align}
where $\sigma_i$, $i=1,...,n$  are independent uniform random variables taking values in $\{-1,1\}$, i.e., Rademacher random variables, and $g_n(z)\coloneqq(g(z_1),...,g(z_n))$.  Given a probability distribution $P$ on $\mathcal{Z}$ and assuming that $\widehat{\mathcal{R}}_{\mathcal{G},n}$ is measurable, we define the $P$-Rademacher complexity of $\mathcal{G}$  by
\begin{align}\label{eq:Rademacher_def}
\mathcal{R}_{\mathcal{G},P,n}\coloneqq E_{P^n}\left[\widehat{\mathcal{R}}_{\mathcal{G},n}\right]\,,
\end{align}
where $P^n$ is the $n$-fold product of $P$, i.e., the $z_i$'s become i.i.d. samples from $P$.  
\end{definition}

The Rademacher complexity bound we prove in Theorem \ref{thm:Rademacher_bound_unbounded_support} below will rely on  Dudley's entropy integral; however, we could not locate a standard version   in the literature that suffices for our purposes.  Thus, we will first present a proof of the variant we require, building on Lemma 13.1 in \cite{boucheron2013concentration} as well as Theorem 5.22 in \cite{wainwright2019high}.  First recall the definition of a sub-Gaussian process.
\begin{definition}
Let $(T,d)$ be a pseudometric space and $X_t$, $t\in T$ be mean-zero real-valued random variables on the same probability space.  We call  $X_t$ a sub-Gaussian process with respect to  $d$ if
\begin{align}
E\left[e^{\lambda(X_t-X_{t^\prime})}\right]\leq \exp\left(\frac{\lambda^2d(t,t^\prime)^2}{2}\right)
\end{align}
for all $t,t^\prime\in T$, $\lambda\in\mathbb{R}$. 
\end{definition}
\begin{theorem}\label{thm:Dudley}
Let $X_t$, $t\in T$ be a mean-zero sub-Gaussian process with respect to  a pseudometric $d$ on $T$. Then, assuming the suprema involved are measurable (e.g., if they can be restricted to countable subsets),  we have
\begin{align}\label{eq:Dudley_bound}
E\left[\sup_{t\in T} X_t\right]\leq&E\left[\sup_{t,t^\prime\in T}(X_t-X_{t^\prime})\right]\\
\leq&\inf_{\eta>0}\left\{2E\left[\sup_{d(t,\hat{t})\leq2\eta}(X_t-X_{\hat{t}})\right]+1_{\eta<D_T}8\sqrt{2}\int_{\eta/2}^{D_T/2}\sqrt{\log N(\epsilon,T,d)} d\epsilon\right\}\,,\notag
\end{align}
where $D_T\coloneqq\sup_{t,t^\prime\in T}d(t,t^\prime)$ is the diameter of $(T,d)$.
\end{theorem}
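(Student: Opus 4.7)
The first inequality is immediate from the mean-zero assumption: for any fixed $t_0\in T$ one has $E[\sup_t X_t]=E[\sup_t(X_t-X_{t_0})]\leq E[\sup_{t,t'}(X_t-X_{t'})]$. I would then split the second inequality on the size of $\eta$. When $\eta\geq D_T$, every pair $(t,\hat t)$ satisfies $d(t,\hat t)\leq 2\eta$, so the ``local'' supremum in the bound coincides with the global supremum, the integral term vanishes by the indicator, and the claim reduces to the trivial $E[\sup_{t,t'}(X_t-X_{t'})]\leq 2 E[\sup_{t,t'}(X_t-X_{t'})]$. The substantive case is $\eta<D_T$, which I would treat by a Dudley-style chaining along dyadic scales.

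For $\eta<D_T$, set $\epsilon_k\coloneqq D_T\,2^{-k}$ and let $K$ be the largest integer with $\epsilon_K>\eta$, so that $\epsilon_K\in(\eta,2\eta]$ and $\epsilon_{K+1}\in(\eta/2,\eta]$. For each $k\geq 1$ choose a minimal $\epsilon_k$-cover $T_k\subset T$ with $|T_k|=N(\epsilon_k,T,d)$, together with a closest-point map $\pi_k:T\to T_k$; at level $0$ take $T_0=\{t_0\}$ for an arbitrary $t_0\in T$, which is trivially a $D_T$-cover. The key construction is a chain of projections defined top-down by $\tau_K(t)\coloneqq\pi_K(t)$ and $\tau_{k-1}(t)\coloneqq\pi_{k-1}(\tau_k(t))$ for $k=K,\ldots,1$. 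Because $\tau_{k-1}$ is a function of $\tau_k(t)$, the set $\{(\tau_k(t),\tau_{k-1}(t)):t\in T\}$ has at most $|T_k|\leq N(\epsilon_k,T,d)$ elements, and each such pair is separated by at most $\epsilon_{k-1}=2\epsilon_k$. I expect this refinement over the naive $|T_k|\cdot|T_{k-1}|$ counting to be the essential ingredient needed to arrive at the constant $8\sqrt{2}$ rather than a larger numerical factor.

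The telescoping identity
\[
X_t-X_{t'}=\bigl[(X_t-X_{\pi_K(t)})-(X_{t'}-X_{\pi_K(t')})\bigr]+\sum_{k=1}^K\bigl(X_{\tau_k(t)}-X_{\tau_{k-1}(t)}\bigr)-\sum_{k=1}^K\bigl(X_{\tau_k(t')}-X_{\tau_{k-1}(t')}\bigr)
\]
then yields, after taking $\sup_{t,t'}$ and expectation, a residual contribution bounded by $2E[\sup_{d(t,\hat t)\leq 2\eta}(X_t-X_{\hat t})]$ (using $d(t,\pi_K(t))\leq\epsilon_K\leq 2\eta$) plus twice the level-by-level chaining bound. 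Applying the sub-Gaussian maximal inequality $E[\max_{i\leq M}Y_i]\leq\sigma\sqrt{2\log M}$ at level $k$ to the (at most) $|T_k|$ mean-zero differences, each sub-Gaussian with parameter $2\epsilon_k$, gives the per-level bound $2\epsilon_k\sqrt{2\log N(\epsilon_k,T,d)}$. Using $\epsilon_k-\epsilon_{k+1}=\epsilon_k/2$ together with monotonicity of $\epsilon\mapsto\log N(\epsilon,T,d)$ converts the telescoping sum into a Riemann estimate of the entropy integral,
\[
\sum_{k=1}^K 2\epsilon_k\sqrt{2\log N(\epsilon_k,T,d)}=4\sqrt{2}\sum_{k=1}^K(\epsilon_k-\epsilon_{k+1})\sqrt{\log N(\epsilon_k,T,d)}\leq 4\sqrt{2}\int_{\eta/2}^{D_T/2}\sqrt{\log N(\epsilon,T,d)}\,d\epsilon,
\]
where I use $\epsilon_1=D_T/2$ and $\epsilon_{K+1}\geq\eta/2$ to align the limits. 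Multiplying by the outer factor $2$ produces the claimed $8\sqrt{2}$ and, after taking the infimum over $\eta>0$, completes the proof. The main difficulty I anticipate is precisely the combinatorial bookkeeping: arranging the chain maps $\tau_k$ so that the number of distinct pairs at each level is exactly $|T_k|$ (not $|T_k|\cdot|T_{k-1}|$) and the successive distances are exactly $\epsilon_{k-1}$ (not $\epsilon_k+\epsilon_{k-1}$), both of which are required for the tight constant.
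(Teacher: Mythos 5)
Your proposal is correct and follows essentially the same argument as the paper's proof: the same dyadic scales $\epsilon_k=2^{-k}D_T$ terminated at the scale comparable to $\eta$, the same top-down composition of closest-point maps so that each level contributes only $|T_k|$ pairs at separation $\epsilon_{k-1}=2\epsilon_k$, the same sub-Gaussian maximal inequality per level, and the same Riemann-sum comparison yielding the constant $8\sqrt{2}$ and the limits $[\eta/2,D_T/2]$. The only point you gloss over, which the paper treats explicitly, is the degenerate case $N(\epsilon,T,d)=\infty$ for some $\epsilon>\eta/2$ (where the bound is trivially infinite and which is needed to justify that the finite minimal covers, and hence a finite $D_T$, exist in the substantive case).
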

\begin{remark}
The technical features  we  require for our proof of Theorem \ref{thm:Rademacher_bound_unbounded_support} below is that the bound   \eqref{eq:Dudley_bound} is written as the infimum over all $\eta\in(0,\infty)$, where $\eta$ controls the lower bound on the domain of integration (so as to handle  cases where the covering number is not integrable at $0$), of an objective that is non-decreasing in the diameter, $D_T$.  The latter property is necessary for us to make use of the local Lipschitz bound that is the key to our argument in Theorem \ref{thm:Rademacher_bound_unbounded_support}.
\end{remark}
\begin{proof}
The $X_t$'s have mean zero, hence we  have
\begin{align}
0\leq E\left[\sup_t X_t\right]=E\left[\sup_t (X_t-X_{t^\prime})\right]\leq E\left[\sup_{t,t^\prime} (X_t-X_{t^\prime})\right]\,.
\end{align}
Therefore it suffices to show
\begin{align}
E\left[\sup_{t,t^\prime\in T}(X_t-X_{t^\prime})\right]\leq 2E\left[\sup_{d(t,\hat{t})<2\eta}(X_t-X_{\hat{t}})\right]+1_{\eta<D_T}8\sqrt{2}\int_{\eta/2}^{D_T/2}\sqrt{\log N(\epsilon,T,d)} d\epsilon
\end{align}
for all $\eta>0$.  If $\eta\geq D_T$ then  this is trivial, as all $t,\hat{t}\in T$ satisfy $d(t,\hat{t})< 2\eta$.  If $\eta<D_T$ and $N(\epsilon,T,d)=\infty$ for some $\epsilon>\eta/2$ then the right-hand side equals $\infty$ due to $\epsilon\mapsto N(\epsilon,T,d)$ being non-increasing, and hence the bound is again trivial. Therefore, from here on we restrict our attention to  the case where $\eta<D_T$  and   $N(\epsilon,T,d)<\infty$ for all $\epsilon>\eta/2$; in particular, this implies  $D_T$ is finite.

Let $k=\lfloor \log_2(D_T/\eta)\rfloor\in\mathbb{Z}_0$ so that $\eta\in(2^{-(k+1)}D_T, 2^{-k}D_T]$.  Therefore $N(2^{-i}D_T,T,d)<\infty$ for all $i=0,...,k$.  For $i=0,..,k$ let $\hat{T}_i$ be a minimal $2^{-i}D_T$-cover of $T$, i.e., $|\hat{T}_i|=N(2^{-i}D_T,T,d)<\infty$. Note that $|\hat{T}_0|=1$, as any element of $T$ is a minimal $D_T$-cover.   

For $i=1,...,k$, given $t\in \hat{T}_i$ there exists $\tilde{t}\in \hat{T}_{i-1}$ such that $d(\tilde{t},t)\leq 2^{-(i-1)}D_T$.  Therefore we can  define $g_i:\hat{T}_i\to \hat{T}_{i-1}$ such that $d(g_i(t),t)\leq 2^{-(i-1)}D_T$.  For $i=0,...,k$ inductively define $\pi_{k-i}:\hat{T}_k\to \hat{T}_{k-i}$ by  $\pi_k=Id$, $\pi_{k-i}=g_{k-(i-1)}\circ \pi_{k-(i-1)}$. Therefore for all $t\in\hat{T}_k$, $i=1,...,k$ we have  $d(\pi_{i-1}(t),\pi_i(t))=d(g_i(\pi_i(t)),\pi_i(t))\leq 2^{-(i-1)}D_T$.

Noting that $\pi_0$ is a constant map,  for $\hat{t},\hat{t}^\prime\in \hat{T}_k$ we can now compute
\begin{align}
X_{\hat{t}}-X_{\hat{t}^\prime}=&\sum_{i=1}^k (X_{\pi_i(\hat{t})}-X_{\pi_{i-1}(\hat{t})})+\sum_{i=1}^k (X_{\pi_{i-1}(\hat{t}^\prime)}-X_{\pi_{i}(\hat{t}^\prime)})\\
=&\sum_{i=1}^k (X_{\pi_i(\hat{t})}-X_{g_i(\pi_{i}(\hat{t}))})+\sum_{i=1}^k (X_{g_i(\pi_{i}(\hat{t}^\prime))}-X_{\pi_{i}(\hat{t}^\prime)})\notag\\
\leq&\sum_{i=1}^k \sup_{\hat{t}\in \hat{T}_i} (X_{\hat{t}}-X_{g_i(\hat{t})})+\sum_{i=1}^k \sup_{\hat{t}\in \hat{T}_i}  (X_{g_i(\hat{t})}-X_{\hat{t}})\,.\notag
\end{align}

For $t,t^\prime\in T$ there exists $\hat{t},\hat{t}^\prime\in\hat{T}_k$ with $d(t,\hat{t})\leq 2^{-k}D_T$ and $d(t^\prime,\hat{t}^\prime)\leq 2^{-k}D_T$. Hence
\begin{align}
&X_t-X_{t^\prime}= X_t-X_{\hat{t}}+X_{\hat{t}}-X_{\hat{t}^\prime}+X_{\hat{t}^\prime}-X_{t^\prime}\\
\leq& 2\sup_{d(t,\hat{t})\leq2^{-k}D_T}(X_t-X_{\hat{t}})+\sum_{i=1}^k \sup_{\hat{t}\in \hat{T}_i} (X_{\hat{t}}-X_{g_i(\hat{t})})+\sum_{i=1}^k \sup_{\hat{t}\in \hat{T}_i}  (X_{g_i(\hat{t})}-X_{\hat{t}})\,.\notag
\end{align}

Therefore we can use the sub-Gaussian case of the maximal inequality from  Theorem 2.5 in \cite{boucheron2013concentration} together with the fact that $\epsilon\mapsto N(\epsilon,T,d)$ is non-increasing to compute
\begin{align}
&E\left[\sup_{t,t^\prime\in T}(X_t-X_{t^\prime})\right]\\
\leq& 2E\left[\sup_{d(t,\hat{t})\leq2^{-k}D_T}(X_t-X_{\hat{t}})\right]+\sum_{i=1}^k E\left[\sup_{\hat{t}\in \hat{T}_i} (X_{\hat{t}}-X_{g_i(\hat{t})})\right]+\sum_{i=1}^k E\left[\sup_{\hat{t}\in \hat{T}_i}  (X_{g_i(\hat{t})}-X_{\hat{t}})\right]\notag\\
\leq&2E\left[\sup_{d(t,\hat{t})\leq2^{-k}D_T}(X_t-X_{\hat{t}})\right]+2\sum_{i=1}^k \max_{\hat{t}\in \hat{T}_i}d(\hat{t},g_i(\hat{t}))\sqrt{2\log|\hat{T}_i|}\notag\\
\leq&2E\left[\sup_{d(t,\hat{t})\leq2^{-k}D_T}(X_t-X_{\hat{t}})\right]+2\sum_{i=1}^k 2^{-(i-1)}D_T\sqrt{2\log N(2^{-i}D_T,T,d)}\notag\\
\leq&2E\left[\sup_{d(t,\hat{t})\leq2^{-k}D_T}(X_t-X_{\hat{t}})\right]+8\sqrt{2}\sum_{i=1}^k \int_{2^{-i-1}D_T}^{2^{-i}D_T}\sqrt{\log N(\epsilon,T,d)} d\epsilon\notag\\
\leq&2E\left[\sup_{d(t,\hat{t})< 2\eta}(X_t-X_{\hat{t}})\right]+8\sqrt{2}\int_{\eta/2}^{\frac{1}{2}D_T}\sqrt{\log N(\epsilon,T,d)} d\epsilon\,,\notag
\end{align}
where to obtain the last line we used  $2^{-k-1}D_T\geq \eta/2$ and $2\eta>2^{-k}D_T$.  This  completes the proof.
\end{proof}

We now use the above version of Dudley's entropy integral  to derive a distribution-dependent Rademacher complexity bound.  We work with collections of functions that have the following properties.

\begin{theorem}\label{thm:Rademacher_bound_unbounded_support}
Let $(\Theta,d_\Theta)$ be a non-empty pseudometric space and denote its diameter by $D_\Theta\coloneqq\sup_{\theta_1,\theta_2\in\Theta}d_\Theta(\theta_1,\theta_2)$.  Let $(\mathcal{Z},P)$ be a probability space and suppose we have a collection of measurable   functions $g_\theta:\mathcal{Z}\to \mathbb{R}$, $\theta\in\Theta$. Define $\mathcal{G}\coloneqq \{g_\theta:\theta\in\Theta\}$.

If for all $z\in\mathcal{Z}$  there exists $L(z)\in(0,\infty)$ such that $\theta\mapsto g_\theta(z)$ is $L(z)$-Lipschitz then
 for all $n\in\mathbb{Z}^+$ and all $z\in\mathcal{Z}^n$ we have the following bound on the empirical Rademacher complexity: 
\begin{align}\label{eq:emp_rademacher_bound_L}
\widehat{\mathcal{R}}_{\mathcal{G},n}(z)\leq&L_n(z)\inf_{\eta>0}\left\{4\eta+1_{\eta<  D_\Theta}8\sqrt{2}n^{-1/2}\int_{\eta/2}^{ D_\Theta/2}\sqrt{\log  N(\epsilon,\Theta,d_{\Theta})}  d\epsilon\right\}\,, 
\end{align} 
where $L_n(z)\coloneqq \left(\frac{1}{n}\sum_{i=1}^n L(z_i)^2\right)^{1/2}$. Assuming the requisite measurability properties for the expected values to be defined, we also obtain the $P$-Rademacher complexity bound
\begin{align}\label{eq:rademacher_bound_L}
\mathcal{R}_{\mathcal{G},P,n}\leq   E_P[L^2]^{1/2}\inf_{\eta>0}\left\{4\eta+1_{\eta<  D_\Theta}8\sqrt{2}n^{-1/2}\int_{\eta/2}^{ D_\Theta/2}\sqrt{\log  N(\epsilon,\Theta,d_{\Theta})}  d\epsilon\right\}\,.
\end{align}
 
Moreover, if $\Theta$ is the unit ball in $\mathbb{R}^k$ under some norm and $d_\Theta$ is the corresponding metric  then  we have
\begin{align}\label{eq:entropy_int_bound_unit_ball}
\inf_{\eta>0}\left\{4\eta+1_{\eta<  D_\Theta}8\sqrt{2}n^{-1/2}\int_{\eta/2}^{ D_\Theta/2}\sqrt{\log  N(\epsilon,\Theta,d_{\Theta})}  d\epsilon\right\}  \leq 32 (k/n)^{1/2}\,.
\end{align}
\end{theorem}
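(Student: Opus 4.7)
The plan is to recognize the Rademacher process $X_\theta \coloneqq n^{-1}\sum_{i=1}^n \sigma_i g_\theta(z_i)$ as a centered sub-Gaussian process on $\Theta$ under a suitably rescaled pseudometric, and then apply the Dudley-type entropy integral bound of Theorem \ref{thm:Dudley}. The sample-dependent Lipschitz constant $L(\cdot)$ will enter only through the scale of this pseudometric, and the main bookkeeping step is to check that the $z$-dependent scaling factors out cleanly as a single multiplicative $L_n(z)$.

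First, I would fix $z\in\mathcal{Z}^n$ and apply Hoeffding's lemma to each of the bounded symmetric random variables $\sigma_i(g_\theta(z_i)-g_{\theta'}(z_i))$, together with the local Lipschitz hypothesis $|g_\theta(z_i)-g_{\theta'}(z_i)|\leq L(z_i)\,d_\Theta(\theta,\theta')$, to derive
\begin{align*}
E_\sigma\!\left[e^{\lambda(X_\theta-X_{\theta'})}\right]\leq \exp\!\left(\frac{\lambda^2 L_n(z)^2\,d_\Theta(\theta,\theta')^2}{2n}\right).
\end{align*}
Thus $\{X_\theta\}$ is a centered sub-Gaussian process with respect to $\tilde d(\theta,\theta')\coloneqq (L_n(z)/\sqrt n)\,d_\Theta(\theta,\theta')$, whose diameter is $\tilde D_\Theta=(L_n(z)/\sqrt n)D_\Theta$ and whose covering numbers satisfy $N(\epsilon,\Theta,\tilde d)=N(\epsilon\sqrt n/L_n(z),\Theta,d_\Theta)$. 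Applying Theorem \ref{thm:Dudley} in the pseudometric $\tilde d$, I would bound the local-oscillation term by the deterministic estimate $|X_\theta-X_{\hat\theta}|\leq n^{-1}\sum_i L(z_i)\,d_\Theta(\theta,\hat\theta)\leq L_n(z)\,d_\Theta(\theta,\hat\theta)=\sqrt n\,\tilde d(\theta,\hat\theta)$ (triangle inequality plus Cauchy--Schwarz), which on the set $\tilde d\leq 2\tilde\eta$ gives $2\sqrt n\,\tilde\eta$. Reparametrizing via $\tilde\eta=L_n(z)\eta/\sqrt n$ and substituting $\epsilon=(L_n(z)/\sqrt n)u$ in the entropy integral causes $L_n(z)$ to factor out globally; the indicator $1_{\tilde\eta<\tilde D_\Theta}$ also becomes $1_{\eta<D_\Theta}$. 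This yields exactly \eqref{eq:emp_rademacher_bound_L}.

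The $P$-Rademacher bound \eqref{eq:rademacher_bound_L} then follows by taking $E_{P^n}$ of the empirical bound: since the $\inf_\eta$ factor is deterministic, I need only bound $E_{P^n}[L_n(z)]\leq E_{P^n}[L_n(z)^2]^{1/2}=E_P[L^2]^{1/2}$ by Jensen together with $E_{P^n}[L_n(z)^2]=E_P[L^2]$. For the unit-ball estimate \eqref{eq:entropy_int_bound_unit_ball}, I would plug in $D_\Theta=2$ and the standard volumetric covering bound $N(\epsilon,\Theta,d_\Theta)\leq(1+2/\epsilon)^k$, let $\eta\to 0^+$ in the infimum (the entropy integral remains finite), and estimate
\begin{align*}
\int_0^1\sqrt{\log(1+2/\epsilon)}\,d\epsilon\leq \int_0^1\sqrt{\log(3/\epsilon)}\,d\epsilon = 3\!\int_{\log 3}^{\infty}\!\sqrt{u}\,e^{-u}\,du\leq \tfrac{3\sqrt\pi}{2}
\end{align*}
via the substitution $u=\log(3/\epsilon)$. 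A numerical check $8\sqrt 2\cdot \tfrac{3\sqrt\pi}{2}=12\sqrt{2\pi}\approx 30.09<32$ then closes out the claim.

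The hardest part is purely bookkeeping: one has to verify that the sample-dependent scaling of $\tilde d$ propagates compatibly through the local-oscillation term, the limits of the entropy integral, and the indicator $1_{\tilde\eta<\tilde D_\Theta}$, so that $L_n(z)$ really does emerge as a single multiplicative prefactor in front of an expression depending only on $(\Theta,d_\Theta)$ and $n$. Measurability is not an obstacle since the theorem explicitly assumes the requisite measurability properties. The novelty over the $\eta=0$ version previously proved in \cite{birrell2025statisticalerrorboundsgans} lies entirely in carrying the parameter $\eta$ through the Dudley bound of Theorem \ref{thm:Dudley}, which is what permits application to spaces whose covering number is not integrable near zero.
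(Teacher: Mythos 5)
Your proposal is correct and follows essentially the same route as the paper: both proofs hinge on the entropy-integral bound of Theorem \ref{thm:Dudley}, absorb the sample-dependent Lipschitz constant into a rescaled pseudometric of scale $L_n(z)/\sqrt{n}$, bound the local-oscillation term deterministically by $4 L_n(z)\eta$, and finish with the same change of variables, Cauchy--Schwarz step for \eqref{eq:rademacher_bound_L}, and volumetric covering bound for \eqref{eq:entropy_int_bound_unit_ball}. The only cosmetic difference is that you establish sub-Gaussianity of the process indexed directly by $\Theta$ via Hoeffding's lemma, whereas the paper indexes the Rademacher process by the image set $\mathcal{G}_n(z)\subset\mathbb{R}^n$ and transports covering numbers through the $L_n(z)$-Lipschitz map $\theta\mapsto(g_\theta(z_1),\dots,g_\theta(z_n))$; these yield the identical sub-Gaussian parameter and the identical final bound.
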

\begin{proof}
Let $\sigma_i$, $i=1,...,n$ be independent uniform random variables taking values in $\{-1,1\}$. It is well-known that the Rademacher process  $X_t\coloneqq \frac{t\cdot \sigma}{n}$, $t\in\mathbb{R}^n$, is a mean-zero sub-Gaussian process with respect to   the norm $\|\cdot\|_2/n$, see, e.g., Chapter 5 in \cite{wainwright2019high}.  Therefore, for $T\subset\mathbb{R}^n$,  Theorem \ref{thm:Dudley} implies
\begin{align} \label{eq:emp_rad_bound1}
&E_\sigma\left[\sup_{t\in T}  \frac{t\cdot \sigma}{n}\right]\\
\leq&\inf_{\eta>0}\left\{2E_\sigma\left[\sup_{\|t-\hat{t}\|_2/n\leq2\eta}\frac{(t-t^\prime)\cdot \sigma}{n}\right]+1_{\eta<D_T}8\sqrt{2}\int_{\eta/2}^{D_T/2}\sqrt{\log N(\epsilon,T,\|\cdot\|_2/n)} d\epsilon\right\}\notag\\
\leq&\inf_{\eta>0}\left\{4\eta\sqrt{n}+1_{\eta<D_T}8\sqrt{2}\int_{\eta/2}^{D_T/2}\sqrt{\log N(\epsilon,T,\|\cdot\|_2/n)} d\epsilon\right\}\,,\notag
\end{align}
where $D_T$ is the diameter of $T$ under $\|\cdot\|_2/n$.

Now define  the map 
\begin{align}
\theta\in\Theta\mapsto H(\theta)\coloneqq (g_\theta(z_1),...,g_\theta(z_n))\,.
\end{align}   Defining a norm  on $\mathbb{R}^n$  by $\|t\|_{L^2(n)}\coloneqq\sqrt{\frac{1}{n}\sum_{i=1}^n t_i^2}=\|t\|_2/\sqrt{n}$, we can   compute
\begin{align}
\|H(\theta_1)-H(\theta_2)\|_{L^2(n)}^2=&\frac{1}{n}\sum_{i=1}^n(g_{\theta_1}(z_i)-g_{\theta_2}(z_i))^2\\
\leq&\frac{1}{n}\sum_{i=1}^n L(z_i)^2d_\Theta(\theta_1,\theta_2)^2\,.\notag
\end{align}
Therefore $H$ is   $L_n(z)\coloneqq \left(\frac{1}{n}\sum_{i=1}^n L(z_i)^2\right)^{1/2}$-Lipschitz with respect to $(d_{\Theta},\|\cdot\|_{L^2(n)})$ and $H(\Theta)=   \mathcal{G}_n(z)\coloneqq \{  (g_\theta(z_1),...,g_\theta(z_n)):\theta\in\Theta\}$.    Therefore we can conclude the following relation between covering numbers,
\begin{align}
N(\epsilon,\mathcal{G}_n(z),\|\cdot\|_{L^2(n)})\leq N(\epsilon/L_n(z),\Theta,d_{\Theta})\,,
\end{align}
as well as the diameter bounds $D_{\mathcal{G}_n(z),L^2(n)}\leq L_n(z) D_\Theta$, where $D_{\mathcal{G}_n(z),L^2(n)}$ denotes the diameter of $\mathcal{G}_n(z)$ under the  $L^2(n)$ norm.

Letting $T=\mathcal{G}_n(z)$ in \eqref{eq:emp_rad_bound1}, using  $D_T
= D_{T, {L_2(n)}}/\sqrt{n}$ and
\begin{align}
N(\epsilon,T,\|\cdot\|_{2}/{n})=N(\epsilon,T,\|\cdot\|_{L^2(n)}/\sqrt{n})=N(\sqrt{n}\epsilon,T,\|\cdot\|_{L^2(n)})\,,
\end{align}
 changing variables in the integral and the infimum, and then using the above covering number and diameter bounds we can  compute 
\begin{align}
&\widehat{\mathcal{R}}_{\mathcal{G},n}(z)\\
\leq&\inf_{\eta>0}\left\{4\eta\sqrt{n}+1_{\sqrt{n}\eta<D_{\mathcal{G}_n(z),L^2(n)}}8\sqrt{2}\int_{\eta/2}^{D_{\mathcal{G}_n(z),L^2(n)}/(2\sqrt{n})}\sqrt{\log N(\epsilon,\mathcal{G}_n(z),\|\cdot\|_{L^2(n)}/\sqrt{n})} d\epsilon\right\}\notag\\
=&\inf_{\eta>0}\left\{4\eta+1_{\eta<D_{\mathcal{G}_n(z),L^2(n)}}8\sqrt{2}n^{-1/2}\int_{\eta/2}^{D_{\mathcal{G}_n(z),L^2(n)}/2}\sqrt{\log N(\epsilon,\mathcal{G}_n(z),\|\cdot\|_{L^2(n)})}  d\epsilon\right\}\notag\\
\leq&\inf_{\eta>0}\left\{4\eta+1_{\eta< L_n(z) D_\Theta}8\sqrt{2}n^{-1/2}\int_{\eta/2}^{ L_n(z) D_\Theta/2}\sqrt{\log  N(\epsilon/L_n(z),\Theta,d_{\Theta})}  d\epsilon\right\}\notag\\
=&L_n(z)\inf_{\eta>0}\left\{4\eta+1_{\eta<  D_\Theta}8\sqrt{2}n^{-1/2}\int_{\eta/2}^{ D_\Theta/2}\sqrt{\log  N(\epsilon,\Theta,d_{\Theta})}  d\epsilon\right\}\,.\notag
\end{align}
This proves \eqref{eq:emp_rademacher_bound_L}. Taking the expectation of this empirical Rademacher complexity bound with respect to $P^n$ and then using the Cauchy-Schwarz inequality to compute
\begin{align}
E_{P^n}[L_n]\leq\left(E_{P^n}\left[\frac{1}{n}\sum_{i=1}^n L(z_i)^2\right]\right)^{1/2}=E_P[L^2]^{1/2}
\end{align}
we arrive at \eqref{eq:rademacher_bound_L}.

Finally, if $\Theta$ is the unit ball in $\mathbb{R}^k$ with respect to the norm $\|\cdot\|_{\Theta}$ and $d_\Theta$ is the associated metric then $D_{\Theta}\leq 2$ and we have the covering number bound  $N(\epsilon,\Theta,d_{\Theta})\leq (1+2/\epsilon)^k$ (see, e.g., Example 5.8 in \cite{wainwright2019high}).  In this case the covering number term is integrable at $\epsilon=0$ and so we simplify the bound  by taking $\eta\to 0$ and compute
\begin{align}
&\inf_{\eta>0}\left\{4\eta+1_{\eta<  D_\Theta}8\sqrt{2}n^{-1/2}\int_{\eta/2}^{ D_\Theta/2}\sqrt{\log  N(\epsilon,\Theta,d_{\Theta})}  d\epsilon\right\}\\
\leq&8\sqrt{2}  (k/n)^{1/2}\int_{0}^{ 1}\sqrt{\log(1+2/\epsilon)}  d\epsilon\notag\\
\leq& 32  (k/n)^{1/2}\,.\notag
\end{align}
 This proves \eqref{eq:entropy_int_bound_unit_ball}.
\end{proof}

\section{Uniform Law of Large Numbers Mean Bound  for Unbounded Functions}\label{app:ULLN_rademacher}
In this appendix we  present a uniform law of large numbers (ULLN) mean bound that can  be paired with Theorem  \ref{thm:Rademacher_bound_unbounded_support} to derive computable distribution-dependent ULLN concentration inequalities for appropriate families of unbounded functions.    The argument 
follows the well-known technique  of using symmetrization to convert the problem to one of bounding the Rademacher complexity; see, e.g.,  Theorem 3.3 in \cite{mohri2018foundations}, especially the argument from Eq. (3.8)-(3.13).   The only distinction from the result in  \cite{mohri2018foundations}, which assumes the functions are valued in a bounded interval,  is  that we generalize to appropriate families of unbounded functions, as is required, e.g., for the application to DSM in Section \ref{sec:DSM} of the main text.  Despite this generalization, the proof is essentially the same; we present it here for completeness.

\begin{theorem}\label{thm:ULLN_unbounded}
Let $(\mathcal{Z},P)$ be a probability space and $\mathcal{G}$ be a countable family of real-valued measurable functions on $\mathcal{Z}$.  Suppose we have  $h\in L^1(P)$ such that $|g|\leq h$ for all $g\in\mathcal{G}$.  Then for $n\in\mathbb{Z}^+$ we have
\begin{align}
E_{z\sim P^n}\!\left[\sup_{g\in\mathcal{G}}\left\{\pm\left(\frac{1}{n}\sum_{i=1}^n g(z_i)-E_P[g]\right)\right\}\right]\leq 2\mathcal{R}_{\mathcal{G},P,n}\,.
\end{align}
\end{theorem}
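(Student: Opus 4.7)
The plan is to follow the classical symmetrization argument, adapted to the unbounded setting, with careful attention to integrability.

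First, I would address well-definedness. Countability of $\mathcal{G}$ ensures the relevant suprema are measurable. The envelope bound $|g|\leq h$ gives $g\in L^1(P)$ for every $g\in\mathcal{G}$, so $E_P[g]$ is defined, and the quantity inside the outer expectation is bounded in absolute value by $\frac{1}{n}\sum_i h(z_i)+E_P[h]$, which is integrable under $P^n$. This lets me freely apply Fubini and move expectations around.

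Next, I would introduce an independent ghost sample $Z'_1,\dots,Z'_n$ i.i.d.\ from $P$ on an enlarged probability space (independent of $Z_1,\dots,Z_n$). Writing $E_P[g]=E\!\left[\tfrac{1}{n}\sum_{i=1}^n g(Z'_i)\right]$, pulling this inner expectation out of the supremum via Jensen's inequality (convexity of $\sup$), and using linearity yields
\begin{align*}
E\!\left[\sup_{g\in\mathcal{G}}\Bigl(\tfrac{1}{n}\textstyle\sum_i g(Z_i)-E_P[g]\Bigr)\right] \leq E\!\left[\sup_{g\in\mathcal{G}}\tfrac{1}{n}\textstyle\sum_i (g(Z_i)-g(Z'_i))\right].
\end{align*}
The integrability check above ensures this Jensen step is valid.

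Then comes the symmetrization step. Because $(Z_i,Z'_i)$ is exchangeable, the joint distribution of $\bigl(g(Z_i)-g(Z'_i)\bigr)_{i=1}^n$ is invariant under arbitrary sign flips coordinate-wise. Introducing Rademacher variables $\sigma_1,\dots,\sigma_n$ independent of everything, one has
\begin{align*}
E\!\left[\sup_{g\in\mathcal{G}}\tfrac{1}{n}\textstyle\sum_i (g(Z_i)-g(Z'_i))\right]
= E\!\left[\sup_{g\in\mathcal{G}}\tfrac{1}{n}\textstyle\sum_i \sigma_i(g(Z_i)-g(Z'_i))\right].
\end{align*}
Finally, split by subadditivity of $\sup$, giving two terms each of which equals $\mathcal{R}_{\mathcal{G},P,n}$ (the second after the change of variables $\sigma_i\mapsto -\sigma_i$, using symmetry of the Rademacher distribution). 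This gives the $2\mathcal{R}_{\mathcal{G},P,n}$ bound for the $+$ case; the $-$ case follows by applying the same argument to the family $-\mathcal{G}\coloneqq\{-g:g\in\mathcal{G}\}$, which has the same envelope $h$ and the same Rademacher complexity (by symmetry of $\sigma_i$).

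The main obstacle, and really the only point that requires more care than the textbook bounded-function version, is verifying the integrability conditions that justify Fubini and Jensen. Once the envelope $h\in L^1(P)$ is used to bound everything in sight by integrable quantities, the symmetrization and sup-splitting steps are identical to the classical argument.
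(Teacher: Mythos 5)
Your proposal is correct and follows essentially the same route as the paper's proof: the same integrability check via the envelope $h$, the same ghost-sample/Jensen step, the same symmetrization via exchangeability of the pairs $(Z_i,Z_i')$, the same splitting of the supremum into two Rademacher terms, and the same reduction of the $-$ case to $-\mathcal{G}$. No gaps.
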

\begin{remark}
Note that the argument of the supremum is invariant under shifts $g\to g-c_g$, $c_g\in\mathbb{R}$, therefore  one can also  apply this result  to   $\mathcal{G}$'s such that there exists shifts $c_g\in\mathbb{R}$ and $h\in L^1$ satisfying  $|g-c_g|\leq h$ for all $g\in\mathcal{G}$.   Finally, by the usual limiting argument, this result can be applied to appropriate uncountable $\mathcal{G}$ that satisfy a separability property.
\end{remark}
\begin{proof}
First consider the $+$ sign. We have $\mathcal{G}\subset  L^1(P)$, $\sup_{g\in\mathcal{G}}\{\frac{1}{n}\sum_{i=1}^n g(z_i)-E_P[g]\}$ is measurable, and 
\begin{align}
\left|\sup_{g\in\mathcal{G}}\left\{\frac{1}{n}\sum_{i=1}^n g(z_i)-E_P[g]\right\}\right|\leq  \frac{1}{n}\sum_{i=1}^n h(z_i)+E_P[h]\in L^1(P^n)\,.
\end{align}
Therefore the expectation on the left-hand side of the claimed bound  exists and is finite. Next  use a symmetrization technique to compute
\begin{align}
&E_{z\sim P^n}\!\left[\sup_{g\in\mathcal{G}}\left\{\frac{1}{n}\sum_{i=1}^n g(z_i)-E_P[g]\right\}\right]\\
=&\int \sup_{g\in\mathcal{G}}\left\{\int\frac{1}{n}\sum_{i=1}^n (g(z_i)-  g(z_i^\prime))P^n(dz^\prime)\right\} P^n(dz)\notag\\
\leq& \int \sup_{g\in\mathcal{G}}\left\{\frac{1}{n}\sum_{i=1}^n (g(z_i)- g(z_i^\prime))\right\}P^{2n}(dz^\prime,dz)\notag\,.
\end{align}
For convenience, we index $(z,z^\prime)\in\mathcal{Z}^{2n}$ by $(z_{-n},...,z_{-1},z_1,...,z_n)$. For $\sigma\in\{-1,1\}^n$ define $H_\sigma:\mathcal{Z}^{2n}\to\mathcal{Z}^{2n}$, $H_\sigma[z]_i=z_{\sigma_{|i|}\cdot i}$.  The pushforward satisfies $(H_\sigma)_{\#}P^{2n}=P^{2n}$ and therefore we can compute
\begin{align}
&E_{z\sim P^n}\!\left[\sup_{g\in\mathcal{G}}\left\{\frac{1}{n}\sum_{i=1}^n g(z_i)-E_P[g]\right\}\right]\\
\leq& \int \sup_{g\in\mathcal{G}}\left\{\frac{1}{n}\sum_{i=1}^n (g(H_\sigma[z]_i)- g(H_\sigma[z]_{-i}))\right\}P^{2n}(dz_{-n},...,dz_{-1},dz_1,...,dz_{n})\notag\\
=&\int \sup_{g\in\mathcal{G}}\left\{\frac{1}{n}\sum_{i=1}^n (g(z_{\sigma_i\cdot i})- g(z_{-\sigma_i\cdot i}))\right\}P^{2n}(dz_{-n},...,dz_{-1},dz_1,...,dz_{n})\notag\\
=&\int \sup_{g\in\mathcal{G}}\left\{\frac{1}{n}\sum_{i=1}^n \sigma_i(g(z_{i})- g(z_{- i}))\right\}P^{2n}(dz_{-n},...,dz_{-1},dz_1,...,dz_{n})\notag\\
\leq& \int \sup_{g\in\mathcal{G}}\left\{\frac{1}{n}\sum_{i=1}^n \sigma_ig(z_{i})\right\}P^{n}(dz_1,...,dz_{n})+\int  \sup_{g\in\mathcal{G}}\left\{\frac{1}{n}\sum_{i=1}^n (-\sigma_i) g(z_{i}))\right\}P^{n}(dz_1,...,dz_{n})\,.\notag
\end{align}
Taking the expectation with respect to the uniform distribution over $\sigma\in\{-1,1\}^n$ we therefore find
\begin{align}
&E_{z\sim P^n}\!\left[\sup_{g\in\mathcal{G}}\left\{\frac{1}{n}\sum_{i=1}^n g(z_i)-E_P[g]\right\}\right]
\leq 2\mathcal{R}_{\mathcal{G},P,n}\,.
\end{align}
 This proves the result with the $+$ sign. To prove it with the $-$ sign, simply note that essentially the same argument goes through with  $\mathcal{G}$ replaced by $-\mathcal{G}$.
\end{proof}
\section{Mean Bound and ULLN in Finite Dimensions}\label{app:mean_ULLN_finite_dim}
For convenience of the reader, in this appendix we provide a streamlined and self-contained result that combines  Lemma \ref{lemma:variable_reuse_expectation_bound} and Theorem \ref{thm:ULLN_var_reuse_mean_bound}  and which is appropriate for finite-dimensional applications.
\begin{corollary}
Let $\|\cdot\|$ be a norm on $\mathbb{R}^k$, $\Theta\subset\mathbb{R}^k$ be nonempty,  $(\mathcal{X},P_X)$, $(\mathcal{Y},P_Y)$ be probability spaces,   and $g_\theta:\mathcal{X}\times\mathcal{Y}\to\mathbb{R}$, $\theta\in\Theta$, be measurable  such that the following hold:
\begin{enumerate}
\item  We have  $c:\Theta\to\mathbb{R}$ and   $h:\mathcal{X}\times\mathcal{Y}\to[0,\infty)$  such that $h\in L^1(P_X\times P_Y)$ and $|g_\theta-c_\theta|\leq h$ for all $\theta\in\Theta$.
\item We have a measurable $L:\mathcal{X}\times\mathcal{Y}\to (0,\infty)$ such that  $\theta\mapsto g_\theta(x,y)-c_\theta$ is $L(x,y)$-Lipschitz  with respect to $\|\cdot\|$ for all $x\in\mathcal{X}$, $y\in\mathcal{Y}$.
\item We have  $h_{\mathcal{X}}\in L^1(P_X\times P_X\times P_Y)$ such that $|g_\theta(x,y)-g_\theta(\tilde{x},y)|\leq h_{\mathcal{X}}(x,\tilde{x},y)<\infty$ for all $x,\tilde x\in \mathcal{X}$, $y\in \mathcal{Y}$, $\theta\in\Theta$.
\item We have $h_{\mathcal{Y}}\in L^1(P_Y\times P_Y)$ such that $|g_\theta(x,y)-g_\theta(x,\tilde{y})|\leq h_{\mathcal{Y}}(y,\tilde{y})<\infty$ for all $x\in \mathcal{X}$, $y,\tilde{y}\in \mathcal{Y}$, $\theta\in\Theta$.
\item There exists $K_X\in(0,\infty]$, $\xi_X:[0,K_X)\to[0,\infty)$ such that
\begin{align}
\int  \cosh\left(\lambda  E_{P_Y}[h_{\mathcal{X}}(x,\tilde{x},\cdot)]\right)  (P_X\times P_X)(dxd\tilde{x}) \leq e^{\xi_X(|\lambda|)}\,\,\,\,\text{ for all $\lambda\in(-K_X,K_X)$.}
\end{align}
\item There exists $K_Y\in(0,\infty]$, $\xi_Y:[0,K_Y)\to[0,\infty)$ such that
\begin{align}
\int \cosh\left(\lambda h_{\mathcal{Y}}(y,\tilde{y})\right)  (P_Y\times P_Y)(dyd\tilde{y})\leq e^{\xi_Y(|\lambda|)}\,\,\,\,\text{ for all $\lambda\in(-K_Y,K_Y)$.}
\end{align}
 \item  We have  $m\in\mathbb{Z}^+$ such that $K_X/m\leq K_{Y}$ .
\end{enumerate}

For  $n\in\mathbb{Z}^+$ define $P^{n,m}\coloneqq \prod_{i=1}^n \left(P_X\times \prod_{j=1}^m P_Y\right)$ and  $\phi: \prod_{i=1}^n\left(\mathcal{X}\times \mathcal{Y}^m\right)\to  \mathbb{R}$ by
\begin{align}
&\phi_\pm(z_1,...,z_n)\coloneqq\sup_{\theta\in\Theta}\left\{\pm\left(\frac{1}{nm}\sum_{i=1}^n\sum_{j=1}^mg_\theta(x_i,y_{i,j})-E_{P_X\times P_Y}[g_\theta]\right)\right\}\,,
\end{align}
where $z_i\coloneqq (x_i,y_{i,1},...,y_{i,m})$. Then   $\phi_\pm \in L^1({P^{n,m}})$,
\begin{align}
&E_{P^{n,m}}\!\left[\phi_\pm\right]\leq \widetilde{C}_{n,m}\,,
\end{align}
 and  for all $t\geq 0$ we have
\begin{align}
P^{n,m}\left(\phi_\pm\geq t+\widetilde{C}_{n,m}\right)\leq \exp\left(-n\sup_{\lambda\in[0,K_X)}\left\{\lambda t-\left( \xi_X(\lambda)+m \xi_Y(\lambda/m)\right)\right\}\right)\,,
\end{align}
where 
\begin{align} 
\widetilde{C}_{n,m}\coloneqq& \frac{16\sqrt{2}}{n^{1/2}}\left((1-1/m)\int  E_{P_Y(dy)}[ L(x,y)] ^2 P_X(dx)+\frac{1}{m} E_{P_X\times P_Y}[L(x,y)^2]\right)^{1/2}\\
&\times \int_{0}^{D_\Theta/2}\sqrt{\log N(\epsilon,\Theta,\|\cdot\|)}d\epsilon\,,\notag\\
D_\Theta\coloneqq&\sup_{\theta_1,\theta_2\in\Theta}\|\theta_1-\theta_2\|\notag\,.
\end{align}

\end{corollary}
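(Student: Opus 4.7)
The plan is to view this corollary as a specialization of Lemma \ref{lemma:variable_reuse_expectation_bound} and Theorem \ref{thm:ULLN_var_reuse_mean_bound} to the case where $(\Theta,d_\Theta)=(\Theta,\|\cdot\|)$ for a bounded subset $\Theta\subset\mathbb{R}^k$ equipped with the metric induced by a norm, and then to simplify the resulting entropy-integral term using the finite dimensionality of $\Theta$. So the argument is essentially an unpacking and simplification rather than anything new.

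First, I would verify that the hypotheses of Assumptions \ref{assump:ULLN} and \ref{assump:mean_bound} are met. Items 1--2 of the corollary are exactly the two conditions in Assumption \ref{assump:mean_bound}, once one observes that $(\Theta,\|\cdot\|)$ is a (separable) pseudometric space; separability is automatic since $\Theta$ is a subset of the separable space $(\mathbb{R}^k,\|\cdot\|)$. Items 3--7 of the corollary are precisely conditions 2--6 of Assumption \ref{assump:ULLN} (condition 1 of Assumption \ref{assump:ULLN} is met by the countable dense subset $\Theta_0$ that the conclusion of Theorem \ref{thm:ULLN_var_reuse_mean_bound} is reduced to via the dominated convergence argument appearing in its proof).

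Next I would invoke Theorem \ref{thm:ULLN_var_reuse_mean_bound} directly to obtain $\phi_\pm\in L^1(P^{n,m})$ and
\begin{align}
P^{n,m}(\phi_\pm\geq t+C_{n,m})\leq \exp\left(-n\sup_{\lambda\in[0,K_X)}\left\{\lambda t-\bigl(\xi_X(\lambda)+m\xi_Y(\lambda/m)\bigr)\right\}\right),
\end{align}
with $C_{n,m}$ as in \eqref{C_nm_def}; Lemma \ref{lemma:variable_reuse_expectation_bound} simultaneously gives $E_{P^{n,m}}[\phi_\pm]\leq C_{n,m}$. It therefore suffices to show $C_{n,m}\leq \widetilde{C}_{n,m}$.

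The last step is to simplify the infimum appearing in the definition of $C_{n,m}$. Since $\Theta\subset\mathbb{R}^k$ is bounded under $\|\cdot\|$, a standard volume comparison gives the covering-number bound $N(\epsilon,\Theta,\|\cdot\|)\leq (1+D_\Theta/\epsilon)^k$ for $\epsilon\in(0,D_\Theta/2]$, so that $\sqrt{\log N(\epsilon,\Theta,\|\cdot\|)}=O(\sqrt{k\log(1/\epsilon)})$ as $\epsilon\to 0^+$ and hence $\int_0^{D_\Theta/2}\sqrt{\log N(\epsilon,\Theta,\|\cdot\|)}\,d\epsilon<\infty$. Therefore the dominated convergence theorem lets me take $\eta\to 0^+$ in the infimum, yielding
\begin{align}
\inf_{\eta>0}\left\{4\eta+1_{\eta<D_\Theta}8\sqrt{2}\,n^{-1/2}\!\!\int_{\eta/2}^{D_\Theta/2}\sqrt{\log N(\epsilon,\Theta,\|\cdot\|)}\,d\epsilon\right\}\leq 8\sqrt{2}\,n^{-1/2}\!\!\int_{0}^{D_\Theta/2}\sqrt{\log N(\epsilon,\Theta,\|\cdot\|)}\,d\epsilon.
\end{align}
Multiplying by the prefactor of $2$ in $C_{n,m}$ gives the constant $16\sqrt{2}$ in $\widetilde{C}_{n,m}$ and completes the proof. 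There is no substantial obstacle here; the only mildly delicate point is ensuring that the entropy integral is finite at the lower endpoint, which is handled by the $(1+D_\Theta/\epsilon)^k$ bound.
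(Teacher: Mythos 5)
Your proof is correct and takes essentially the same route the paper intends: the corollary is stated in Appendix \ref{app:mean_ULLN_finite_dim} precisely as a direct combination of Lemma \ref{lemma:variable_reuse_expectation_bound} and Theorem \ref{thm:ULLN_var_reuse_mean_bound}, with the entropy-integral infimum in \eqref{C_nm_def} bounded by its $\eta\to 0$ value. One cosmetic remark: you invoke boundedness of $\Theta$ (which the corollary does not assume) only to argue the entropy integral is finite, but that is unnecessary---since $\int_{\eta/2}^{D_\Theta/2}\sqrt{\log N(\epsilon,\Theta,\|\cdot\|)}\,d\epsilon\leq\int_{0}^{D_\Theta/2}\sqrt{\log N(\epsilon,\Theta,\|\cdot\|)}\,d\epsilon$ for every $\eta>0$, the inequality $C_{n,m}\leq\widetilde{C}_{n,m}$ holds by monotonicity whether or not the integral is finite (and is vacuous when it is not).
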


\bibliographystyle{siamplain}
%\bibliography{refs}
\bibliography{Concentration_ineq_stochastic_opt_arxiv.bbl}

\end{document}